\newif\ifisarxiv
\pgfplotsset{compat=newest}
\title{Sparse Quantized Spectral Clustering}
\author{
  Zhenyu Liao\\
  ICSI and Department of Statistics\\
  University of California, Berkeley, USA\\
  \texttt{zhenyu.liao@berkeley.edu}
  \and
  Romain Couillet \\
  G-STATS Data Science Chair, GIPSA-lab\\
  University Grenobles-Alpes, France\\
  \texttt{romain.couillet@gipsa-lab.grenoble-inp.fr}
  \and
   Michael W. Mahoney\\
  ICSI and Department of Statistics\\
  University of California, Berkeley, USA\\
  \texttt{mmahoney@stat.berkeley.edu}
}
\date{\today}
\pgfplotsset{compat=newest}
\title{Sparse Quantized Spectral Clustering}
\author{}
\DeclareMathOperator{\tr}{tr}
\DeclareMathOperator{\diag}{diag}
\DeclareMathOperator{\rank}{rank}
\DeclareMathOperator{\Res}{Res}
\DeclareMathOperator{\supp}{supp}
\DeclareMathOperator{\dist}{dist}
\DeclareMathOperator{\test}{test}
\DeclareMathOperator{\sign}{sign}
\DeclareMathOperator{\erfc}{erfc}
\DeclareMathOperator{\erf}{erf}
\DeclareMathOperator{\SNR}{SNR}
\DeclareMathOperator{\unif}{unif}
\DeclareMathOperator{\selec}{selec}
\DeclareMathOperator{\sub}{sub}
\newcommand{\T}{{\sf T}}
\newcommand{\RR}{{\mathbb{R}}}
\newcommand{\CC}{{\mathbb{C}}}
\newcommand{\EE}{{\mathbb{E}}}
\newcommand{\NN}{{\mathcal{N}}}
\newcommand{\one}{{\mathbf{1}}}
\newcommand{\zo}{{\mathbf{0}}}
\newcommand{\E}{{\mathbf{E}}}
\newcommand{\K}{{\mathbf{K}}}
\newcommand{\w}{{\mathbf{w}}}
\newcommand{\x}{{\mathbf{x}}}
\newcommand{\z}{{\mathbf{z}}}
\newcommand{\X}{{\mathbf{X}}}
\newcommand{\Y}{{\mathbf{Y}}}
\newcommand{\Z}{{\mathbf{Z}}}
\newcommand{\M}{{\mathbf{M}}}
\newcommand{\A}{{\mathbf{A}}}
\newcommand{\B}{{\mathbf{B}}}
\newcommand{\F}{{\mathbf{F}}}
\newcommand{\N}{{\mathbf{N}}}
\newcommand{\W}{{\mathbf{W}}}
\newcommand{\e}{{\mathbf{e}}}
\newcommand{\y}{{\mathbf{y}}}
\newcommand{\D}{{\mathbf{D}}}
\newcommand{\Q}{{\mathbf{Q}}}
\newcommand{\U}{{\mathbf{U}}}
\newcommand{\V}{{\mathbf{V}}}
\newcommand{\bpsi}{ \boldsymbol{\psi} }
\newcommand{\bPhi}{ \boldsymbol{\Phi} }
\newcommand{\bmu}{ \boldsymbol{\mu} }
\newcommand{\bbeta}{ \boldsymbol{\beta} }
\newcommand{\bLambda}{ \boldsymbol{\Lambda} }
\newcommand{\bSigma}{ \boldsymbol{\Sigma} }
\newcommand{\Dc}{ \boldsymbol{\Delta}\mathbf{c} }
\newcommand{\C}{{\mathbf{C}}}
\newcommand{\I}{{\mathbf{I}}}
\newcommand{\J}{{\mathbf{J}}}
\renewcommand{\v}{{\mathbf{v}}}
\renewcommand{\a}{{\mathbf{a}}}
\renewcommand{\b}{{\mathbf{b}}}
\renewcommand{\d}{{\mathbf{d}}}
\renewcommand{\j}{{\mathbf{j}}}
\renewcommand{\c}{{\mathbf{c}}}
\renewcommand{\r}{{\mathbf{r}}}
\renewcommand{\S}{{\mathbf{S}}}
\renewcommand{\L}{{\mathbf{L}}}
\renewcommand{\k}{{\mathbf{k}}}
\newcommand{\cd}{{ \xrightarrow{d} }}
\newcommand{\asto}{{ \xrightarrow{a.s.} }}
\definecolor{RED}{rgb}{0.7,0,0}
\definecolor{BLUE}{rgb}{0,0,0.69}
\definecolor{GREEN}{rgb}{0,0.6,0}
\definecolor{PURPLE}{rgb}{0.69,0,0.8}
\newcommand{\RED}{\color[rgb]{0.70,0,0}}
\newcommand{\BLUE}{\color[rgb]{0,0,0.69}}
\newcommand{\GREEN}{\color[rgb]{0,0.6,0}}
\newcommand{\PURPLE}{\color[rgb]{0.69,0,0.8}}
\newtheorem{Assumption}{Assumption}
\newtheorem{Theorem}{Theorem}
\newtheorem{Corollary}{Corollary}
\newtheorem{Proposition}{Proposition}
\newtheorem{Lemma}{Lemma}
\newtheorem{Remark}{Remark}
\newcommand {\michael}[1]{{\color{red}\sf{[michael: #1]}}}
\newcommand{\fix}{\marginpar{FIX}}
\newcommand{\new}{\marginpar{NEW}}
\begin{document}

\maketitle
\begin{abstract}
Given a large data matrix, sparsifying, quantizing, and/or performing other entry-wise nonlinear operations can have numerous benefits, ranging from speeding up iterative algorithms for core numerical linear algebra problems to providing nonlinear filters to design state-of-the-art neural network models.
Here, we exploit tools from random matrix theory to make precise statements about how the eigenspectrum of a matrix changes under such nonlinear transformations.
In particular, we show that very little change occurs in the informative eigenstructure even under drastic sparsification/quantization, and consequently that very little downstream performance loss occurs with very aggressively sparsified or quantized spectral clustering.
We illustrate how these results depend on the nonlinearity, we characterize a phase transition beyond which spectral clustering becomes possible, and we show when such nonlinear transformations can introduce spurious non-informative eigenvectors.
\end{abstract}

\section{Introduction}
\label{sec:intro}

Sparsifying, quantizing, and/or performing other entry-wise nonlinear operations on large matrices can have many benefits.
Historically, this has been used to develop iterative algorithms for core numerical linear algebra problems~\citep{achlioptas2007fast,DZ11}. 
More recently, this has been used to design better neural network models~\citep{srivastava2014dropout,dong2019hawq,shen2020qbert}. A concrete example, amenable to theoretical analysis and ubiquitous in practice, is provided by spectral clustering, which can be solved by retrieving the dominant eigenvectors of $\X^\T\X$, for $\X=[\x_1,\ldots,\x_n]\in\RR^{p\times n}$ a large data matrix \citep{von2007tutorial,Mah16_SGM_TR}.
When the amount of data $n$ is large, the Gram ``kernel'' matrix $\X^\T\X$ can be enormous, impractical even to form and leading to computationally unaffordable algorithms. 
For instance, Lanczos iteration that operates through repeated matrix-vector multiplication suffers from an $O(n^2)$ complexity \citep{GoluVanl96} and quickly becomes burdensome.

One approach to overcoming this limitation is simple subsampling: dividing $\X$ into subsamples of size $\varepsilon n$, for some $\varepsilon\in (0,1)$, on which one performs parallel computation, and then recombining.
This leads to computational gain, but at the cost of degraded performance, since each data point $\x_i$ looses the cumulative effect of comparing to the \emph{whole} dataset. 
An alternative cost-reduction procedure consists in \emph{uniformly} randomly ``zeroing-out'' entries from the whole matrix $\X^\T\X$, resulting in a sparse matrix with only an $\varepsilon$ fraction of non-zero entries. 
For spectral clustering, by focusing on the eigenspectrum of the ``zeroed-out'' matrix, \citet{zarroukperformance} showed that the same computational gain can be achieved at the cost of a much less degraded performance: for $n/p$ rather large, almost no degradation is observed down to very small values of $\varepsilon$ (e.g., $\varepsilon \approx 2\%$ for $n/p \gtrsim 100$).

Previous efforts showed that it is often advantageous to perform sparsification/quantization in a \emph{data-dependent non-uniform} manner, rather than uniformly \citep{achlioptas2007fast,DZ11}. 
The focus there, however, is on (the non-asymptotic bound of) the approximation error between the original and the sparsified/quantized matrices. 
This, however, does \emph{not} provide a direct access to the actual performance for spectral clustering or other downstream tasks of interest, e.g., since the top eigenvectors are known to exhibit a phase transition phenomenon \citep{baik2005phase,saade2014spectral}.
That is, they can behave very differently from those of the original matrix, even if the matrix after treatment is close in operator/Frobenius norm to the original matrix.

Here, we focus on the precise characterization of the eigenstructure of $\X^\T \X$ after entry-wise nonlinear transformation such as sparsification or quantization, in the large $n,p$ regime, by performing simultaneously \emph{data-dependent non-uniform sparsification and/or quantization} (down to binarization). 
We consider a simple mixture data model with $\x \sim \mathcal N(\pm\bmu,\I_p)$ and let $\K\equiv f(\X^\T\X/\sqrt p)/\sqrt p$, where $f$ is an entry-wise thresholding/quantization operator (thereby zeroing-out/quantizing entries of $\X^\T\X$); and we prove that this leads to significantly improved performances, with the same computational cost, in spectral clustering as uniform sparsification, but for a much reduced cost in storage induced by quantization. 
The only (non-negligible) additional cost arises from the extra need for evaluating each entry of $\X^\T\X$. 
Our main technical contribution (of independent interest, e.g., for those interested in entry-wise nonlinear transformations of feature matrices) consists in using random matrix theory (RMT) to derive the large $n,p$ asymptotics of the eigenspectrum of $\K$ for a wide range of functions $f$, and then comparing to previously-established results for uniform subsampling and sparsification in \citep{zarroukperformance}. 
Simulations on real-world data further collaborate our~findings. 

Our main contributions are the following.
\begin{enumerate}[leftmargin=\parindent,align=left,labelwidth=\parindent,labelsep=2pt]
	\item 
	We derive the limiting eigenvalue distribution of $\K$ as $n,p\to\infty$ (Theorem~\ref{theo:DE}), and we identify: 
	\begin{enumerate}
		\item the existence of non-informative isolated eigenvectors of $\K$ for some nonlinear $f$ (Corollary~\ref{coro:non-info-spike}); 
		\item in the absence of such eigenvectors, a phase transition in the dominant eigenvalue-eigenvector $(\hat\lambda,\hat\v)$ pair (Corollary~\ref{coro:phase-transition}): if the signal-to-noise ratio (SNR) $\|\bmu\|^2$ exceeds a threshold $\gamma$, then $\hat \lambda$ becomes isolated and $\hat \v$ contains data class-structure information exploitable for clustering; if not, then $\hat \v$ contains only noise and is asymptotically \emph{orthogonal} to the class-label~vector.
	\end{enumerate}
	\item 
	Letting $f$ be a sparsification, quantization, or binarization operator, we show:
	\begin{enumerate}
		\item 
		a selective sparsification operator, such that $\X^\T\X$ can be drastically sparsified with very little degradation in clustering performance (Proposition~\ref{prop:perf-clustering} and Section~\ref{subsec:compare-sparse}), significantly outperforms the random uniform sparsification in \citep{zarroukperformance};
		\item 
		for a given matrix storage budget (i.e., fixed number of bits to store $\K$), 
		an optimal design of the quantization/binarization operators (Proposition~\ref{prop:optimal-s} and Section~\ref{subsec:optimal-design}), 
		the performances of which are compared against the original $\X^\T\X$ and its sparsified but not quantized version. 
	\end{enumerate}
\end{enumerate}
For spectral clustering, the surprisingly small performance drop, accompanied by a huge reduction in computational cost, contributes to improved algorithms for large problems.
More generally, our proposed analysis sheds light on the effect of entry-wise nonlinear transformations on the eigenspectra of data/feature matrices.
Thus, looking forward (and perhaps more importantly, given the use of nonlinear transformations in designing modern neural network models as well as the recent interest in applying RMT to neural network analyses \citep{li2018random,seddik2018kernel,jacot2019asymptotic,liu2019ridge}), we expect that our analysis opens the door to improved analysis of computationally efficient methods for large dimensional machine learning and neural network models more generally.

\section{System model and preliminaries}
\label{sec:model}

\paragraph{Basic setup.}
Let $\x_1, \ldots, \x_n \in \RR^p$ be independently drawn (non-necessarily uniformly) from:
\begin{equation}\label{eq:mixture}
	\mathcal C_1: \x_i = - \bmu + \z_i, \quad \mathcal C_2: \x_i = + \bmu + \z_i
\end{equation}
with $\z_i \in \RR^p$ having i.i.d.\@ zero-mean, unit-variance, $\kappa$-kurtosis, sub-exponential entries, $\bmu \in \RR^p$ such that $\| \bmu \|^2 \to \rho \ge 0$ as $p \to \infty$, and $\v \in \{\pm 1\}^n$ with $[\v]_i = -1$ for $\x_i \in \mathcal C_1$ and $+1$ for $\x_i \in \mathcal C_2$.%
\footnote{The norm \(\|\cdot\|\) is the Euclidean norm for vectors and the operator norm for matrices.}
The data matrix $\X = [\x_1, \ldots, \x_n] \in \RR^{p \times n}$ can be compactly written as $\X = \Z + \bmu \v^\T$ for $\Z = [\z_1, \ldots, \z_n]$ so that $\| \v \| = \sqrt n$ and both $\Z,\bmu \v^\T$ have operator norm of order $O(\sqrt p)$ in the $n \sim p$ regime. In this setting, the Gram (or \emph{linear} kernel) matrix $\X^\T \X$ achieves optimal clustering performance on the mixture model \eqref{eq:mixture}; see Remark~\ref{rem:optimal-linear} below.
However, it consists of a dense $n \times n$ matrix, which becomes quickly expensive to store or to perform computation on, as $n$ increases.

Thus, we consider instead the following \emph{entry-wise nonlinear} transformation of $\X^\T \X$:
\begin{equation}\label{eq:def-K}
    \K = \left\{\delta_{i \neq j} f (\x_i^\T \x_j/\sqrt p)/\sqrt p\right\}_{i,j=1}^n
\end{equation}
for \(f: \RR \to \RR\) satisfying some regularity conditions (see Assumption~\ref{ass:poly} below),  where $\delta_{i \neq j}$ equals $1$ for $i \neq j$ and equals $0$ otherwise. 
The diagonal elements \(f(\x_i^\T\x_i/\sqrt{p})\) (\textbf{i}) bring no additional information for clustering and (\textbf{ii}) do not scale properly for $p$ large ($\x_i^\T\x_i/\sqrt{p}=O(\sqrt p)$).
Thus, following \citep{el2010information,cheng2013spectrum}, they are discarded.

Most of our technical results hold for rather generic functions $f$, e.g., those of interest beyond sparse quantized spectral clustering, but we are particularly interested in $f$ with nontrivial numerical properties (e.g., promoting quantization and sparsity):
\begin{eqnarray}
  \textbf{Sparsification:} & & \text{ $f_1(t) = t \cdot 1_{|t| > \sqrt 2 s}$ } \label{eq:def-sparse} \\
  \textbf{Quantization:} & & \text{ $f_2(t) = 2^{2-M}(\lfloor t \cdot 2^{M-2} / \sqrt 2 s \rfloor + 1/2) \cdot 1_{|t| \le \sqrt 2 s} + \sign(t) \cdot 1_{|t| > \sqrt 2 s}$ } \label{eq:def-quant} \\ 
  \textbf{Binarization:} & & \text{ $f_3(t) = \sign(t) \cdot 1_{|t| > \sqrt 2 s}$ }  \label{eq:def-binary}.
\end{eqnarray}
Here, $s \ge 0$ is some \emph{truncation threshold}, and $M \ge 2$ is a number of information bits.%
\footnote{Also, here, $\lfloor \cdot \rfloor$ denotes the floor function, while $\erf(x) = \frac2{\sqrt \pi} \int_0^x e^{-t^2} dt$ and $\erfc(x) = 1 - \erf(x)$ denotes the error function and complementary error function, respectively.}
The visual representations of these $f$s are given in Figure~\ref{fig:coeff}-(left). 
For $f_3$, taking $s \to 0$ leads to the sign function $\sign(t)$. 
In terms of storage, the quantization $f_2$ consumes $2^{M-2}+1$ bits per non-zero entry, while the binarization $f_3$ takes values in $\{ \pm 1,0 \}$ and thus consumes $1$ bit per non-zero entry.

\paragraph{Random matrix theory.}

To provide a precise description of the eigenspectrum of $\K$ for the nonlinear $f$ of interest, to be used in the context of spectral clustering, we will provide a large dimensional characterization for the \emph{resolvent} of $\K$, defined for $z\in\CC\setminus \RR^+$ as
\begin{equation}\label{eq:def-Q}
  \Q(z) \equiv (\K - z \I_n)^{-1}  .
\end{equation}
This quantity, which plays a central role in RMT \citep{bai2010spectral}, will be used in two primary ways. First, the normalized trace $\frac1n \tr \Q(z)$ is the so-called \emph{Stieltjes transform} of the eigenvalue distribution of $\K$, from which the eigenvalue distribution can be recovered, and the large $n,p$ eigenvalue distribution of $\K$ will be used to characterize the phase transition beyond which spectral clustering becomes theoretically possible (Corollary~\ref{coro:phase-transition}). Second, for $(\hat \lambda, \hat \v)$, an eigenvalue-eigenvector pair of $\K$, and $\a \in \RR^n$, a deterministic vector, by Cauchy's integral formula, the ``angle'' between $\hat \v$ and $\a$ is given by $ |\hat \v^\T \a|^2 = -\frac1{2\pi \imath} \oint_{\Gamma(\hat \lambda)} \a^\T \Q(z) \a\,dz$, where $\Gamma(\hat \lambda)$ is a positively oriented contour surrounding $\hat \lambda$ only.  Letting $\a=\v$, this will be exploited to characterize the spectral clustering error rate (Proposition~\ref{prop:perf-clustering}).

From a technical perspective, unlike linear random matrix models, $\K$ (and thus $\Q(z)$) involves nonlinear dependence between its entries. 
To break this difficulty, following the ideas of \citep{cheng2013spectrum}, we exploit the fact that, by the central limit theorem, \( \z_i^\T \z_j/\sqrt p \to\NN(0,1)\) in distribution as \(p \to \infty\). 
As such, up to $\bmu\v^\T$, which is treated separately with a perturbation argument, the \([\K]_{ij}\)s asymptotically behave like a family of \emph{dependent} standard Gaussian variables to which \(f\) is applied.
Expanding $f$ in a series of \emph{orthogonal polynomials} with respect to the Gaussian measure allows for ``unwrapping'' this dependence. A few words on the theory of orthogonal polynomials \citep{abramowitz1965handbook,andrews2000special} are thus convenient to pursue our analysis.

\paragraph{Orthogonal polynomial framework.}
For a probability measure \(\mu\), let \(\{P_l(x), l\geq 0\}\) be the orthonormal polynomials with respect to \(\left<f,g\right>\equiv \int fg d\mu\) obtained by the Gram-Schmidt procedure on the monomials \(\{1,x,x^2,\ldots\}\), such that \(P_0(x)=1\), \(P_{l}\) is of degree \(l\) and \(\left< P_{l_1}, P_{l_2}\right> = \delta_{l_1-l_2}\). By the Riesz-Fischer theorem \citep[Theorem~11.43]{rudin1964principles}, for any function \(f \in L^2(\mu)\), the set of square-integrable functions with respect to \(\left<\cdot,\cdot\right>\), one can formally expand \(f\) as
\begin{equation}\label{eq:def-expansion}
    f(x) \sim \sum_{l=0}^\infty a_l P_l(x), \quad a_l = \int f(x) P_l(x) \mu(dx)
\end{equation}
where ``\(f\sim \sum_{l=0}^\infty a_l P_l\)'' indicates that \(\|f-\sum_{l=0}^L a_l P_l\|\to 0\) as \(L\to\infty\) with \(\|f\|^2=\left<f,f\right>\).

\medskip

To investigate the asymptotic behavior of \(\K\), as \(n,p \to \infty\), we make the following assumption on~$f$.
\begin{Assumption}\label{ass:poly}
Let \(\xi_p = \z_i^\T \z_j/\sqrt p\) and \(\{P_{l,p}(x),\, l \ge 0 \}\) be the orthonormal polynomials with respect to the measure \(\mu_p\) of \(\xi_p\). 
For \(f\in L^2(\mu_p)\), $f(x) \sim \sum_{l=0}^\infty a_{l,p} P_{l,p}(x)$ with \(a_{l,p}\) in \eqref{eq:def-expansion} such that
\begin{enumerate}
  \item \(\sum_{l=0}^\infty a_{l,p} P_{l,p}(x) \mu_p(dx)\) converges in \(L^2(\mu_p)\) to \(f(x)\) uniformly over large \(p\); and
  \item as \(p \to \infty\), \(\sum_{l=1}^\infty a_{l,p}^2 \to \nu\geq 0\) and, for \(l=0,1,2\), \(a_{l,p}\to a_l\) converges with $a_0=0$.
\end{enumerate}
\end{Assumption}

Since \(\xi_p \to \NN(0,1)\), the parameters \(a_0, a_1, a_2\) and \(\nu\) are simply moments of the standard Gaussian measure involving $f$. 
More precisely, for \(\xi \sim \NN(0,1)\),
\begin{equation}\label{eq:def-params}
	a_0 = \EE[f(\xi)], \quad a_1 = \EE[\xi f(\xi)], \quad \sqrt 2 a_2 = \EE[ \xi^2 f(\xi)] - a_0, \quad \nu = \EE[f^2(\xi) - a_0^2] \ge a_1^2 + a_2^2.
\end{equation}
Imposing the condition $a_0 = 0$ simply discards the \emph{non-informative} rank-one matrix $a_0 \one_n \one_n^\T/\sqrt p$ from $\K$. 
The three parameters $(a_1, a_2, \nu)$ are of crucial significance in determining the spectral behavior of $\K$ (see Theorem~\ref{theo:DE} below).
The sparse $f_1$, quantized $f_2$, and binary $f_3$ of our primary interest all satisfy Assumption~\ref{ass:poly}, with the corresponding $a_2 = 0$ (as we shall see in Corollary~\ref{coro:non-info-spike} below, this is important for the spectral clustering use case) and $a_1, \nu$ given in Figure~\ref{fig:coeff}-(right). With those introductory elements at hand, we are in position to present our main technical results.

\begin{figure}[t]
\centering
\begin{minipage}[l]{0.45\textwidth}
\centering
\hspace*{-1.8cm}
\begin{tikzpicture}[font=\footnotesize]
\renewcommand{\axisdefaulttryminticks}{4} 
\pgfplotsset{every major grid/.append style={densely dashed}}          
\tikzstyle{every axis y label}+=[yshift=-10pt] 
\tikzstyle{every axis x label}+=[yshift=5pt]
\pgfplotsset{every axis legend/.style={cells={anchor=west},fill=none,at={(-.02,1)}, anchor=north west, font=\footnotesize}}
\begin{axis}[
  height=.6\linewidth,
  width=.95\linewidth,
  xmin=-3,xmax=3,
  ymin=-2.5,ymax=2.5,
  xtick={-1.414, 0, 1.414},
  xticklabels = { $-\sqrt 2 s$, $0$, $\sqrt 2 s$ },
  ytick={-1, 0, 1},
  grid=major,
    scaled ticks=true,
    ]
    \def\s{1}
    \def\BB{3}
    \addplot[samples=100,domain=-3:3,BLUE,line width=1.5pt] { (abs(x)<sqrt(2)*\s)*(0) + (abs(x)>=sqrt(2)*\s)*x };
    \addplot[densely dashed,samples=1000,domain=-3:3,GREEN,line width=1.5pt] { (abs(x)<sqrt(2)*\s)*(floor(x*2^(\BB-2)/sqrt(2)/\s) + 1/2)/(2^(\BB-2)) + (abs(x)>=sqrt(2)*\s)*sign(x) };
    \addplot[densely dotted,samples=100,domain=-3:3,RED,line width=1.5pt] { (abs(x)<sqrt(2)*\s)*(0) + (abs(x)>=sqrt(2)*\s)*sign(x) };
    \legend{ { $f_1$ }, { $f_2$ }, { $f_3$ } };
    \end{axis}
\end{tikzpicture}
\end{minipage}
\hfill{}%
\begin{minipage}[l]{0.53\textwidth}
\centering
\hspace*{-1.5cm}
\begin{tabular}{c||c|c}
$f$ & \multicolumn{1}{c}{$a_1$} & \multicolumn{1}{|c}{$\nu$}
\\ \hline\hline
$f_1$ &  $\erfc(s) +  2 s e^{-s^2}/ \sqrt \pi$ & $\erfc(s) + 2s e^{-s^2}/\sqrt{\pi}$ \\ \hline
$f_2$  & $\sqrt{ \frac{2}{\pi} } \cdot 2^{1-M} ( 1 + e^{-s^2} $ & $1 - \frac{ 2^M - 1 }{4^{M - 1}} \erf(s) $ \\
& $+ \sum_{k=1}^{2^{M-2} -1} 2 e^{-\frac{k^2 s^2}{ 4^{M-2} }} )$ & $- \sum_{k=1}^{2^{M-2} - 1} \frac{k \erf(ks \cdot 2^{2- M})}{2^{2M-5}}$ \\ \hline
$f_3$ &  $e^{-s^2}\sqrt{2/\pi}$  & $\erfc(s)$
\end{tabular}
\end{minipage}
\caption{ Visual representations of different functions $f$ defined in \eqref{eq:def-sparse}-\eqref{eq:def-binary} \textbf{(left)} and their associated parameters $a_1$ and $\nu$ \textbf{(right)} ($a_2=0$ for each of these $f$s) defined in Assumption~\ref{ass:poly}.
}
\label{fig:coeff}
\end{figure}
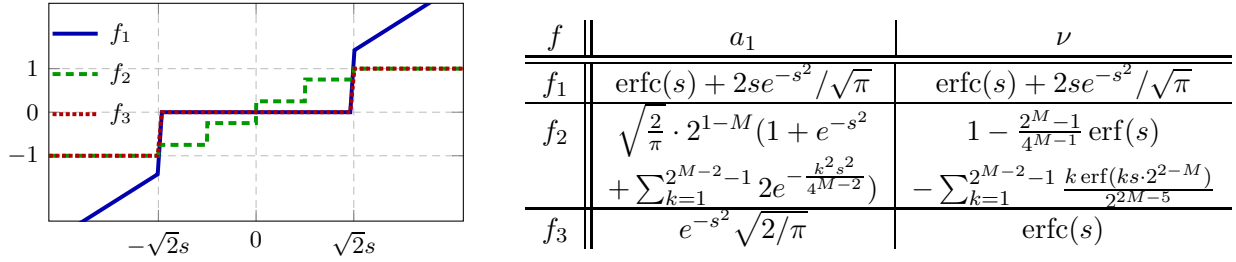
%
%
%
%
%
\section{Main Technical Results}
\label{sec:main}

Our main technical result, from which our performance-complexity trade-off analysis will follow, provides an asymptotic \emph{deterministic equivalent} $\bar \Q(z)$ for the \emph{random resolvent} $\Q$, defined in \eqref{eq:def-Q}.
(A deterministic equivalent is a deterministic matrix $\bar \Q(z)$ such that, for any deterministic sequence of matrices $\A_n \in \RR^{n \times n}$ and vectors $\mathbf{a}_n, \mathbf{b}_n \in \RR^n$ of bounded (spectral and Euclidean) norms, $\frac1n \tr \A_n (\Q(z) - \bar \Q(z)) \to 0$ and $\mathbf{a}_n^\T (\Q(z) - \bar \Q(z)) \mathbf{b}_n \to 0$ almost surely as $n,p \to \infty$.  We denote this relation $\Q(z) \leftrightarrow \bar \Q(z)$.) 
This is given in the following theorem.
The proof is in Appendix~\ref{sxn:proof_main_technical_result}.

\begin{Theorem}[Deterministic equivalent]\label{theo:DE}
Let $n,p \to \infty$ with \(p/n \to c \in (0,\infty)\), let $\Q(z)$ be defined in \eqref{eq:def-Q}, and let $\Im[\cdot]$ denote the imaginary part of a complex number. Then, under Assumption~\ref{ass:poly},
\[
  \Q(z) \leftrightarrow \bar \Q(z) =  m(z) \I_n - \V \bLambda(z) \V^\T, \quad \bLambda(z) = \begin{bmatrix} \Theta(z) m^2(z) & \Theta(z) \Omega(z) \frac{\v^\T \one_n}n m(z) \\ \Theta(z) \Omega(z) \frac{\v^\T \one_n}n m(z) & \Theta(z) \Omega^2(z) \frac{ (\v^\T \one_n)^2 }{n^2} - \Omega(z) \end{bmatrix}, 
\]
with $\sqrt n\V = [\v,~\one_n]$, $\Omega(z) = \frac{a_2^2 (\kappa - 1) m^3(z)}{2 c^2 - a_2^2 (\kappa - 1) m^2(z)}$, $\Theta(z) = \frac{ a_1 \| \bmu\|^2 }{ c + a_1 m(z) (1 + \| \bmu \|^2) + a_1 \| \bmu \|^2 \Omega(z) (\v^\T \one_n)^2/n^2 }$,
for $\kappa$ the kurtosis of the entries of $\Z$, and $m(z)$ the unique solution, such that $\Im[m(z)] \cdot \Im[z] \ge 0$, to
\begin{equation}
\label{eq:master-eq}
  -\frac1{m(z)} = z + \frac{a_1^2 m(z)}{ c + a_1 m(z) } + \frac{\nu - a_1^2}{c} m(z).
\end{equation}
\end{Theorem}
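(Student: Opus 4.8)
The plan is to follow the orthogonal-polynomial linearization strategy of \citet{cheng2013spectrum}, adapted to handle the informative rank-one perturbation $\bmu\v^\T$ and the kurtosis-dependent correction. First I would write $\X = \Z + \bmu\v^\T$ and expand $f(\x_i^\T\x_j/\sqrt p)$ using the orthonormal polynomials $\{P_{l,p}\}$ of Assumption~\ref{ass:poly}. The key probabilistic input is that $\xi_p = \z_i^\T\z_j/\sqrt p \to \NN(0,1)$, so that the matrix $\K^{(0)} = \{\delta_{i\ne j} f(\z_i^\T\z_j/\sqrt p)/\sqrt p\}$ decomposes, modulo vanishing terms, as $a_{1,p}\,\Z^\T\Z/(p) $ (the degree-one part, a sample-covariance-type matrix), a degree-two part controlled by $a_{2,p}$ whose fluctuations feel the kurtosis $\kappa$ of the entries of $\Z$ (this is the source of the $(\kappa-1)$ and $\Omega(z)$ terms), and a ``high-degree'' residual part with summed weight $\nu - a_1^2 - a_2^2$ that, being an orthogonal sum of weakly-dependent pieces, behaves like a rescaled Wigner/Wishart noise matrix. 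Each of these three contributions is handled by standard RMT: I would isolate the degree-$\le 2$ part, write its resolvent via the Sherman--Morrison / block-matrix identities, and treat the high-degree residual as an independent semicircle-like perturbation. Combining the three yields the self-consistent equation \eqref{eq:master-eq} for the Stieltjes transform $m(z)$ of the bulk, where the term $a_1^2 m/(c + a_1 m)$ comes from the degree-one Wishart part and $(\nu-a_1^2)m/c$ from the combined degree-two and residual noise.

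Second, to obtain the full deterministic equivalent $\bar\Q(z)$ rather than just $m(z)$, I would reinstate the rank-two structure. Writing $\X^\T\X/\sqrt p = \Z^\T\Z/\sqrt p + \Z^\T\bmu\v^\T/\sqrt p + \v\bmu^\T\Z/\sqrt p + \|\bmu\|^2\v\v^\T/\sqrt p$, the informative directions live in the span of $\v$ and $\one_n$ (the $\one_n$ enters because $a_0=0$ is imposed but the off-diagonal mask $\delta_{i\ne j}$ and the interaction with the all-ones structure of the mean still generate an $\one_n$ component; this is exactly why $\sqrt n\,\V = [\v,\one_n]$ appears). Applying a Woodbury identity for this rank-two perturbation to the ``noise-only'' resolvent $m(z)\I_n$ produces the correction $-\V\bLambda(z)\V^\T$, and matching the $2\times 2$ matrix $\bLambda(z)$ against the fixed-point relations gives the stated $\Theta(z)$ and $\Omega(z)$; the factors $\v^\T\one_n/n$ are the overlaps between the two informative directions and appear precisely where the off-diagonal blocks of $\bLambda(z)$ sit. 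Concentration of the relevant bilinear and trace forms (i.e.\ verifying $\a_n^\T(\Q - \bar\Q)\b_n \to 0$ and $\frac1n\tr\A_n(\Q-\bar\Q)\to 0$) would follow from the sub-exponential tails of the $\z_i$ via an Efron--Stein / martingale-difference argument of McDiarmid type, exactly as in the established literature.

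The main obstacle I anticipate is rigorously controlling the high-degree residual $\sum_{l\ge 3} a_{l,p} P_{l,p}(\z_i^\T\z_j/\sqrt p)$: one must show that, uniformly in $p$ (which is why Assumption~\ref{ass:poly}(1) requires uniform $L^2$ convergence), this part contributes to the resolvent only through its total variance $\nu - a_1^2 - a_2^2$ and a semicircular law, with no spurious isolated eigenvalues or cross-terms with the degree-one and degree-two parts. This requires a careful truncation at a slowly growing degree $L = L(p)$, a trace/Frobenius-norm bound on the truncation tail, and then a combinatorial moment (or Gaussian-equivalence) argument showing the truncated high-degree matrix is asymptotically free of the low-degree part. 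A secondary technical point is justifying the Gaussian approximation $\xi_p \approx \NN(0,1)$ at the level of the entire $n\times n$ matrix — i.e.\ that replacing $\mu_p$ by the standard Gaussian measure (hence $a_{l,p}\to a_l$, $P_{l,p}\to$ Hermite) does not change the limiting spectrum — which uses the sub-exponential moment control together with the fact that only $a_0,a_1,a_2$ and the aggregate $\nu$ survive in the limit. Once these are in place, the block-resolvent computation and the concentration step are routine.
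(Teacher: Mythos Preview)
Your overall strategy---orthogonal-polynomial linearization of $\K$, separation into a bulk part with Stieltjes transform $m(z)$ solving \eqref{eq:master-eq} plus a low-rank informative perturbation, then Woodbury and concentration---matches the paper. The gap is that the low-rank correction is \emph{rank four}, not rank two. The paper writes $\K = \tilde\K_0 + \U\bLambda\U^\T + o_{\|\cdot\|}(1)$ with $\sqrt p\,\U = [\one_n,\,\v,\,\bpsi,\,\Z^\T\bmu]$, where $[\bpsi]_i = (\|\z_i\|^2 - p)/\sqrt p$. The two directions you omit are precisely the ones carrying the effects you try to describe heuristically. First, the degree-two Hermite contribution to the noise matrix $\K_0$ is not a bulk term: by a result imported from \citet{fan2019spectral} it is the rank-two piece $\frac{a_2}{\sqrt 2}\frac1p(\bpsi\one_n^\T + \one_n\bpsi^\T)$, and this---not the diagonal mask or the mean structure---is the actual source of the $\one_n$ direction; the identity $\EE[\bpsi\bpsi^\T] = (\kappa-1)\I_n$ is then what injects $(\kappa-1)$ into $\Omega(z)$ after the Woodbury inversion. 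Second, the cross term $\Z^\T\bmu\v^\T$ from the signal is not in $\mathrm{span}\{\v,\one_n\}$; the random direction $\Z^\T\bmu$ must be kept explicitly, and its Woodbury block involves $\bmu^\T\bigl(\frac1p\EE[\Z\tilde\Q_0\Z^\T]\bigr)\bmu$, which the paper evaluates in a separate lemma (Stein's lemma plus Gaussian interpolation) as $\frac{m(z)}{c + a_1 m(z)}\|\bmu\|^2$---this is where the denominator of $\Theta(z)$ comes from. Only after the $4\times 4$ Woodbury inversion and taking expectations do the $\bpsi$ and $\Z^\T\bmu$ blocks collapse to multiples of $\I_n$, leaving the final rank-two $\V\bLambda(z)\V^\T$ with $\sqrt n\,\V=[\v,\one_n]$.

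A minor note: the high-degree residual you flag as the main obstacle is not re-derived in the paper; the operator-norm approximations are imported wholesale from \citet{fan2019spectral} (for $\K_0$) and \citet{liao2019inner} (for $\K-\K_0$), so the truncation and moment combinatorics you anticipate are outsourced rather than reproved.
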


As a consequence of Theorem~\ref{theo:DE}, the \emph{empirical spectral measure} \( \omega_n=\frac1n\sum_{i=1}^n\delta_{\lambda_i(\K)}\) of \(\K\) has a deterministic limit $\omega$ as $n,p \to \infty$, uniquely defined through its Stieltjes transform \(m(z) \equiv \int (t-z)^{-1}\omega(dt)\) as the solution to \eqref{eq:master-eq}.%
\footnote{
For \(m(z)\) the Stieltjes transform of a measure \(\omega\), \(\omega\) can be obtained via \(\omega([a,b])=\lim_{\epsilon\downarrow 0}\frac1\pi\int_a^b\Im[m(x+\imath \epsilon)]dx\) for all \(a<b\) continuity points of \(\omega\).
} 
This limiting measure $\omega$ does \emph{not} depend on the law of the independent entries of $\Z$, so long that they are sub-exponential, with zero mean and unit variance.
In particular, taking $a_1=0$ in \eqref{eq:master-eq} gives the (rescaled) Wigner semi-circle law $\omega$ \citep{wigner1955characteristic}, and taking $\nu = a_1^2$ (i.e., $a_l =0$ for $l \ge 2$) gives the Mar\u{c}enko-Pastur law $\omega$~\citep{marvcenko1967distribution}. 
See Remark~\ref{rem:MP-SC} in Appendix~\ref{sec:proof-coro-non-info-spike} for more discussions on this point.

\paragraph{Spurious non-informative spikes.}

Going beyond just the limiting spectral measure of $\K$, Theorem~\ref{theo:DE} also shows that \emph{isolated eigenvalues} (often referred to as \emph{spikes}) may be found in the eigenspectrum of $\K$ at very specific locations.
Such spikes and their associated eigenvectors are typically thought to provide information on the data class-structure (and they do when $f$ is linear).
However, when $f$ is nonlinear, this is not always the case: it is possible that \emph{not all} these spikes are ``useful'' in the sense of being informative about the class structure.
This is made precise in the following, where we assume $\bmu= \mathbf{0}$, i.e., that there is no class structure.
The proof is in Appendix~\ref{sec:proof-coro-non-info-spike}.

\begin{Corollary}[Non-informative spikes]
\label{coro:non-info-spike}
Assume $\bmu= \mathbf{0}$, $\kappa \neq 1$ and $a_2\neq 0$, so that in the notations of Theorem~\ref{theo:DE}, $\Theta(z) = 0$ and $\bar \Q(z) = m(z) + \Omega(z) \frac1n \one_n \one_n^\T$ for $\Omega(z)$ defined in Theorem~\ref{theo:DE}. Then, if $x_{\pm} \equiv \pm \frac1{a_2} \sqrt{ \frac2{\kappa - 1} }$ satisfies $a_1 x_\pm \neq \pm 1$ and $(\nu - a_1^2) x_{\pm}^2 + \frac{a_1^2 x_{\pm}^2}{(1+ a_1 x_\pm)^2} < 1/c$, two eigenvalues of $\K$ converge to $z_{\pm} = - \frac1{cx_{\pm}} - \frac{a_1^2 x_{\pm}}{1+ a_1 x_{\pm}} - (\nu - a_1^2) x_{\pm}$ away from the support of $\omega$. If instead $x_\pm = \pm 1/a_1$ for $a_1 \neq 0$, then a single eigenvalue of $\K$ isolates with limit $z = -\frac{\nu}{a_1} - \frac{a_1(2-c)}{2c}$.
\end{Corollary}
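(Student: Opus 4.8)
The plan is to read the isolated eigenvalues of $\K$ off the deterministic equivalent $\bar\Q(z)$ of Theorem~\ref{theo:DE}. Setting $\bmu=\mathbf 0$ gives $\|\bmu\|^2=0$, hence $\Theta(z)=0$, so the $2\times 2$ matrix $\bLambda(z)$ collapses to $\diag(0,-\Omega(z))$; since $\sqrt n\,\V=[\v,\one_n]$ this yields $\bar\Q(z)=m(z)\I_n+\Omega(z)\,\one_n\one_n^\T/n$, as stated. The only $z$-dependent rank-one structure left is carried by $\Omega(z)$, so any eigenvalue of $\K$ escaping the limiting bulk $\supp(\omega)$ must sit at a pole of $\Omega$, and, the perturbation being rank one, each such pole accounts for (generically) exactly one eigenvalue.

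To make this precise I would use the contour identity recalled in Section~\ref{sec:model}. Fix a candidate location $z_0\notin\supp(\omega)$ and a small positively oriented contour $\Gamma(z_0)$ enclosing $z_0$ and no other spectral point; then $-\frac1{2\pi\imath}\oint_{\Gamma(z_0)}\frac1n\one_n^\T\Q(z)\one_n\,dz$ equals, up to $o(1)$ by the deterministic equivalent applied uniformly on $\Gamma(z_0)$ (which stays at positive distance from $\supp(\omega)$), the quantity $-\frac1{2\pi\imath}\oint_{\Gamma(z_0)}\bigl(m(z)+\Omega(z)\bigr)\,dz=-\Res_{z=z_0}\Omega(z)$, because $m$ is analytic off $\supp(\omega)$. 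Hence $\K$ has an eigenvalue converging to $z_0$, with eigenvector non-orthogonal to $\one_n$, precisely when $z_0$ is a pole of $\Omega$; combined with a standard ``no eigenvalue of $\K$ strictly outside $\supp(\omega)$ away from the prescribed locations'' statement (in the spirit of Bai--Silverstein and Benaych-Georges--Nadakuditi), this pins down the isolated eigenvalues exactly.

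It remains to locate the poles of $\Omega(z)=a_2^2(\kappa-1)m^3(z)/\bigl(2c^2-a_2^2(\kappa-1)m^2(z)\bigr)$. Since $\kappa\ge 1$ always and $\kappa\neq1$ by hypothesis, $\kappa>1$, and the denominator vanishes exactly when $m(z)^2=2c^2/(a_2^2(\kappa-1))$, i.e.\ $m(z)=c\,x_\pm$ with $x_\pm=\pm a_2^{-1}\sqrt{2/(\kappa-1)}$ (the numerator $m^3$ is nonzero there since $m\neq0$ off the support). To turn ``$m(z)=cx_\pm$'' into a genuine location $z_\pm\notin\supp(\omega)$ I would invert the self-consistent equation \eqref{eq:master-eq}: for real $m$, $z=z(m)\equiv-\frac1m-\frac{a_1^2 m}{c+a_1 m}-\frac{\nu-a_1^2}{c}m$, and $m_0=c x_\pm$ is attained by the Stieltjes transform at a point outside the support once $z'(m_0)>0$. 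Substituting $m_0=cx_\pm$ gives $z_\pm=-\frac1{cx_\pm}-\frac{a_1^2 x_\pm}{1+a_1 x_\pm}-(\nu-a_1^2)x_\pm$ (valid as long as $1+a_1x_\pm\neq0$, i.e.\ $a_1x_\pm\neq-1$), while $z'(cx_\pm)=\frac1{c^2x_\pm^2}-\frac{a_1^2}{c(1+a_1x_\pm)^2}-\frac{\nu-a_1^2}{c}$, and multiplying through by $cx_\pm^2>0$ shows $z'(cx_\pm)>0\iff(\nu-a_1^2)x_\pm^2+\frac{a_1^2x_\pm^2}{(1+a_1x_\pm)^2}<1/c$, exactly the admissibility condition in the statement. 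In the borderline case $x_\pm=\pm1/a_1$ (with $a_1\neq0$), one of the two candidate $m$-values equals $-c/a_1$, which is the pole of $z(\cdot)$ coming from the term $a_1^2 m/(c+a_1 m)$, so $z(m)\to\infty$ and no finite spike arises from that branch; the other value $c/a_1$ is regular, and plugging $x=1/a_1$ into $z(\cdot)$ and simplifying ($1-\frac1c-\frac12=-\frac{2-c}{2c}$) gives the single limit $z=-\frac{\nu}{a_1}-\frac{a_1(2-c)}{2c}$.

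The main obstacle is the step promoting the deterministic equivalent of Theorem~\ref{theo:DE} --- which a priori only controls $\frac1n\tr\A_n(\Q(z)-\bar\Q(z))$ and $\a_n^\T(\Q(z)-\bar\Q(z))\b_n$ pointwise for $z\in\CC\setminus\RR^+$ --- into a clean exact-separation statement about isolated eigenvalues near the poles of $\Omega$: this needs uniform control of $\Q(z)-\bar\Q(z)$ on compact subsets at positive distance from $\supp(\omega)$, together with the separate argument that $\K$ has no eigenvalues strictly outside $\supp(\omega)$ other than the ones exhibited. Once that machinery is granted, everything else is the elementary algebra of inverting \eqref{eq:master-eq}.
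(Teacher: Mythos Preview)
Your proposal is correct and follows essentially the same route as the paper: locate the poles of $\Omega(z)$ by solving $m=cx_\pm$, invert the self-consistent equation \eqref{eq:master-eq} via $z(m)=-\tfrac1m-\tfrac{a_1^2 m}{c+a_1 m}-\tfrac{\nu-a_1^2}{c}m$, and use the Silverstein--Choi criterion $z'(m)>0$ to decide whether the resulting $z_\pm$ lies outside $\supp(\omega)$ (the paper packages this last step as a separate Corollary on the limiting spectrum, but the content is identical). Your contour/residue justification and your explicit flagging of the exact-separation machinery needed are more careful than the paper's own argument, which simply asserts that the relevant spikes sit at the poles of $\bar\Q(z)$ not shared by $m(z)$.
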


Corollary~\ref{coro:non-info-spike} (combined with Theorem~\ref{theo:DE}) says that, while the limiting spectrum $\omega$ is \emph{universal} with respect to the distribution of the entries of $\Z$, the existence and position of the spurious non-informative spikes are \emph{not universal} and depend on the kurtosis $\kappa$ of the distribution. 
(See Figure~\ref{fig:non-info-spike} in Appendix~\ref{sec:proof-coro-non-info-spike} for an example.)
This is far from a mathematical curiosity.
Given how nonlinear transformations are used in machine learning practice, being aware of the existence of \emph{spurious non-informative spikes} in the eigenspectrum of $\K$ (well separated from the bulk of eigenvalues, but that correspond to random noise instead of signal), as a function of properties of the nonlinear $f$, is of fundamental importance for downstream tasks.
For example, for spectral clustering, their associated eigenvectors may be mistaken as informative ones by spectral clustering algorithms, even in the complete absence of classes. 
This is confirmed by Figure~\ref{fig:spectrum} where two isolated eigenvalues (on the right side of the bulk) are observed, with only the second largest one corresponds to an eigenvector that contains class-label information. 
For further discussions on this point, see Appendix~\ref{sec:proof-coro-non-info-spike}~and~\ref{sec:proof-coro-phase}.

\paragraph{Informative spikes.}

From Theorem~\ref{theo:DE}, we see that the eigenspectrum of $\K$ depends on $f$ only via $a_1$, $a_2$, and $\nu$.
In particular, $a_1$ and $\nu$ determine the limiting spectral measure $\omega$. 
From Corollary~\ref{coro:non-info-spike}, we see that $a_2$ contributes by (\textbf{i}) introducing (at most two) non-informative spikes and (\textbf{ii}) reducing the ratio $a_1/\nu$ (since $\nu=\sum_{i\geq 1} a_i^2$), thereby necessarily \emph{enlarging} the support of $\omega$ (see Remark~\ref{rem:MP-SC} in Appendix~\ref{sec:proof-coro-non-info-spike}).
Taking $a_2=0$ reduces the length of the support of $\omega$ and, as such, maximizes the ``chance'' of the appearance of an informative spike (the eigenvector of which is positively correlated with the label vector $\v$).
See Remark~\ref{rem:optimal-linear} below for a more precise statement.
\begin{figure}[t]
  \centering
  \begin{minipage}[c]{0.42\textwidth}
  \centering
  \begin{tikzpicture}[font=\footnotesize,spy using outlines]
    \renewcommand{\axisdefaulttryminticks}{4} 
    \pgfplotsset{every major grid/.append style={densely dashed}}          
    \tikzstyle{every axis y label}+=[yshift=-10pt] 
    \tikzstyle{every axis x label}+=[yshift=5pt]
    \pgfplotsset{every axis legend/.style={cells={anchor=west},fill=white,at={(0.98,0.98)}, anchor=north east, font=\footnotesize}}
    \begin{axis}[
      width=1.2\linewidth,
      height=.6\linewidth,
      xmin=-10,xmax=17,
        ymin=0,ymax=0.09,
        yticklabels = {},
        bar width=1.5pt,
        grid=major,
        ymajorgrids=false,
        scaled ticks=false,
        xlabel={},
        ylabel={}
        ]
        \addplot+[ybar,mark=none,color=white,fill=blue!60!white,area legend] coordinates{
        (-14.872952, 0.000496)(-14.557961, 0.000000)(-14.242969, 0.000000)(-13.927978, 0.000000)(-13.612986, 0.000000)(-13.297995, 0.000000)(-12.983003, 0.000000)(-12.668012, 0.000000)(-12.353020, 0.000000)(-12.038028, 0.000000)(-11.723037, 0.000000)(-11.408045, 0.000000)(-11.093054, 0.000000)(-10.778062, 0.000000)(-10.463071, 0.000000)(-10.148079, 0.000000)(-9.833088, 0.000000)(-9.518096, 0.000000)(-9.203104, 0.000000)(-8.888113, 0.000000)(-8.573121, 0.000000)(-8.258130, 0.000000)(-7.943138, 0.000000)(-7.628147, 0.010417)(-7.313155, 0.028771)(-6.998164, 0.039188)(-6.683172, 0.047124)(-6.368181, 0.052085)(-6.053189, 0.057045)(-5.738197, 0.061510)(-5.423206, 0.065974)(-5.108214, 0.067958)(-4.793223, 0.071430)(-4.478231, 0.072423)(-4.163240, 0.074903)(-3.848248, 0.077879)(-3.533257, 0.077383)(-3.218265, 0.079367)(-2.903273, 0.079367)(-2.588282, 0.079863)(-2.273290, 0.080855)(-1.958299, 0.081351)(-1.643307, 0.080855)(-1.328316, 0.080855)(-1.013324, 0.079863)(-0.698333, 0.080359)(-0.383341, 0.080359)(-0.068350, 0.078375)(0.246642, 0.078375)(0.561634, 0.076391)(0.876625, 0.076887)(1.191617, 0.075399)(1.506608, 0.071430)(1.821600, 0.071927)(2.136591, 0.070438)(2.451583, 0.067462)(2.766574, 0.066470)(3.081566, 0.063990)(3.396558, 0.062006)(3.711549, 0.059525)(4.026541, 0.058533)(4.341532, 0.055061)(4.656524, 0.053077)(4.971515, 0.049605)(5.286507, 0.046132)(5.601498, 0.044148)(5.916490, 0.041668)(6.231482, 0.036707)(6.546473, 0.034227)(6.861465, 0.032243)(7.176456, 0.029267)(7.491448, 0.026290)(7.806439, 0.024306)(8.121431, 0.021330)(8.436422, 0.019346)(8.751414, 0.017858)(9.066405, 0.015873)(9.381397, 0.013393)(9.696389, 0.010913)(10.011380, 0.009425)(10.326372, 0.005953)(10.641363, 0.001984)(10.956355, 0.000000)(11.271346, 0.000000)(11.586338, 0.000000)(11.901329, 0.000000)(12.216321, 0.000000)
        (12.531313, 0.002496)
        (12.846304, 0.000000)(13.161296, 0.000000)(13.476287, 0.000000)(13.791279, 0.000000)(14.106270, 0.000000)(14.421262, 0.000000)(14.736253, 0.000000)(15.051245, 0.000000)(15.366236, 0.000000)(15.681228, 0.000000)(15.996220, 0.000000)
        (16.311211, 0.002496)
        };
        \addplot[densely dashed,RED,line width=1.5pt] plot coordinates{
        (-8.087129, 0.000001)(-7.914754, 0.015440)(-7.742380, 0.022848)(-7.570005, 0.028265)(-7.397630, 0.032695)(-7.225255, 0.036500)(-7.052880, 0.039859)(-6.880506, 0.042877)(-6.708131, 0.045621)(-6.535756, 0.048140)(-6.363381, 0.050466)(-6.191007, 0.052625)(-6.018632, 0.054636)(-5.846257, 0.056516)(-5.673882, 0.058277)(-5.501507, 0.059930)(-5.329133, 0.061483)(-5.156758, 0.062945)(-4.984383, 0.064320)(-4.812008, 0.065614)(-4.639634, 0.066833)(-4.467259, 0.067980)(-4.294884, 0.069059)(-4.122509, 0.070072)(-3.950135, 0.071024)(-3.777760, 0.071916)(-3.605385, 0.072749)(-3.433010, 0.073528)(-3.260635, 0.074252)(-3.088261, 0.074924)(-2.915886, 0.075546)(-2.743511, 0.076117)(-2.571136, 0.076640)(-2.398762, 0.077115)(-2.226387, 0.077544)(-2.054012, 0.077926)(-1.881637, 0.078263)(-1.709262, 0.078556)(-1.536888, 0.078804)(-1.364513, 0.079009)(-1.192138, 0.079170)(-1.019763, 0.079287)(-0.847389, 0.079362)(-0.675014, 0.079394)(-0.502639, 0.079383)(-0.330264, 0.079329)(-0.157890, 0.079232)(0.014485, 0.079092)(0.186860, 0.078908)(0.359235, 0.078682)(0.531610, 0.078410)(0.703984, 0.078095)(0.876359, 0.077734)(1.048734, 0.077328)(1.221109, 0.076876)(1.393483, 0.076376)(1.565858, 0.075829)(1.738233, 0.075232)(1.910608, 0.074585)(2.082983, 0.073887)(2.255357, 0.073136)(2.427732, 0.072331)(2.600107, 0.071469)(2.772482, 0.070550)(2.944856, 0.069571)(3.117231, 0.068530)(3.289606, 0.067424)(3.461981, 0.066251)(3.634355, 0.065008)(3.806730, 0.063692)(3.979105, 0.062299)(4.151480, 0.060826)(4.323855, 0.059270)(4.496229, 0.057627)(4.668604, 0.055896)(4.840979, 0.054073)(5.013354, 0.052160)(5.185728, 0.050159)(5.358103, 0.048078)(5.530478, 0.045931)(5.702853, 0.043740)(5.875228, 0.041534)(6.047602, 0.039351)(6.219977, 0.037226)(6.392352, 0.035191)(6.564727, 0.033267)(6.737101, 0.031463)(6.909476, 0.029778)(7.081851, 0.028204)(7.254226, 0.026730)(7.426600, 0.025342)(7.598975, 0.024027)(7.771350, 0.022775)(7.943725, 0.021574)(8.116100, 0.020415)(8.288474, 0.019289)(8.460849, 0.018187)(8.633224, 0.017102)(8.805599, 0.016025)(8.977973, 0.014949)(9.150348, 0.013865)(9.322723, 0.012760)(9.495098, 0.011624)(9.667473, 0.010436)(9.839847, 0.009168)(10.012222, 0.007776)(10.184597, 0.006165)(10.356972, 0.004076)(10.529346, 0.000001)(10.701721, 0.000000)
        };
        \addplot+[only marks,mark=x,RED,line width=.5pt,mark size=1pt] plot coordinates{(15.9836, 0)};
        \coordinate (spypoint1) at (axis cs:16.1,0);
        \coordinate (magnifyglass1) at (axis cs:15,0.05);
        \addplot+[only marks,mark=x,RED,line width=.5pt,mark size=1pt] plot coordinates{(12.6142, 0)};
        \coordinate (spypoint2) at (axis cs:12.6,0);
        \coordinate (magnifyglass2) at (axis cs:10,0.05);
        \end{axis}
        \spy[black!50!white,size=.5cm,circle,connect spies,magnification=3] on (spypoint1) in node [fill=none] at (magnifyglass1);
        \spy[black!50!white,size=.5cm,circle,connect spies,magnification=3] on (spypoint2) in node [fill=none] at (magnifyglass2);
  \end{tikzpicture}
  \end{minipage}
  \hfill{}%
  \begin{minipage}[c]{0.53\textwidth}
  \centering
  \begin{tikzpicture}[font=\footnotesize]
    \renewcommand{\axisdefaulttryminticks}{4} 
    \pgfplotsset{every major grid/.append style={densely dashed}}          
    \tikzstyle{every axis y label}+=[yshift=-10pt] 
    \tikzstyle{every axis x label}+=[yshift=5pt]
    \pgfplotsset{every axis legend/.style={cells={anchor=west},fill=white,at={(0,1)}, anchor=north west, font=\footnotesize}}
    \begin{axis}[
      width=1\linewidth,
      height=.34\linewidth,
      xmin=0,xmax=6400,
      ymin=-0.02,ymax=0.03,
		ytick = {-0.02, 0, 0.02},
		yticklabels = {$-0.02$, $0$, $0.02$},
		xticklabels = {},
        grid=major,
        ymajorgrids=false,
        scaled ticks=false,
        xlabel={ },
        ylabel={ }
        ]
        \addplot[smooth,BLUE,line width=.5pt] plot coordinates{
        (1, -0.008566)(11, -0.002253)(21, 0.002713)(31, 0.011930)(41, 0.005995)(51, 0.008295)(61, 0.012885)(71, 0.007586)(81, 0.022250)(91, 0.017003)(101, 0.007807)(111, -0.003936)(121, 0.001650)(131, 0.015213)(141, 0.006632)(151, 0.001677)(161, 0.010092)(171, 0.003464)(181, -0.008675)(191, -0.007580)(201, 0.016703)(211, 0.002706)(221, 0.012439)(231, 0.004724)(241, 0.000597)(251, 0.019348)(261, 0.013528)(271, 0.006804)(281, 0.019662)(291, -0.002551)(301, 0.003484)(311, -0.010219)(321, 0.004309)(331, 0.009071)(341, 0.000977)(351, 0.007038)(361, 0.008285)(371, 0.004857)(381, 0.021269)(391, 0.006646)(401, 0.019698)(411, -0.000816)(421, 0.021855)(431, 0.009873)(441, 0.009660)(451, -0.002976)(461, -0.001711)(471, 0.009885)(481, -0.001872)(491, 0.020354)(501, 0.018754)(511, 0.003288)(521, 0.001822)(531, 0.012859)(541, 0.006875)(551, 0.011298)(561, -0.002123)(571, 0.018383)(581, 0.001371)(591, 0.025637)(601, 0.017070)(611, 0.019081)(621, -0.003767)(631, 0.017876)(641, 0.009559)(651, 0.006304)(661, -0.001966)(671, 0.001823)(681, -0.001008)(691, 0.013205)(701, 0.015621)(711, -0.002781)(721, -0.001497)(731, 0.003256)(741, 0.029080)(751, 0.032706)(761, 0.004188)(771, 0.004724)(781, 0.007891)(791, 0.001347)(801, 0.000213)(811, -0.000722)(821, 0.000674)(831, -0.004904)(841, 0.026535)(851, 0.003395)(861, 0.007657)(871, 0.007317)(881, 0.009575)(891, 0.014265)(901, 0.009029)(911, 0.023421)(921, -0.003853)(931, 0.021655)(941, 0.010725)(951, 0.013720)(961, 0.000547)(971, 0.019357)(981, 0.000365)(991, 0.011819)(1001, 0.006300)(1011, 0.002199)(1021, -0.001421)(1031, 0.000970)(1041, 0.003326)(1051, 0.017304)(1061, 0.032033)(1071, 0.017959)(1081, 0.021532)(1091, 0.000793)(1101, 0.002560)(1111, 0.008747)(1121, 0.019729)(1131, 0.011074)(1141, 0.010353)(1151, 0.006454)(1161, 0.002982)(1171, -0.001757)(1181, 0.009501)(1191, -0.011102)(1201, 0.014707)(1211, -0.007714)(1221, 0.006729)(1231, 0.024723)(1241, 0.008098)(1251, 0.010487)(1261, 0.023191)(1271, 0.011633)(1281, 0.018454)(1291, -0.008251)(1301, 0.008412)(1311, 0.022541)(1321, 0.006722)(1331, 0.004122)(1341, 0.007245)(1351, 0.000380)(1361, -0.001354)(1371, -0.002943)(1381, 0.001316)(1391, 0.022986)(1401, 0.024867)(1411, 0.012380)(1421, 0.006447)(1431, 0.012982)(1441, 0.019585)(1451, 0.006253)(1461, -0.000915)(1471, 0.007247)(1481, 0.020958)(1491, 0.014747)(1501, 0.007717)(1511, 0.009298)(1521, 0.003174)(1531, 0.018751)(1541, 0.019415)(1551, 0.009595)(1561, 0.008821)(1571, 0.000858)(1581, 0.007642)(1591, -0.004033)(1601, -0.003248)(1611, 0.007880)(1621, 0.004786)(1631, 0.004451)(1641, 0.015380)(1651, 0.010603)(1661, 0.010696)(1671, 0.008918)(1681, 0.004080)(1691, 0.012365)(1701, 0.010375)(1711, 0.005541)(1721, 0.013058)(1731, 0.016197)(1741, 0.003408)(1751, 0.006076)(1761, 0.019034)(1771, 0.024205)(1781, 0.014346)(1791, -0.005107)(1801, 0.014461)(1811, 0.019032)(1821, 0.005977)(1831, 0.026726)(1841, 0.005081)(1851, 0.025325)(1861, 0.014236)(1871, 0.014950)(1881, 0.006348)(1891, 0.010532)(1901, 0.019158)(1911, 0.008777)(1921, 0.007880)(1931, 0.018738)(1941, 0.014986)(1951, 0.017007)(1961, 0.003034)(1971, 0.019306)(1981, 0.004515)(1991, 0.014003)(2001, 0.003138)(2011, 0.010928)(2021, 0.012543)(2031, 0.009400)(2041, 0.021702)(2051, 0.002365)(2061, 0.021639)(2071, 0.016660)(2081, 0.010889)(2091, 0.000093)(2101, -0.008601)(2111, 0.003970)(2121, 0.010131)(2131, 0.030839)(2141, 0.002225)(2151, 0.026864)(2161, 0.026398)(2171, 0.014540)(2181, -0.000687)(2191, -0.004439)(2201, 0.013532)(2211, 0.015942)(2221, 0.019954)(2231, 0.006946)(2241, 0.005961)(2251, -0.007864)(2261, 0.006162)(2271, 0.003948)(2281, 0.008632)(2291, -0.010465)(2301, 0.007998)(2311, 0.005826)(2321, 0.006272)(2331, 0.006139)(2341, 0.029560)(2351, -0.001463)(2361, 0.003916)(2371, 0.005090)(2381, 0.005931)(2391, 0.003518)(2401, 0.002931)(2411, 0.002096)(2421, -0.000573)(2431, 0.010980)(2441, -0.001349)(2451, 0.014635)(2461, 0.010231)(2471, 0.004571)(2481, 0.006317)(2491, 0.017854)(2501, 0.011022)(2511, 0.021729)(2521, 0.007456)(2531, 0.008571)(2541, 0.003337)(2551, 0.015515)(2561, 0.011031)(2571, 0.009071)(2581, 0.007331)(2591, 0.016072)(2601, 0.015791)(2611, -0.004876)(2621, 0.005877)(2631, 0.021305)(2641, -0.000047)(2651, 0.022118)(2661, 0.002638)(2671, 0.008735)(2681, 0.004567)(2691, -0.001739)(2701, 0.019865)(2711, 0.026347)(2721, 0.011368)(2731, -0.004109)(2741, 0.002319)(2751, 0.007279)(2761, -0.001054)(2771, 0.024977)(2781, -0.004820)(2791, 0.016981)(2801, 0.009565)(2811, -0.000102)(2821, 0.001144)(2831, 0.003639)(2841, 0.011089)(2851, 0.011468)(2861, -0.017468)(2871, 0.006711)(2881, 0.009812)(2891, 0.018911)(2901, 0.003258)(2911, 0.009395)(2921, -0.005362)(2931, 0.003720)(2941, 0.022529)(2951, -0.001732)(2961, 0.006630)(2971, -0.002501)(2981, -0.002745)(2991, 0.010140)(3001, 0.004248)(3011, 0.002336)(3021, 0.016394)(3031, -0.010241)(3041, 0.018617)(3051, 0.000588)(3061, 0.011296)(3071, 0.008325)(3081, -0.009022)(3091, -0.003196)(3101, -0.003494)(3111, 0.026318)(3121, 0.011937)(3131, 0.004265)(3141, 0.000078)(3151, -0.010599)(3161, 0.017204)(3171, 0.028616)(3181, 0.022248)(3191, 0.009424)(3201, 0.017265)(3211, 0.018641)(3221, 0.029027)(3231, 0.018067)(3241, 0.008059)(3251, 0.002593)(3261, 0.013918)(3271, 0.015343)(3281, 0.012162)(3291, 0.019346)(3301, -0.000764)(3311, -0.004870)(3321, -0.000218)(3331, 0.002968)(3341, 0.008870)(3351, 0.001256)(3361, 0.018148)(3371, 0.036671)(3381, -0.005790)(3391, 0.008380)(3401, 0.004032)(3411, 0.006844)(3421, 0.009728)(3431, -0.009849)(3441, 0.010982)(3451, 0.004100)(3461, 0.003113)(3471, -0.001631)(3481, 0.014786)(3491, 0.003046)(3501, 0.003040)(3511, 0.005921)(3521, 0.009290)(3531, 0.020382)(3541, 0.017567)(3551, 0.023002)(3561, -0.006708)(3571, 0.005707)(3581, -0.001117)(3591, 0.018956)(3601, 0.021959)(3611, -0.001092)(3621, -0.011968)(3631, 0.000816)(3641, -0.002958)(3651, 0.001641)(3661, 0.015406)(3671, 0.009531)(3681, 0.015725)(3691, -0.001654)(3701, 0.010792)(3711, 0.005241)(3721, 0.014435)(3731, 0.004060)(3741, 0.008998)(3751, 0.008257)(3761, 0.025978)(3771, -0.000392)(3781, 0.004167)(3791, 0.027105)(3801, -0.002125)(3811, 0.012457)(3821, -0.009633)(3831, 0.002616)(3841, 0.005101)(3851, 0.003900)(3861, 0.026366)(3871, 0.008306)(3881, 0.007738)(3891, 0.010182)(3901, 0.003702)(3911, 0.004361)(3921, -0.004705)(3931, 0.005898)(3941, 0.004841)(3951, -0.002100)(3961, 0.007398)(3971, 0.016064)(3981, 0.011242)(3991, 0.006514)(4001, 0.009050)(4011, 0.010435)(4021, 0.000031)(4031, -0.009727)(4041, 0.012576)(4051, 0.010175)(4061, 0.014612)(4071, 0.019203)(4081, 0.015771)(4091, 0.009515)(4101, 0.007050)(4111, 0.010937)(4121, 0.018831)(4131, 0.018668)(4141, 0.004291)(4151, 0.005100)(4161, 0.001809)(4171, 0.018662)(4181, -0.007214)(4191, 0.013074)(4201, 0.026095)(4211, 0.013107)(4221, 0.002488)(4231, 0.008787)(4241, 0.013453)(4251, 0.001213)(4261, 0.008240)(4271, 0.001895)(4281, 0.010456)(4291, 0.016303)(4301, 0.008569)(4311, -0.010575)(4321, 0.011810)(4331, -0.005886)(4341, 0.000753)(4351, 0.015278)(4361, 0.010432)(4371, 0.012695)(4381, 0.015403)(4391, 0.023709)(4401, 0.025481)(4411, -0.005037)(4421, 0.002720)(4431, 0.002542)(4441, 0.005029)(4451, 0.013621)(4461, -0.003703)(4471, 0.012592)(4481, 0.023550)(4491, 0.000745)(4501, 0.011285)(4511, 0.000254)(4521, -0.006021)(4531, 0.017172)(4541, 0.011985)(4551, 0.002436)(4561, 0.000934)(4571, -0.006159)(4581, 0.004056)(4591, 0.025546)(4601, -0.007291)(4611, 0.000500)(4621, 0.024642)(4631, 0.007739)(4641, -0.000995)(4651, 0.012476)(4661, 0.015392)(4671, 0.005013)(4681, -0.002168)(4691, -0.004879)(4701, 0.003130)(4711, 0.009857)(4721, 0.002313)(4731, 0.006983)(4741, 0.028627)(4751, -0.000946)(4761, 0.009196)(4771, 0.001291)(4781, 0.007112)(4791, 0.000375)(4801, 0.001744)(4811, -0.003917)(4821, 0.004976)(4831, -0.001899)(4841, -0.008313)(4851, -0.005324)(4861, 0.016616)(4871, 0.007865)(4881, 0.007409)(4891, 0.028631)(4901, 0.005914)(4911, -0.002408)(4921, 0.019693)(4931, -0.017217)(4941, -0.005323)(4951, 0.012603)(4961, 0.014219)(4971, -0.000698)(4981, 0.007112)(4991, -0.002890)(5001, 0.024086)(5011, 0.004341)(5021, 0.011232)(5031, 0.018535)(5041, 0.014359)(5051, 0.008944)(5061, 0.014441)(5071, 0.010536)(5081, 0.002042)(5091, 0.010779)(5101, 0.011316)(5111, 0.010968)(5121, 0.013105)(5131, -0.001881)(5141, 0.018419)(5151, -0.003333)(5161, 0.006048)(5171, 0.009467)(5181, -0.001815)(5191, -0.000416)(5201, 0.008728)(5211, 0.020999)(5221, 0.009208)(5231, 0.013933)(5241, -0.000991)(5251, -0.007638)(5261, 0.012578)(5271, 0.012362)(5281, 0.021210)(5291, -0.002728)(5301, 0.002594)(5311, 0.003802)(5321, 0.016217)(5331, -0.000983)(5341, 0.003325)(5351, 0.013022)(5361, 0.003145)(5371, 0.004931)(5381, 0.004368)(5391, 0.003531)(5401, 0.016628)(5411, 0.012870)(5421, 0.007526)(5431, -0.002950)(5441, 0.000554)(5451, -0.004916)(5461, 0.009777)(5471, 0.031752)(5481, -0.001435)(5491, -0.003713)(5501, 0.003294)(5511, 0.008304)(5521, 0.007333)(5531, 0.011591)(5541, -0.007662)(5551, 0.009522)(5561, 0.002080)(5571, 0.032657)(5581, 0.009096)(5591, -0.008297)(5601, 0.007442)(5611, 0.001244)(5621, 0.016472)(5631, 0.008049)(5641, 0.011898)(5651, 0.000601)(5661, -0.014679)(5671, 0.001408)(5681, 0.014083)(5691, 0.011570)(5701, -0.005680)(5711, 0.006712)(5721, -0.015624)(5731, 0.007470)(5741, 0.014051)(5751, 0.012298)(5761, -0.012552)(5771, -0.000372)(5781, 0.016372)(5791, 0.010285)(5801, 0.010708)(5811, 0.004456)(5821, -0.004839)(5831, 0.023664)(5841, 0.015477)(5851, 0.015606)(5861, 0.019397)(5871, 0.025709)(5881, 0.014034)(5891, 0.005540)(5901, -0.010943)(5911, -0.000227)(5921, 0.016331)(5931, 0.007308)(5941, 0.000590)(5951, 0.013007)(5961, 0.013154)(5971, -0.003086)(5981, -0.000179)(5991, -0.000250)(6001, 0.015214)(6011, 0.012664)(6021, 0.006100)(6031, 0.008164)(6041, -0.002221)(6051, 0.008945)(6061, 0.013885)(6071, 0.009356)(6081, 0.029418)(6091, -0.000222)(6101, 0.003249)(6111, 0.009037)(6121, -0.014937)(6131, 0.004444)(6141, 0.003190)(6151, 0.015103)(6161, 0.008913)(6171, 0.019972)(6181, 0.000177)(6191, 0.002249)(6201, 0.012996)(6211, 0.006445)(6221, 0.002437)(6231, 0.007709)(6241, 0.016403)(6251, 0.000186)(6261, 0.023696)(6271, 0.025115)(6281, -0.011724)(6291, 0.015399)(6301, 0.009718)(6311, 0.021280)(6321, -0.006041)(6331, 0.011592)(6341, 0.012356)(6351, -0.001080)(6361, 0.008901)(6371, 0.009345)(6381, 0.004541)(6391, 0.025663)
        };
        \end{axis}
  \end{tikzpicture}
  \vfill{}%
  \begin{tikzpicture}[font=\footnotesize]
    \renewcommand{\axisdefaulttryminticks}{4} 
    \pgfplotsset{every major grid/.append style={densely dashed}}          
    \tikzstyle{every axis y label}+=[yshift=-10pt] 
    \tikzstyle{every axis x label}+=[yshift=5pt]
    \pgfplotsset{every axis legend/.style={cells={anchor=west},fill=white,at={(0,1)}, anchor=north west, font=\footnotesize}}
    \begin{axis}[
      width=1\linewidth,
      height=.34\linewidth,
      ymin=-0.03,ymax=0.03,
      xmin=0,xmax=6400,
		ytick = {-0.02, 0, 0.02},
		yticklabels = {$-0.02$, $0$, $0.02$},
		xticklabels = {},
        grid=major,
        ymajorgrids=false,
        scaled ticks=false,
        xlabel={ },
        ylabel={ }
        ]
        \addplot[smooth,BLUE,line width=.5pt] plot coordinates{
        (1, -0.011997)(11, 0.007280)(21, 0.017963)(31, -0.008530)(41, -0.027645)(51, -0.015298)(61, -0.018713)(71, -0.012147)(81, -0.017677)(91, -0.018520)(101, -0.009432)(111, -0.014776)(121, -0.015520)(131, -0.004646)(141, -0.002901)(151, 0.000695)(161, -0.019103)(171, -0.008633)(181, -0.004002)(191, -0.014082)(201, 0.001860)(211, -0.008530)(221, -0.011974)(231, -0.007276)(241, -0.009177)(251, -0.019828)(261, -0.015049)(271, -0.004358)(281, -0.005583)(291, -0.002112)(301, -0.013543)(311, -0.009256)(321, -0.022943)(331, -0.020967)(341, -0.025391)(351, -0.010198)(361, -0.018445)(371, -0.006689)(381, -0.000665)(391, -0.009346)(401, -0.017314)(411, -0.009587)(421, 0.007663)(431, -0.012812)(441, -0.002792)(451, -0.004264)(461, -0.006788)(471, 0.004514)(481, 0.005051)(491, -0.023625)(501, -0.017975)(511, -0.018673)(521, -0.033535)(531, -0.015600)(541, -0.006045)(551, -0.017695)(561, -0.012357)(571, 0.003317)(581, -0.021858)(591, -0.031700)(601, 0.008900)(611, -0.032506)(621, -0.021860)(631, -0.006751)(641, 0.000074)(651, 0.008297)(661, 0.007442)(671, -0.015422)(681, -0.002919)(691, -0.016208)(701, -0.023589)(711, -0.009608)(721, -0.004673)(731, -0.009329)(741, -0.017749)(751, -0.022461)(761, -0.015871)(771, 0.005378)(781, -0.023086)(791, 0.004167)(801, -0.004993)(811, -0.011267)(821, -0.021387)(831, 0.003267)(841, -0.009373)(851, -0.006947)(861, -0.025480)(871, -0.027296)(881, -0.008946)(891, 0.005496)(901, -0.006433)(911, 0.005746)(921, -0.014024)(931, -0.022672)(941, -0.002912)(951, -0.009648)(961, -0.012986)(971, -0.009002)(981, -0.021054)(991, 0.009658)(1001, -0.004575)(1011, -0.015215)(1021, -0.016183)(1031, -0.002512)(1041, -0.012783)(1051, -0.009758)(1061, -0.022097)(1071, -0.005409)(1081, -0.012132)(1091, 0.000416)(1101, -0.016111)(1111, -0.005520)(1121, -0.015179)(1131, -0.003097)(1141, -0.020718)(1151, -0.020098)(1161, -0.006007)(1171, -0.002175)(1181, -0.017576)(1191, -0.011745)(1201, -0.025760)(1211, -0.001964)(1221, -0.010033)(1231, -0.001258)(1241, -0.003017)(1251, -0.024113)(1261, -0.018466)(1271, -0.016608)(1281, -0.012273)(1291, -0.009512)(1301, -0.014542)(1311, -0.025125)(1321, 0.002445)(1331, -0.005694)(1341, 0.000233)(1351, -0.009452)(1361, -0.019662)(1371, 0.000164)(1381, -0.008458)(1391, -0.024946)(1401, -0.000631)(1411, -0.005953)(1421, 0.007036)(1431, -0.002647)(1441, 0.027072)(1451, 0.000595)(1461, -0.006413)(1471, -0.027266)(1481, -0.018601)(1491, -0.007571)(1501, -0.001875)(1511, 0.004123)(1521, -0.014549)(1531, -0.015227)(1541, -0.020987)(1551, -0.011675)(1561, 0.002192)(1571, -0.013371)(1581, 0.004017)(1591, -0.006744)(1601, -0.005613)(1611, 0.000001)(1621, -0.014064)(1631, 0.004454)(1641, -0.006901)(1651, -0.011973)(1661, -0.011286)(1671, -0.006623)(1681, -0.001249)(1691, -0.000667)(1701, 0.006868)(1711, 0.000161)(1721, -0.006005)(1731, -0.017769)(1741, -0.035457)(1751, -0.019618)(1761, -0.007374)(1771, -0.006314)(1781, -0.016303)(1791, -0.012948)(1801, -0.004981)(1811, -0.007266)(1821, -0.019096)(1831, -0.016737)(1841, 0.003209)(1851, -0.002654)(1861, 0.007302)(1871, 0.007392)(1881, -0.000801)(1891, -0.020122)(1901, -0.021836)(1911, -0.006647)(1921, 0.008593)(1931, -0.004645)(1941, 0.001949)(1951, -0.010407)(1961, -0.008907)(1971, -0.001916)(1981, -0.005842)(1991, -0.014450)(2001, -0.014366)(2011, 0.003790)(2021, 0.002162)(2031, -0.005925)(2041, -0.004423)(2051, -0.009043)(2061, -0.008131)(2071, -0.018482)(2081, -0.018417)(2091, -0.005104)(2101, 0.003591)(2111, -0.005664)(2121, 0.001384)(2131, 0.004356)(2141, 0.001453)(2151, -0.003422)(2161, -0.011160)(2171, -0.012060)(2181, -0.007510)(2191, -0.011869)(2201, -0.010705)(2211, -0.017998)(2221, -0.014932)(2231, -0.009390)(2241, 0.003397)(2251, -0.000070)(2261, -0.006555)(2271, -0.006800)(2281, -0.004138)(2291, -0.004645)(2301, -0.012883)(2311, -0.010382)(2321, -0.016449)(2331, 0.008213)(2341, 0.008055)(2351, 0.000064)(2361, -0.008982)(2371, -0.006532)(2381, -0.005106)(2391, -0.006049)(2401, -0.009033)(2411, 0.013540)(2421, 0.011844)(2431, -0.001930)(2441, 0.003296)(2451, -0.014629)(2461, -0.019025)(2471, -0.013077)(2481, -0.011605)(2491, -0.020698)(2501, -0.006629)(2511, 0.026141)(2521, -0.017609)(2531, -0.001643)(2541, -0.013471)(2551, -0.011111)(2561, -0.013530)(2571, -0.012648)(2581, -0.015176)(2591, -0.012622)(2601, -0.020548)(2611, -0.012529)(2621, -0.017670)(2631, 0.003871)(2641, -0.001958)(2651, -0.007570)(2661, -0.006986)(2671, -0.006174)(2681, -0.017322)(2691, -0.005934)(2701, -0.026444)(2711, -0.004177)(2721, -0.011020)(2731, -0.017888)(2741, -0.012823)(2751, -0.006361)(2761, -0.007289)(2771, -0.022066)(2781, -0.003277)(2791, -0.005060)(2801, -0.005302)(2811, -0.004608)(2821, -0.000562)(2831, -0.002094)(2841, 0.001430)(2851, -0.002337)(2861, 0.004989)(2871, -0.011329)(2881, -0.002993)(2891, -0.009050)(2901, -0.015927)(2911, -0.001857)(2921, -0.007394)(2931, -0.008521)(2941, 0.004047)(2951, -0.012211)(2961, -0.019415)(2971, -0.006020)(2981, -0.005892)(2991, 0.001227)(3001, -0.001508)(3011, 0.000447)(3021, -0.007509)(3031, -0.000276)(3041, -0.008818)(3051, 0.012206)(3061, -0.021303)(3071, -0.012645)(3081, -0.010941)(3091, -0.000955)(3101, -0.002180)(3111, -0.007215)(3121, -0.009166)(3131, -0.000311)(3141, 0.000616)(3151, -0.007361)(3161, -0.002012)(3171, -0.005451)(3181, -0.008641)(3191, -0.004088)(3201, 0.016895)(3211, 0.014878)(3221, 0.028666)(3231, 0.012969)(3241, 0.009829)(3251, -0.010213)(3261, -0.001489)(3271, 0.008783)(3281, 0.007566)(3291, -0.002619)(3301, -0.008569)(3311, 0.002657)(3321, 0.005702)(3331, 0.003186)(3341, 0.009656)(3351, 0.005595)(3361, 0.008584)(3371, 0.021522)(3381, 0.008154)(3391, 0.002227)(3401, 0.009627)(3411, 0.006450)(3421, -0.002985)(3431, 0.010964)(3441, 0.020180)(3451, -0.002902)(3461, 0.011655)(3471, 0.006640)(3481, 0.023274)(3491, -0.004080)(3501, 0.008648)(3511, -0.003863)(3521, 0.009989)(3531, 0.009077)(3541, 0.010560)(3551, 0.020621)(3561, 0.023763)(3571, 0.002072)(3581, -0.004054)(3591, 0.003258)(3601, -0.006297)(3611, 0.011910)(3621, 0.003475)(3631, 0.012266)(3641, 0.008825)(3651, 0.012766)(3661, 0.013000)(3671, 0.003431)(3681, 0.022053)(3691, 0.022494)(3701, 0.005508)(3711, 0.008510)(3721, 0.004785)(3731, 0.004759)(3741, 0.002181)(3751, -0.009129)(3761, 0.006883)(3771, 0.002821)(3781, 0.014609)(3791, 0.001331)(3801, 0.004612)(3811, -0.013877)(3821, 0.006832)(3831, 0.020811)(3841, 0.000452)(3851, 0.014049)(3861, 0.023446)(3871, 0.016431)(3881, 0.006420)(3891, 0.019684)(3901, 0.001414)(3911, 0.014520)(3921, 0.010678)(3931, -0.008575)(3941, 0.017347)(3951, 0.014514)(3961, 0.020415)(3971, 0.003900)(3981, 0.011998)(3991, 0.007845)(4001, 0.003177)(4011, 0.016871)(4021, -0.007689)(4031, 0.009936)(4041, 0.008843)(4051, 0.007396)(4061, 0.006305)(4071, 0.007183)(4081, 0.010258)(4091, 0.009175)(4101, 0.028750)(4111, 0.011391)(4121, -0.008952)(4131, 0.010859)(4141, 0.011027)(4151, -0.001767)(4161, 0.014931)(4171, 0.003745)(4181, 0.009999)(4191, 0.013362)(4201, 0.016265)(4211, 0.011444)(4221, 0.012216)(4231, 0.026216)(4241, 0.018866)(4251, -0.015203)(4261, 0.010154)(4271, -0.010033)(4281, -0.007708)(4291, 0.015979)(4301, 0.000002)(4311, 0.028938)(4321, -0.006137)(4331, 0.010653)(4341, 0.002264)(4351, 0.019908)(4361, 0.022971)(4371, -0.000431)(4381, 0.020360)(4391, 0.015524)(4401, -0.002868)(4411, 0.004821)(4421, 0.008021)(4431, -0.005447)(4441, -0.001571)(4451, 0.018982)(4461, 0.007370)(4471, 0.013826)(4481, -0.014382)(4491, 0.014431)(4501, 0.005572)(4511, -0.014720)(4521, 0.006139)(4531, 0.004054)(4541, -0.017216)(4551, 0.005590)(4561, 0.003697)(4571, -0.005637)(4581, 0.008507)(4591, 0.011714)(4601, 0.004147)(4611, 0.002629)(4621, -0.000310)(4631, 0.013320)(4641, 0.001592)(4651, 0.012630)(4661, 0.001766)(4671, -0.007169)(4681, 0.012516)(4691, 0.007523)(4701, 0.003641)(4711, 0.006343)(4721, 0.020246)(4731, 0.010304)(4741, 0.016848)(4751, 0.008664)(4761, 0.020023)(4771, 0.008287)(4781, 0.015673)(4791, 0.004800)(4801, 0.010655)(4811, 0.005758)(4821, 0.004385)(4831, 0.006529)(4841, 0.003489)(4851, 0.006725)(4861, 0.016206)(4871, -0.003780)(4881, 0.009682)(4891, -0.004729)(4901, 0.008353)(4911, 0.006467)(4921, 0.006248)(4931, 0.014019)(4941, 0.015630)(4951, 0.009044)(4961, 0.010672)(4971, 0.015160)(4981, -0.004977)(4991, 0.003042)(5001, -0.007017)(5011, 0.008617)(5021, 0.007885)(5031, 0.019554)(5041, 0.002816)(5051, 0.000653)(5061, 0.003388)(5071, 0.009689)(5081, 0.012893)(5091, 0.000195)(5101, 0.001207)(5111, 0.009020)(5121, 0.014584)(5131, 0.008620)(5141, 0.019637)(5151, 0.008369)(5161, 0.007685)(5171, -0.010484)(5181, 0.005646)(5191, -0.009165)(5201, -0.000752)(5211, 0.017547)(5221, 0.016396)(5231, 0.007491)(5241, 0.001653)(5251, 0.007474)(5261, 0.025293)(5271, 0.003639)(5281, 0.006862)(5291, 0.026641)(5301, 0.017744)(5311, 0.010551)(5321, -0.012270)(5331, 0.007917)(5341, 0.003105)(5351, 0.011824)(5361, 0.014938)(5371, 0.022779)(5381, -0.005600)(5391, 0.005089)(5401, -0.005329)(5411, -0.006346)(5421, 0.009943)(5431, 0.006343)(5441, 0.017974)(5451, 0.006119)(5461, 0.008005)(5471, 0.035919)(5481, 0.020365)(5491, 0.021830)(5501, 0.005411)(5511, 0.027060)(5521, -0.000757)(5531, 0.001126)(5541, 0.004488)(5551, 0.009141)(5561, 0.007607)(5571, 0.019423)(5581, 0.003076)(5591, 0.012726)(5601, 0.012787)(5611, -0.012539)(5621, 0.010893)(5631, -0.004739)(5641, -0.003109)(5651, 0.000754)(5661, 0.000487)(5671, 0.012505)(5681, 0.013836)(5691, 0.013734)(5701, 0.004892)(5711, 0.011657)(5721, 0.001487)(5731, 0.010565)(5741, 0.005686)(5751, -0.002594)(5761, 0.010927)(5771, 0.003152)(5781, 0.036113)(5791, 0.005454)(5801, 0.011658)(5811, 0.015148)(5821, 0.021124)(5831, 0.006231)(5841, 0.011049)(5851, 0.002189)(5861, 0.016064)(5871, -0.003715)(5881, 0.016660)(5891, 0.002037)(5901, 0.018634)(5911, 0.002227)(5921, 0.015792)(5931, 0.002676)(5941, 0.002608)(5951, 0.011363)(5961, -0.009936)(5971, 0.025761)(5981, 0.011723)(5991, 0.008752)(6001, 0.023661)(6011, 0.026055)(6021, 0.018587)(6031, 0.016050)(6041, -0.004114)(6051, -0.012619)(6061, -0.003720)(6071, 0.012815)(6081, 0.013505)(6091, 0.013911)(6101, 0.014148)(6111, 0.015787)(6121, 0.009772)(6131, 0.000131)(6141, 0.001935)(6151, 0.008583)(6161, 0.007999)(6171, 0.009711)(6181, -0.007542)(6191, 0.009311)(6201, 0.017941)(6211, 0.021651)(6221, -0.007449)(6231, 0.016788)(6241, 0.021499)(6251, 0.006698)(6261, 0.009864)(6271, 0.039304)(6281, 0.009737)(6291, 0.001780)(6301, 0.002291)(6311, 0.008197)(6321, 0.001791)(6331, 0.024349)(6341, 0.021327)(6351, 0.020340)(6361, 0.007961)(6371, 0.001121)(6381, 0.016880)(6391, 0.011614)
        };
        \addplot[densely dashed,RED,line width=1.5pt] plot coordinates{
        (1, -0.008195)(3200, -0.008195)(3201, 0.008195)(6400, 0.008195)
        };
        \end{axis}
  \end{tikzpicture}
\end{minipage}
 \caption{ \textbf{(Left)} Histogram of eigenvalues of $\K$ ({\BLUE \textbf{blue}}) versus the limiting spectrum and spikes ({\RED \textbf{red}}). \textbf{(Right)} Eigenvectors of the largest \textbf{(top)} and second largest \textbf{(bottom)} eigenvalues of $\K$ ({\BLUE \textbf{blue}}), versus the rescaled class label $\alpha \v/\sqrt n$ ({\RED \textbf{red}}, from Corollary~\ref{coro:phase-transition}). $f(t) = \sin(t) - 3\cos(t) + 3/\sqrt{e}$, $p = 800$, $n = 6\,400$, $\bmu = 1.1\cdot \mathbf{1}_p/\sqrt p$, $\v = [-\mathbf{1}_{n/2};~\mathbf{1}_{n/2}]$ on Student-t data with $\kappa = 5$.}
\label{fig:spectrum}
\end{figure}
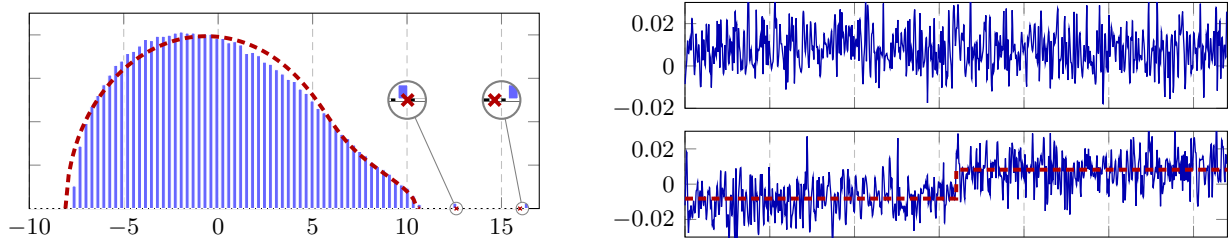

In particular, by taking $a_2 = 0$ and $a_1 \neq 0$, we obtain only informative spikes, and we can characterize a phase transition depending on the SNR $\rho$.
The proof of the following is in Appendix~\ref{sec:proof-coro-phase}.

\begin{Corollary}[Informative spike and a phase transition]\label{coro:phase-transition}
For $a_1 > 0$ and $a_2 = 0$, let
\begin{equation}\label{eq:def-F-G}
	F(x) = x^4 + 2 x^3 + \left( 1 - \frac{ c \nu}{a_1^2} \right) x^2 - 2 cx - c, \quad G(x) = \frac{a_1}c (1 + x) + \frac{a_1}x + \frac{\nu - a_1^2}{a_1} \frac1{1+x} ,
\end{equation}
and let $\gamma$ be the largest real solution to $F(\gamma) = 0$. 
Then, under Assumption~\ref{ass:poly}, we have $\sqrt c \le \gamma \le \sqrt{c \nu}/a_1$, and the largest eigenpair $(\hat \lambda,\hat\v)$ of $\K$ satisfies
\begin{equation}\label{eq:def-alpha}
	\hat \lambda \to \lambda = \begin{cases} G(\rho) &\quad \rho > \gamma, \\ G(\gamma) &\quad \rho \le \gamma; \end{cases} \quad \frac{ |\hat \v^\T \v|^2}n \to \alpha = \begin{cases} \frac{F( \rho )}{ \rho (1 + \rho)^3} &\quad \rho > \gamma, \\ 0 &\quad \rho \le \gamma; \end{cases}
\end{equation}
almost surely as $n,p \to \infty$, $p/n\to c \in (0,\infty)$, where we recall $\rho \equiv \lim_{p \to \infty} \| \bmu \|^2$ and $\| \v \| = \sqrt{n}$.
\end{Corollary}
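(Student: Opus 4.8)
The plan is to specialize the deterministic equivalent of Theorem~\ref{theo:DE} to the case $a_2=0$, identify the informative spike as the pole of $\bar\Q(z)$ lying outside $\supp(\omega)$, and then extract the eigenvalue and eigenvector content via the Stieltjes-transform and Cauchy-integral tools recalled after \eqref{eq:def-Q}. The first step is to simplify $\bar\Q$: when $a_2=0$ one has $\Omega(z)\equiv 0$, so $\bLambda(z)=\diag\big(\Theta(z)m^2(z),\,0\big)$ and, using $\sqrt n\,\V=[\v,\,\one_n]$,
\begin{equation*}
  \bar\Q(z)=m(z)\I_n-\frac{\Theta(z)m^2(z)}{n}\,\v\v^\T,\qquad \Theta(z)=\frac{a_1\rho}{c+a_1 m(z)(1+\rho)},
\end{equation*}
with $m(z)$ solving \eqref{eq:master-eq}; hence $\tfrac1n\,\v^\T\bar\Q(z)\,\v=m(z)-\Theta(z)m^2(z)$. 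Since $a_2=0$ rules out the non-informative spikes of Corollary~\ref{coro:non-info-spike}, the only isolated eigenvalue of $\K$ comes from the rank-one term, located at the $z=\lambda\notin\supp(\omega)$ where $\Theta$ blows up, i.e.\ $m(\lambda)=m_0:=-c/\big(a_1(1+\rho)\big)$.

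For the eigenvalue and the phase transition I would introduce $z(m)=-\tfrac1m-\tfrac{a_1^2 m}{c+a_1 m}-\tfrac{\nu-a_1^2}{c}m$, the functional inverse of $m(z)$ coming from \eqref{eq:master-eq}; substituting $m=m_0$ yields $z(m_0)=G(\rho)$. This value is a genuine eigenvalue strictly to the right of $\supp(\omega)$ exactly when $m_0$ lies in the range of $m(\cdot)$ over $(E_+,\infty)$, equivalently when $z'(m_0)>0$, with the limiting case $z'(m_0)=0$ locating the right edge $E_+$ itself. Differentiating and clearing denominators gives the key identity $z'(m_0)=a_1^2 F(\rho)/(c^2\rho^2)$, so the transition sits at $F(\gamma)=0$; since $m_0$ increases with $\rho$, the spike is present iff $\rho>\gamma$, and for $\rho\le\gamma$ the top eigenvalue locks onto $E_+=G(\gamma)$ (the same substitution at the critical $m_0(\gamma)$ gives $E_+=G(\gamma)$). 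For the sandwich on $\gamma$ I would just evaluate $F$ at the endpoints: $F(\sqrt c)=c^2(1-\nu/a_1^2)\le 0$ (using $\nu\ge a_1^2$ from \eqref{eq:def-params}) and $F(\sqrt{c\nu}/a_1)=(2\sqrt{c\nu}/a_1+1)\,c(\nu-a_1^2)/a_1^2\ge 0$, together with a short convexity argument ($F'''>0$, $F'(\sqrt{c\nu}/a_1)>0$) showing $F$ is increasing on $[\sqrt{c\nu}/a_1,\infty)$, whence its largest real root lies in $[\sqrt c,\sqrt{c\nu}/a_1]$.

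For the eigenvector alignment I would use $\tfrac1n|\hat\v^\T\v|^2=-\tfrac1{2\pi\imath}\oint_{\Gamma(\hat\lambda)}\tfrac1n\,\v^\T\Q(z)\,\v\,dz$, replace $\tfrac1n\v^\T\Q(z)\v$ by $m(z)-\Theta(z)m^2(z)$ on a small fixed contour around $\lambda=G(\rho)$, and compute the residue at the only enclosed pole, $\Res_{z=\lambda}\Theta(z)m^2(z)=a_1\rho\,m_0^2/\big(a_1(1+\rho)\,m'(\lambda)\big)$. Plugging in $m'(\lambda)=1/z'(m_0)=c^2\rho^2/(a_1^2F(\rho))$ and $m_0^2=c^2/(a_1^2(1+\rho)^2)$ collapses this to $F(\rho)/(\rho(1+\rho)^3)=\alpha$, as claimed; for $\rho\le\gamma$ the integrand has no pole outside $\supp(\omega)$ while $\hat\lambda\to E_+$, so the contour integral vanishes, giving $\alpha=0$ and asymptotic orthogonality of $\hat\v$ to $\v$.

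The main obstacle will not be the algebra but the two RMT inputs that must be layered on top of the pointwise deterministic equivalent of Theorem~\ref{theo:DE}: (i) an exact-separation / ``no eigenvalue outside the support'' statement guaranteeing that $\hat\lambda$ actually converges to $G(\max\{\rho,\gamma\})$ and that no spurious eigenvalue appears, so the contour $\Gamma(\hat\lambda)$ may be taken fixed and independent of $n$; and (ii) uniformity of the convergence $\tfrac1n\v^\T\Q(z)\v\to m(z)-\Theta(z)m^2(z)$ on that contour, which legitimizes passing the contour integral to the limit term by term. Both follow by strengthening the concentration estimates underlying Theorem~\ref{theo:DE}, and that is where essentially all of the work lies.
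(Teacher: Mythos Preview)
Your proposal is correct and follows essentially the same route as the paper: specialize Theorem~\ref{theo:DE} to $a_2=0$ so that $\Omega\equiv 0$, locate the spike via the pole $m(\lambda)=-c/(a_1(1+\rho))$ of $\Theta$, read off $\lambda=G(\rho)$ from the functional inverse $z(m)$, obtain the phase transition from the sign of $z'(m_0)$ (equivalently $F(\rho)$), and recover $\alpha$ via the Cauchy integral/residue of $\Theta(z)m^2(z)$ using $m'(\lambda)=1/z'(m_0)$. Your write-up is in fact more explicit than the paper's in two places---the sandwich $\sqrt c\le\gamma\le\sqrt{c\nu}/a_1$ and the exact-separation/uniformity inputs needed to justify the contour argument---both of which the paper only states or appeals to implicitly.
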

Here the statement on the eigenvector alignment $\alpha$ is in line with \citep[Conjecture~4.4]{cheng2013random}. 

Without loss of generality, we discuss only the case $a_1 > 0$.%
\footnote{Otherwise we could consider $-\K$ instead of $\K$ and the largest eigenvalue becomes the smallest one.} 
For $a_1 > 0$, both $F(x)$ and $G(x)$ are increasing functions on $x \in (\gamma, \infty)$.
Then, as expected, both $\lambda$ and $\alpha$ increase with the SNR $\rho$. 
Moreover, the phase transition point $\gamma$ is an increasing function of $c$ and of $\nu/a_1^2$. 
As such, the optimal choice of $f$ in the sense of the smallest phase transition point is the linear function $f(t) = t$ with $a_1 = \nu = 1$ and $\gamma = \sqrt c$.
This recovers the classical random matrix result \citep{baik2005phase}.


\section{Clustering performance of sparse and quantized operators}
\label{sec:appli}

We start, in Section~\ref{subsec:perfromance-spectral-clustering}, by providing a sharp asymptotic characterization of the clustering error rate and demonstrating the optimality of the linear function under \eqref{eq:mixture}. 
Then, in Section~\ref{subsec:compare-sparse}, we discuss the advantageous performance of the proposed selective sparsification approach (with $f_1$) versus the uniform or subsampling approach studied previously \citep{zarroukperformance}. 
Finally, in Section~\ref{subsec:optimal-design}, we derive the optimal truncation threshold $s_{\rm opt}$, for both quantized $f_2$ and binary $f_3$, so as to achieve an optimal performance-complexity trade-off for a given storage budget.

\subsection{Performance of spectral clustering}
\label{subsec:perfromance-spectral-clustering}

The technical results in Section~\ref{sec:main} provide conditions under which $\K$ admits an informative eigenvector $\hat\v$ that is non-trivially correlated with the class label vector $\v$ (and thus that is exploitable for spectral clustering) in the $n,p \to \infty$ limit. 
Since the exact (limiting) alignment $|\v^\T\hat\v|$ is known, along with an additional argument on the normal fluctuations of $\hat\v$, we have the following result for the performance of the spectral clustering method.
The proof is in Appendix~\ref{subsec:proof-prop-perf-clustering}.

\begin{Proposition}[Performance of spectral clustering]\label{prop:perf-clustering}
Let Assumption~\ref{ass:poly} hold, let $a_1 > 0, a_2 = 0$, and let $\hat{\mathcal C}_i = \sign([\hat \v]_i)$ be the estimate of the underlying class $\mathcal C_i$ of the datum $\x_i$, with the convention $\hat \v^\T \v \ge 0$, for $\hat \v$ the top eigenvector of $\K$. 
Then, the misclassification rate satisfies $\frac1n \sum_{i=1}^n \delta_{\hat{\mathcal C}_i \neq \mathcal C_i} \to \frac12 \erfc ( \sqrt{\alpha/(2- 2\alpha)} )$, almost surely, as $n,p \to \infty$, for $\alpha \in [0,1)$ defined in \eqref{eq:def-alpha}. 
\end{Proposition}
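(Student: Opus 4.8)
The plan is to upgrade the first-order alignment result of Corollary~\ref{coro:phase-transition} to a second-order (fluctuation) statement about the individual entries of the top eigenvector $\hat\v$, and then read the misclassification rate off as a Gaussian tail probability. Since $a_1>0$ and $a_2=0$, Corollary~\ref{coro:non-info-spike} rules out spurious spikes and Corollary~\ref{coro:phase-transition} guarantees that, when $\rho>\gamma$, the largest eigenvalue $\hat\lambda$ of $\K$ is isolated from the limiting support and is the informative one, with $|\hat\v^\T\v|^2/n\to\alpha>0$ almost surely; the complementary regime $\rho\le\gamma$ gives $\alpha=0$, and there the claimed limit $\tfrac12\erfc(0)=\tfrac12$ is exactly the error of a sign pattern asymptotically orthogonal to $\v$, covered by the same decomposition below with a vanishing signal term. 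I therefore focus on $\rho>\gamma$.

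Using $\|\hat\v\|=1$, write
\[
  \hat\v = \beta_n\,\frac{\v}{\sqrt n} + \hat\v_\perp, \qquad \beta_n \equiv \frac{\v^\T\hat\v}{\sqrt n}, \qquad \hat\v_\perp\perp\v,
\]
and note that by Corollary~\ref{coro:phase-transition} and the convention $\hat\v^\T\v\ge0$ one has $\beta_n\to\sqrt\alpha$ and $\|\hat\v_\perp\|^2=1-\beta_n^2\to1-\alpha$ almost surely. The crux is the following delocalization/CLT claim for the fluctuation: conditionally on the class of $\x_i$, $\sqrt n\,[\hat\v_\perp]_i\cd\NN(0,1-\alpha)$ for each fixed $i$, and the empirical measure $\frac1n\sum_{i=1}^n\delta_{\sqrt n[\hat\v_\perp]_i}$ converges weakly to $\NN(0,1-\alpha)$ almost surely. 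To prove it, I would Taylor-expand each entry $f(\x_i^\T\x_j/\sqrt p)$ of $\sqrt p\,\K$ around the pure-noise argument $\z_i^\T\z_j/\sqrt p$ and use Stein's identity $\EE[\xi f(\xi)]=\EE[f'(\xi)]=a_1$ to identify the signal part of $\K$ as a rank-$\le2$ perturbation with range $\mathrm{span}\{\v,\ \Z^\T\bmu\}$, where the entries of $\Z^\T\bmu$ are i.i.d.\ of variance $\|\bmu\|^2$ and hence asymptotically Gaussian. First-order eigenvector perturbation then gives, up to a coordinatewise-vanishing error, that $\hat\v$ is a fixed linear combination of $\v/\sqrt n$ and $\Q_-(\lambda)\,\Z^\T\bmu$, with $\Q_-$ the resolvent of the signal-free kernel at the isolated eigenvalue; a uniform delocalization bound $\max_i|\e_i^\T\Q_-(\lambda)\mathbf u|=O(n^{-1/2})$ for unit $\mathbf u$ (available since $\lambda$ lies outside the bulk) controls the remainder, and a martingale-difference/Lindeberg CLT over the independent entries of $\Z$ yields the Gaussian limit; matching $\|\hat\v_\perp\|^2\to1-\alpha$ fixes the variance, while the $\pm\bmu$ symmetry of \eqref{eq:mixture} makes it class-independent.

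Granting the claim, conclude as follows. For $\x_i\in\mathcal C_2$ (so $[\v]_i=+1$) one has $\sqrt n\,[\hat\v]_i=\beta_n+\sqrt n\,[\hat\v_\perp]_i$, hence $\hat{\mathcal C}_i=\sign([\hat\v]_i)\neq\mathcal C_i$ exactly when $\sqrt n\,[\hat\v_\perp]_i<-\beta_n$. Since $\beta_n\to\sqrt\alpha$ and the empirical distribution of $\{\sqrt n[\hat\v_\perp]_i\}_{i\in\mathcal C_2}$ tends to $\NN(0,1-\alpha)$ almost surely, the fraction of misclassified points within $\mathcal C_2$ converges almost surely to
\[
  \NN(0,1-\alpha)\big((-\infty,-\sqrt\alpha)\big) = \Pr\big[\NN(0,1)<-\sqrt{\alpha/(1-\alpha)}\,\big] = \tfrac12\erfc\big(\sqrt{\alpha/(2-2\alpha)}\,\big);
\]
the case $\x_i\in\mathcal C_1$ is identical by symmetry, and averaging over the two classes gives $\frac1n\sum_{i=1}^n\delta_{\hat{\mathcal C}_i\neq\mathcal C_i}\to\tfrac12\erfc(\sqrt{\alpha/(2-2\alpha)})$.

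The main obstacle is the middle step. Theorem~\ref{theo:DE} only controls bilinear forms $\mathbf a^\T(\Q-\bar\Q)\mathbf b$ against deterministic $\mathbf a,\mathbf b$, i.e.\ $O(1)$ quantities, whereas here one needs the $O(n^{-1/2})$ fluctuation of a single coordinate of $\hat\v$ together with its asymptotic Gaussianity and the precise variance $1-\alpha$. This requires (i) an explicit perturbative expansion isolating the (asymptotically) Gaussian signal-carrying vector $\Z^\T\bmu$ inside $\hat\v$; (ii) uniform delocalization control of the signal-free resolvent at the isolated eigenvalue, to bound all remainders coordinatewise and not merely in bilinear forms; and (iii) a CLT for the resulting linear statistic of the independent entries of $\Z$. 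Verifying that the limiting fluctuation variance equals exactly $1-\alpha$ and is the same for both classes is where the $\pm\bmu$ structure of the model is essential.
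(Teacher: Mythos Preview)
Your approach is essentially the same as the paper's: both decompose $\hat\v=\sqrt\alpha\,\v/\sqrt n+\sigma\w$ with $\w\perp\v$, read off $\sigma^2=1-\alpha$ from $\|\hat\v\|=1$, and then invoke asymptotic Gaussianity of the coordinatewise fluctuations $\sqrt n\,[\w]_i$ to obtain the $\tfrac12\erfc(\sqrt{\alpha/(2-2\alpha)})$ tail. The only difference is that the paper's proof is much terser on the crucial middle step, simply citing an external result on normal fluctuations of spiked eigenvectors, whereas you sketch an independent route (perturbative expansion isolating $\Z^\T\bmu$, delocalization of the signal-free resolvent at the isolated eigenvalue, Lindeberg CLT) and honestly flag it as the main obstacle; your outline is a reasonable program for reproving what the paper imports as a black box.
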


Despite being asymptotic results valid in the $n,p \to \infty$ limit, the results of Proposition~\ref{prop:perf-clustering} and Corollary~\ref{coro:phase-transition} closely match empirical results for $n,p$ in the hundreds.
This is illustrated in Figure~\ref{fig:valid-theory}.
Proposition~\ref{prop:perf-clustering} further confirms that the misclassification rate, being a decreasing function of $\alpha$, increases with $\nu/a_1^2$ (for $c$ and $\rho$ fixed). This leads to the following remark.

\begin{Remark}[Optimality of linear function]\label{rem:optimal-linear}
Since both the phase transition point $\gamma$ and the misclassification rate grow with $\nu/a_1^2$, the linear function $f(t) = t$ with the minimal $\nu/a_1^2 = 1$ is \emph{optimal} in the sense of (\textbf{i}) achieving the \emph{smallest} SNR $\rho$ or the \emph{largest} ratio $c = \lim p/n$ (i.e., the fewest samples $n$) necessary to observe an informative (isolated) eigenvector, and (\textbf{ii}) upon existence of such an isolated eigenvector, reaching the \emph{lowest} classification error rate.
\end{Remark}
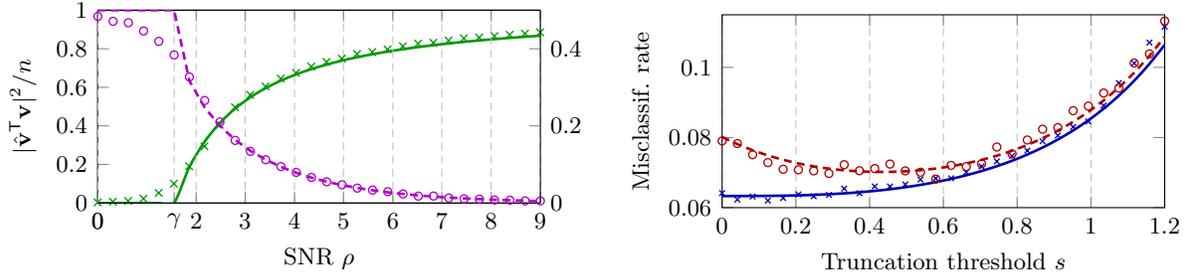
\begin{figure}[t]
  \centering
  \begin{tabular}{cc}
  \begin{tikzpicture}[font=\footnotesize]
    \renewcommand{\axisdefaulttryminticks}{4} 
    \pgfplotsset{every major grid/.append style={densely dashed}}          
    \tikzstyle{every axis y label}+=[yshift=-10pt] 
    \tikzstyle{every axis x label}+=[yshift=5pt]
    \pgfplotsset{every axis legend/.style={cells={anchor=west},fill=white,at={(0,1)}, anchor=north west, font=\footnotesize}}
    \begin{axis}[
      axis y line*=left,
      width=.45\linewidth,
      height=.25\linewidth,
      xmin=0,xmax=9,
      ymin=0,ymax=1,
        xtick={0,1.5567,3,5,7,9},
        xticklabels = {$0$, $\gamma$, $3$, $5$, $7$, $9$ },
        grid=major,
        ymajorgrids=false,
        scaled ticks=true,
        xlabel={ SNR $\rho$ },
        ylabel={ $ |\hat \v^\T \v|^2/n$ }
        ]
        \addplot+[only marks,mark=x,GREEN,mark size=2pt,line width=.5pt] plot coordinates{
        (0.000000, 0.003090)(0.310345, 0.007035)(0.620690, 0.010834)(0.931034, 0.022900)(1.241379, 0.052328)(1.551724, 0.100142)(1.862069, 0.191034)(2.172414, 0.293920)(2.482759, 0.405425)(2.793103, 0.496669)(3.103448, 0.560843)(3.413793, 0.604538)(3.724138, 0.645304)(4.034483, 0.674808)(4.344828, 0.707878)(4.655172, 0.731641)(4.965517, 0.748670)(5.275862, 0.772807)(5.586207, 0.783123)(5.896552, 0.796669)(6.206897, 0.812974)(6.517241, 0.826604)(6.827586, 0.831143)(7.137931, 0.842636)(7.448276, 0.853875)(7.758621, 0.858197)(8.068966, 0.866134)(8.379310, 0.872417)(8.689655, 0.878486)(9.000000, 0.885028)
        };
        \addplot[smooth,GREEN,line width=1pt] plot coordinates{
        (0.000000, 0.000000)(0.310345, 0.000000)(0.620690, 0.000000)(0.931034, 0.000000)(1.241379, 0.000000)(1.551724, 0.000000)(1.862069, 0.184653)(2.172414, 0.316907)(2.482759, 0.414480)(2.793103, 0.489160)(3.103448, 0.547978)(3.413793, 0.595382)(3.724138, 0.634317)(4.034483, 0.666810)(4.344828, 0.694294)(4.655172, 0.717816)(4.965517, 0.738151)(5.275862, 0.755890)(5.586207, 0.771486)(5.896552, 0.785297)(6.206897, 0.797604)(6.517241, 0.808635)(6.827586, 0.818572)(7.137931, 0.827568)(7.448276, 0.835747)(7.758621, 0.843213)(8.068966, 0.850053)(8.379310, 0.856341)(8.689655, 0.862140)(9.000000, 0.867503)
        };
        \end{axis}
        \begin{axis}[
      axis y line*=right,
      width=.45\linewidth,
      height=.25\linewidth,
      xmin=0,xmax=9,
      ymin=0,ymax=0.5,
      xtick={},
        grid=major,
        ymajorgrids=false,
        scaled ticks=true,
        ylabel={  }
        ]
        \addplot+[only marks,mark=o,PURPLE,mark size=1.5pt,line width=.5pt] plot coordinates{
        (0.000000, 0.483867)(0.310345, 0.471094)(0.620690, 0.467500)(0.931034, 0.446211)(1.241379, 0.419336)(1.551724, 0.384141)(1.862069, 0.326836)(2.172414, 0.265937)(2.482759, 0.210273)(2.793103, 0.162422)(3.103448, 0.133984)(3.413793, 0.112422)(3.724138, 0.094570)(4.034483, 0.079531)(4.344828, 0.066289)(4.655172, 0.055117)(4.965517, 0.047383)(5.275862, 0.038750)(5.586207, 0.033633)(5.896552, 0.029492)(6.206897, 0.023594)(6.517241, 0.017930)(6.827586, 0.018164)(7.137931, 0.013125)(7.448276, 0.010977)(7.758621, 0.010195)(8.068966, 0.007773)(8.379310, 0.007422)(8.689655, 0.005586)(9.000000, 0.005469)
        };
        \addplot[densely dashed,PURPLE,line width=1pt] plot coordinates{
        (0.000000, 0.500000)(0.310345, 0.500000)(0.620690, 0.500000)(0.931034, 0.500000)(1.241379, 0.500000)(1.551724, 0.500000)(1.862069, 0.317076)(2.172414, 0.247897)(2.482759, 0.200073)(2.793103, 0.163901)(3.103448, 0.135441)(3.413793, 0.112557)(3.724138, 0.093912)(4.034483, 0.078583)(4.344828, 0.065902)(4.655172, 0.055365)(4.965517, 0.046577)(5.275862, 0.039230)(5.586207, 0.033074)(5.896552, 0.027907)(6.206897, 0.023564)(6.517241, 0.019909)(6.827586, 0.016830)(7.137931, 0.014235)(7.448276, 0.012045)(7.758621, 0.010196)(8.068966, 0.008633)(8.379310, 0.007313)(8.689655, 0.006197)(9.000000, 0.005252)
        };
        \end{axis}
  \end{tikzpicture}
  &
  \begin{tikzpicture}[font=\footnotesize]
    \renewcommand{\axisdefaulttryminticks}{4} 
    \pgfplotsset{every major grid/.append style={densely dashed}}          
    \tikzstyle{every axis y label}+=[yshift=-10pt] 
    \tikzstyle{every axis x label}+=[yshift=5pt]
    \pgfplotsset{every axis legend/.style={cells={anchor=west},fill=white,at={(0.98,0.9)}, anchor=north east, font=\footnotesize}}
    \begin{axis}[
      width=.45\linewidth,
      height=.25\linewidth,
      xmin=0,xmax=1.2,
      ymin=0.06,ymax=0.115,
      ytick={0.06,0.08,0.1},
      yticklabels={$0.06$, $0.08$, $0.1$},
        grid=major,
        ymajorgrids=false,
        scaled ticks=false,
        xlabel={ Truncation threshold $s$ },
        ylabel={ Misclassif.\@ rate }
        ]
        \addplot+[only marks,mark=x,BLUE,mark size=1.5pt,line width=.5pt] plot coordinates{
        (0.000000, 0.064141)(0.041379, 0.062297)(0.082759, 0.063156)(0.124138, 0.062000)(0.165517, 0.062734)(0.206897, 0.063781)(0.248276, 0.063250)(0.289655, 0.063609)(0.331034, 0.065437)(0.372414, 0.064109)(0.413793, 0.066047)(0.455172, 0.066031)(0.496552, 0.066609)(0.537931, 0.068016)(0.579310, 0.068359)(0.620690, 0.068375)(0.662069, 0.070109)(0.703448, 0.071750)(0.744828, 0.073188)(0.786207, 0.074719)(0.827586, 0.076281)(0.868966, 0.078937)(0.910345, 0.080578)(0.951724, 0.083031)(0.993103, 0.084578)(1.034483, 0.089125)(1.075862, 0.095547)(1.117241, 0.101172)(1.158621, 0.107063)(1.200000, 0.111594)
        };
        \addplot[smooth,BLUE,line width=1pt] plot coordinates{
        (0.000000, 0.063315)(0.041379, 0.063317)(0.082759, 0.063328)(0.124138, 0.063357)(0.165517, 0.063413)(0.206897, 0.063504)(0.248276, 0.063640)(0.289655, 0.063827)(0.331034, 0.064075)(0.372414, 0.064390)(0.413793, 0.064781)(0.455172, 0.065256)(0.496552, 0.065825)(0.537931, 0.066496)(0.579310, 0.067281)(0.620690, 0.068191)(0.662069, 0.069240)(0.703448, 0.070441)(0.744828, 0.071813)(0.786207, 0.073373)(0.827586, 0.075146)(0.868966, 0.077155)(0.910345, 0.079431)(0.951724, 0.082006)(0.993103, 0.084922)(1.034483, 0.088221)(1.075862, 0.091958)(1.117241, 0.096193)(1.158621, 0.100996)(1.200000, 0.106448)
        };
        \addplot+[only marks,mark=o,RED,mark size=1.5pt,line width=.5pt] plot coordinates{
        (0.000000, 0.079031)(0.041379, 0.078284)(0.082759, 0.075094)(0.124138, 0.072844)(0.165517, 0.07094)(0.206897, 0.070781)(0.248276, 0.070484)(0.289655, 0.069750)(0.331034, 0.072250)(0.372414, 0.070531)(0.413793, 0.071141)(0.455172, 0.072422)(0.496552, 0.070484)(0.537931, 0.070094)(0.579310, 0.068156)(0.620690, 0.072047)(0.662069, 0.071703)(0.703448, 0.072516)(0.744828, 0.077281)(0.786207, 0.075328)(0.827586, 0.079297)(0.868966, 0.082375)(0.910345, 0.082828)(0.951724, 0.087641)(0.993103, 0.088969)(1.034483, 0.092688)(1.075862, 0.094109)(1.117241, 0.101250)(1.158621, 0.103859)(1.200000, 0.113219)
        };
        \addplot[densely dashed,RED,line width=1pt] plot coordinates{
        (0.000000, 0.080145)(0.041379, 0.078106)(0.082759, 0.076360)(0.124138, 0.074877)(0.165517, 0.073630)(0.206897, 0.072599)(0.248276, 0.071766)(0.289655, 0.071118)(0.331034, 0.070643)(0.372414, 0.070333)(0.413793, 0.070183)(0.455172, 0.070189)(0.496552, 0.070351)(0.537931, 0.070670)(0.579310, 0.071148)(0.620690, 0.071793)(0.662069, 0.072611)(0.703448, 0.073612)(0.744828, 0.074812)(0.786207, 0.076224)(0.827586, 0.077869)(0.868966, 0.079770)(0.910345, 0.081955)(0.951724, 0.084455)(0.993103, 0.087309)(1.034483, 0.090561)(1.075862, 0.094262)(1.117241, 0.098472)(1.158621, 0.103261)(1.200000, 0.108710)
        };
        \end{axis}
  \end{tikzpicture}
\end{tabular}
\caption{ \textbf{(Left)} Empirical alignment $|\hat \v^\T \v|^2/n$ ({\GREEN \textbf{green crosses}}) and misclassification rate ({\PURPLE \textbf{purple circles}}) in markers versus their limiting behaviors in lines, for $f(t) = \sign(t)$, as a function of SNR $\rho$. \textbf{(Right)} Misclassification rate as a function of the truncation thresholds $s$ of sparse $f_1$ ({\BLUE \textbf{blue crosses}}) and binary $f_3$ ({\RED \textbf{red circles}}) with $\rho =4$. Here, $p = 512$, $n = 256$, $\bmu \propto \NN(\mathbf{0}, \I_p) $, $\v = [-\mathbf{1}_{n/2};~\mathbf{1}_{n/2}]$ on Gaussian data, and we averaged over $250$ runs.}
\label{fig:valid-theory}
\end{figure}

According to Remark~\ref{rem:optimal-linear}, any $f$ with $\nu/a_1^2 > 1$ induces performance degeneration (compared to the optimal linear function). 
However, by choosing $f$ to be one of the functions $f_1,f_2,f_3$ defined in \eqref{eq:def-sparse}-\eqref{eq:def-binary}, one may trade clustering performance optimality for reduced storage size and computational time. 
Figure~\ref{fig:perf-and-size} displays the decay in clustering performance and the gain in storage size of the sparse $f_1$, quantized $f_2$, and binary $f_3$, when compared to the optimal but ``dense'' linear function. 
As $s \to 0$, both performance and storage size under $f_1$ naturally approach those of the linear function. 
This is unlike $f_2$ or $f_3$, which approach the sign function. 
For $s \gg 1$, the performance under sparse $f_1$ becomes comparable to that of binary $f_3$ (which is significantly worse than quantized but non-sparse $f_2$) but for a larger storage cost. In particular, using $f_2$ or $f_3$ in the setting of Figure~\ref{fig:perf-and-size}, one can reduce the storage size by a factor of $32 $ or $ 64$ (IEEE standard single- or double-precision floating-point format), at the price of a performance drop less than $1\%$.


\subsection{Comparison to uniform sparsification and subsampling}
\label{subsec:compare-sparse}

From Figure~\ref{fig:perf-and-size}, we see that the classification error and storage gain of the sparse $f_1$ increase monotonically, as the truncation threshold $s$ grows. For $f_1$, the number of non-zero entries of $\K$ is approximately $\erfc(s) n^2$ with truncation threshold $s$.
Thus, the \emph{sparsity level}
\begin{equation}
  \varepsilon_{\selec} = \erfc(s) \in [0,1]
\end{equation}
can be defined and compared to uniform sparsification or subsampling approaches.

\begin{figure}[t]
  \centering
  \begin{tabular}{ccc}
  \begin{tikzpicture}[font=\footnotesize]
    \renewcommand{\axisdefaulttryminticks}{4} 
    \pgfplotsset{every major grid/.append style={densely dashed}}          
    \tikzstyle{every axis y label}+=[yshift=-10pt] 
    \tikzstyle{every axis x label}+=[yshift=5pt]
    \pgfplotsset{every axis legend/.style={cells={anchor=west},fill=white,at={(0,1)}, anchor=north west, font=\footnotesize}}
    \begin{axis}[
      width=.33\linewidth,
      height=.23\linewidth,
      xmin=0,xmax=1.8,
      ymin=0.11,ymax=0.2,
        grid=major,
        ymajorgrids=false,
        scaled ticks=true,
        xlabel={ Truncation threshold $s$ },
        ylabel={ Misclassif.\@ rate }
        ]
        \addplot[densely dashed,BLUE,line width=1pt] plot coordinates{
        (0.000000, 0.118362)(0.036735, 0.118362)(0.073469, 0.118367)(0.110204, 0.118380)(0.146939, 0.118404)(0.183673, 0.118444)(0.220408, 0.118503)(0.257143, 0.118584)(0.293878, 0.118692)(0.330612, 0.118829)(0.367347, 0.118998)(0.404082, 0.119204)(0.440816, 0.119450)(0.477551, 0.119739)(0.514286, 0.120075)(0.551020, 0.120463)(0.587755, 0.120906)(0.624490, 0.121410)(0.661224, 0.121980)(0.697959, 0.122622)(0.734694, 0.123343)(0.771429, 0.124150)(0.808163, 0.125052)(0.844898, 0.126058)(0.881633, 0.127178)(0.918367, 0.128425)(0.955102, 0.129813)(0.991837, 0.131357)(1.028571, 0.133074)(1.065306, 0.134986)(1.102041, 0.137115)(1.138776, 0.139488)(1.175510, 0.142134)(1.212245, 0.145088)(1.248980, 0.148391)(1.285714, 0.152088)(1.322449, 0.156231)(1.359184, 0.160881)(1.395918, 0.166110)(1.432653, 0.171999)(1.469388, 0.178643)(1.506122, 0.186157)(1.542857, 0.194671)(1.579592, 0.204347)(1.616327, 0.215377)(1.653061, 0.228001)(1.689796, 0.242526)(1.726531, 0.259360)(1.763265, 0.279081)(1.800000, 0.302577)
        };
        \addplot[dashed,GREEN,line width=1pt] plot coordinates{
        (0.000000, 0.128568)(0.036735, 0.127731)(0.073469, 0.126966)(0.110204, 0.126268)(0.146939, 0.125633)(0.183673, 0.125057)(0.220408, 0.124537)(0.257143, 0.124071)(0.293878, 0.123654)(0.330612, 0.123284)(0.367347, 0.122959)(0.404082, 0.122677)(0.440816, 0.122435)(0.477551, 0.122231)(0.514286, 0.122063)(0.551020, 0.121931)(0.587755, 0.121831)(0.624490, 0.121764)(0.661224, 0.121727)(0.697959, 0.121718)(0.734694, 0.121737)(0.771429, 0.121783)(0.808163, 0.121853)(0.844898, 0.121947)(0.881633, 0.122064)(0.918367, 0.122201)(0.955102, 0.122358)(0.991837, 0.122532)(1.028571, 0.122723)(1.065306, 0.122928)(1.102041, 0.123146)(1.138776, 0.123375)(1.175510, 0.123613)(1.212245, 0.123858)(1.248980, 0.124108)(1.285714, 0.124361)(1.322449, 0.124615)(1.359184, 0.124869)(1.395918, 0.125120)(1.432653, 0.125366)(1.469388, 0.125607)(1.506122, 0.125841)(1.542857, 0.126066)(1.579592, 0.126282)(1.616327, 0.126487)(1.653061, 0.126681)(1.689796, 0.126864)(1.726531, 0.127035)(1.763265, 0.127194)(1.800000, 0.127341)
        };
        \addplot[smooth,RED,line width=1pt] plot coordinates{
        (0.000000, 0.128568)(0.036735, 0.127476)(0.073469, 0.126522)(0.110204, 0.125692)(0.146939, 0.124977)(0.183673, 0.124368)(0.220408, 0.123856)(0.257143, 0.123436)(0.293878, 0.123102)(0.330612, 0.122850)(0.367347, 0.122677)(0.404082, 0.122581)(0.440816, 0.122560)(0.477551, 0.122613)(0.514286, 0.122742)(0.551020, 0.122945)(0.587755, 0.123226)(0.624490, 0.123586)(0.661224, 0.124029)(0.697959, 0.124559)(0.734694, 0.125180)(0.771429, 0.125900)(0.808163, 0.126725)(0.844898, 0.127663)(0.881633, 0.128725)(0.918367, 0.129922)(0.955102, 0.131266)(0.991837, 0.132773)(1.028571, 0.134461)(1.065306, 0.136348)(1.102041, 0.138458)(1.138776, 0.140817)(1.175510, 0.143455)(1.212245, 0.146407)(1.248980, 0.149711)(1.285714, 0.153414)(1.322449, 0.157569)(1.359184, 0.162237)(1.395918, 0.167488)(1.432653, 0.173405)(1.469388, 0.180084)(1.506122, 0.187638)(1.542857, 0.196201)(1.579592, 0.205934)(1.616327, 0.217032)(1.653061, 0.229738)(1.689796, 0.244363)(1.726531, 0.261323)(1.763265, 0.281209)(1.800000, 0.304939)
        };
        \addplot[densely dotted,black,line width=1pt] plot coordinates{
        (0,0.1184)(2,0.1184)
        };
        \end{axis}
  \end{tikzpicture}
  &
  \begin{tikzpicture}[font=\footnotesize]
    \renewcommand{\axisdefaulttryminticks}{4} 
    \pgfplotsset{every major grid/.append style={densely dashed}}          
    \tikzstyle{every axis y label}+=[yshift=-10pt] 
    \tikzstyle{every axis x label}+=[yshift=5pt]
    \pgfplotsset{every axis legend/.style={cells={anchor=west},fill=white,at={(0,1)}, anchor=north west, font=\footnotesize}}
    \begin{axis}[
      width=.33\linewidth,
      height=.23\linewidth,
      xmin=0,xmax=1.2,
      ymin=0.115,ymax=0.145,
        xtick={0,0.433,0.6857,1},
        grid=major,
        ymajorgrids=false,
        scaled ticks=true,
        xlabel={ Truncation threshold $s$ },
        ylabel={ }
        ]
        \addplot[densely dashed,BLUE,line width=1.5pt] plot coordinates{
        (0.000000, 0.118362)(0.024490, 0.118362)(0.048980, 0.118363)(0.073469, 0.118367)(0.097959, 0.118374)(0.122449, 0.118386)(0.146939, 0.118404)(0.171429, 0.118429)(0.195918, 0.118461)(0.220408, 0.118503)(0.244898, 0.118554)(0.269388, 0.118617)(0.293878, 0.118692)(0.318367, 0.118779)(0.342857, 0.118881)(0.367347, 0.118998)(0.391837, 0.119131)(0.416327, 0.119282)(0.440816, 0.119450)(0.465306, 0.119638)(0.489796, 0.119846)(0.514286, 0.120075)(0.538776, 0.120328)(0.563265, 0.120604)(0.587755, 0.120906)(0.612245, 0.121235)(0.636735, 0.121593)(0.661224, 0.121980)(0.685714, 0.122400)(0.710204, 0.122854)(0.734694, 0.123343)(0.759184, 0.123871)(0.783673, 0.124440)(0.808163, 0.125052)(0.832653, 0.125710)(0.857143, 0.126418)(0.881633, 0.127178)(0.906122, 0.127995)(0.930612, 0.128871)(0.955102, 0.129813)(0.979592, 0.130824)(1.004082, 0.131909)(1.028571, 0.133074)(1.053061, 0.134326)(1.077551, 0.135670)(1.102041, 0.137115)(1.126531, 0.138668)(1.151020, 0.140338)(1.175510, 0.142134)(1.200000, 0.144067)
        };
        \addplot[dashed,GREEN,line width=1.5pt] plot coordinates{
        (0.000000, 0.128568)(0.024490, 0.128002)(0.048980, 0.127468)(0.073469, 0.126966)(0.097959, 0.126493)(0.122449, 0.126049)(0.146939, 0.125633)(0.171429, 0.125243)(0.195918, 0.124878)(0.220408, 0.124537)(0.244898, 0.124221)(0.269388, 0.123926)(0.293878, 0.123654)(0.318367, 0.123402)(0.342857, 0.123171)(0.367347, 0.122959)(0.391837, 0.122766)(0.416327, 0.122592)(0.440816, 0.122435)(0.465306, 0.122295)(0.489796, 0.122171)(0.514286, 0.122063)(0.538776, 0.121971)(0.563265, 0.121894)(0.587755, 0.121831)(0.612245, 0.121783)(0.636735, 0.121748)(0.661224, 0.121727)(0.685714, 0.121718)(0.710204, 0.121722)(0.734694, 0.121737)(0.759184, 0.121765)(0.783673, 0.121804)(0.808163, 0.121853)(0.832653, 0.121913)(0.857143, 0.121984)(0.881633, 0.122064)(0.906122, 0.122153)(0.930612, 0.122251)(0.955102, 0.122358)(0.979592, 0.122472)(1.004082, 0.122594)(1.028571, 0.122723)(1.053061, 0.122858)(1.077551, 0.122999)(1.102041, 0.123146)(1.126531, 0.123297)(1.151020, 0.123453)(1.175510, 0.123613)(1.200000, 0.123775)
        };
        \addplot[smooth,RED,line width=1.5pt] plot coordinates{
        (0.000000, 0.128568)(0.024490, 0.127824)(0.048980, 0.127143)(0.073469, 0.126522)(0.097959, 0.125956)(0.122449, 0.125442)(0.146939, 0.124977)(0.171429, 0.124560)(0.195918, 0.124187)(0.220408, 0.123856)(0.244898, 0.123566)(0.269388, 0.123315)(0.293878, 0.123102)(0.318367, 0.122925)(0.342857, 0.122784)(0.367347, 0.122677)(0.391837, 0.122604)(0.416327, 0.122566)(0.440816, 0.122560)(0.465306, 0.122587)(0.489796, 0.122648)(0.514286, 0.122742)(0.538776, 0.122869)(0.563265, 0.123030)(0.587755, 0.123226)(0.612245, 0.123457)(0.636735, 0.123724)(0.661224, 0.124029)(0.685714, 0.124372)(0.710204, 0.124755)(0.734694, 0.125180)(0.759184, 0.125649)(0.783673, 0.126163)(0.808163, 0.126725)(0.832653, 0.127337)(0.857143, 0.128003)(0.881633, 0.128725)(0.906122, 0.129507)(0.930612, 0.130353)(0.955102, 0.131266)(0.979592, 0.132252)(1.004082, 0.133315)(1.028571, 0.134461)(1.053061, 0.135696)(1.077551, 0.137026)(1.102041, 0.138458)(1.126531, 0.140002)(1.151020, 0.141664)(1.175510, 0.143455)(1.200000, 0.145386)
        };
        \addplot[densely dotted,black,line width=1.5pt] plot coordinates{
        (0,0.1184)(1.2,0.1184)
        };
        \end{axis}
  \end{tikzpicture}
  &
  \begin{tikzpicture}[font=\footnotesize]
    \renewcommand{\axisdefaulttryminticks}{4} 
    \pgfplotsset{every major grid/.append style={densely dashed}}          
    \tikzstyle{every axis y label}+=[yshift=-10pt] 
    \tikzstyle{every axis x label}+=[yshift=5pt]
    \pgfplotsset{every axis legend/.style={cells={anchor=west},fill=white,at={(0.98,0.9)}, anchor=north east, font=\footnotesize}}
    \begin{axis}[
      width=.33\linewidth,
      height=.23\linewidth,
      xmin=0,xmax=1.2,
      ymin=0.1,ymax=10,
        grid=major,
        scaled ticks=true,
        ymajorgrids=false,
        ymode=log,
        log basis y={10},
        xlabel={ Truncation threshold $s$ },
        ylabel={Storage size}
        ]
        \addplot[densely dashed,BLUE,line width=1.5pt] plot coordinates{
        (0.000000, 8.380544)(0.041379, 7.992637)(0.082759, 7.617740)(0.124138, 7.243772)(0.165517, 6.868717)(0.206897, 6.486827)(0.248276, 6.123096)(0.289655, 5.767194)(0.331034, 5.426269)(0.372414, 5.079738)(0.413793, 4.757713)(0.455172, 4.434476)(0.496552, 4.140660)(0.537931, 3.838023)(0.579310, 3.564052)(0.620690, 3.290239)(0.662069, 3.047941)(0.703448, 2.807926)(0.744828, 2.580638)(0.786207, 2.359697)(0.827586, 2.154585)(0.868966, 1.947173)(0.910345, 1.786397)(0.951724, 1.622676)(0.993103, 1.473838)(1.034483, 1.344688)(1.075862, 1.212278)(1.117241, 1.107981)(1.158621, 0.985305)(1.200000, 0.890757)
        };
        \addplot[dashed,GREEN,line width=1.5pt] plot coordinates{
        (0,0.25)(1.2,0.25)
        };
        \addplot[smooth,RED,line width=1.5pt] plot coordinates{
        (0,0.125)(1.2,0.125)
        };
        \addplot[densely dotted,black,line width=1.5pt] plot coordinates{
        (0,8.388608)(1.2,8.388608)
        };
        \end{axis}
  \end{tikzpicture}
\end{tabular}
\caption{ Clustering performance (\textbf{left}, a zoom-in in \textbf{middle}) and storage size (MB) \textbf{(right)} of $f_1$ ({\BLUE \textbf{blue}}), $f_2$ with $M  = 2$ ({\GREEN \textbf{green}}), $f_3$ ({\RED \textbf{red}}), and linear $f(t) = t$ (\textbf{black}), versus the truncation threshold $s$, for SNR $ \rho = 2$, $c = 1/2$ and $n = 10^3$, with $64$ bits per entry for non-quantized matrices. }
\label{fig:perf-and-size}
\end{figure}
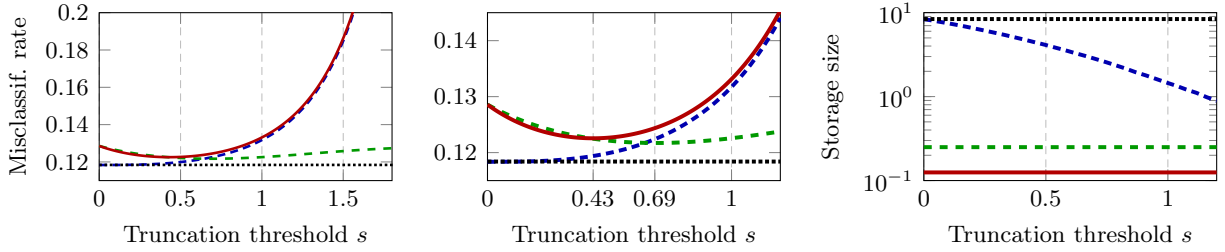

Recall (from the introduction) that the cost of spectral clustering may be reduced by subsampling the whole dataset in $1/\varepsilon_{\sub}$ chunks of $n \varepsilon_{\sub}$ data vectors each.
Alternatively, as investigated recently \citep{zarroukperformance}, the cost can be reduced by uniformly zeroing-out $\X^\T\X$ with a symmetric mask matrix $\B$, with $\B_{ij} \sim {\rm Bern}(\varepsilon_{\unif})$ for $1 \le i < j \le n$ and $\B_{ii} = 0$.
On average, a proportion $1- \varepsilon_{\unif}$ of the entries of $\X^\T \X$ is set to zero, so that $\varepsilon_{\unif} \in [0,1]$ controls the sparsity level (and thus the storage size as well as computational time). 
Similar to our Corollary~\ref{coro:phase-transition}, the associated eigenvector alignment (and thus the clustering accuracy via Proposition~\ref{prop:perf-clustering}) in both cases can be derived. 
Specifically, taking $\varepsilon_{\unif} = a_1^2/\nu$ in \citep[Theorem~3.2]{zarroukperformance}, we obtain the same $F(x)$ as in our Corollary~\ref{coro:phase-transition} and therefore the same phase transition point $\gamma$ and eigenvector alignment $\alpha$. 
As for subsampling, its performance can be obtained by letting $a_1^2=\nu$ and changing $c$ into $c/\varepsilon_{\sub}$ in the formulas of $F(x)$ and $G(x)$ of our Corollary~\ref{coro:phase-transition}.
Consequently, the same clustering performance is achieved by either uniform or selective sparsification (with $f_1$) with
\begin{equation}
  \varepsilon_{\unif} = a_1^2/\nu = \erfc(s) + 2 s e^{-s^2}/ \sqrt \pi > \erfc(s) = \varepsilon_{\selec},
\end{equation}
and our proposed selective sparsification approach thus leads to \emph{strictly sparser} matrices. 
Moreover, their ratio 
\begin{equation}
  r(s) = \erfc(s)/ ( \erfc(s) + 2 s e^{-s^2} /\sqrt{\pi})
\end{equation}
is a decreasing function of $s$ and approximates as $r(s) \sim (1 + s^2)^{-1}/2 $ for $s \gg 1$,\footnote{We use here the asymptotic expansion $\erfc(s) = \frac{e^{-s^2}}{s \sqrt{\pi}} \big[ 1 + \sum_{k=1}^\infty (-1)^k \cdot \frac{1 \cdot 3 \cdots (2k-1) }{ (2s^2)^k } \big]$.} meaning that the gain in storage size and computational time is more significant as the matrix becomes sparser. 
This is depicted in Figure~\ref{fig:unif-adapt-compare}-(left).

\smallskip

Fixing $\alpha$ in Corollary~\ref{coro:phase-transition} to achieve a given clustering performance level (via Proposition~\ref{prop:perf-clustering}), one may then retrieve ``equi-performance'' curves in the $(\varepsilon,\rho)$-plane, for uniform sparsification, selective sparsification, and subsampling. 
This is displayed in Figure~\ref{fig:unif-adapt-compare}-(right), showing that a dramatic performance gain is achieved by the proposed selective sparsification. 
Besides, here for $c=2$, as much as $80\%$ sparsity may be obtained with selective sparsification at constant SNR $\rho$, with virtually no performance loss (red curves are almost flat on $\varepsilon\in[0.2,1]$).
This fails to hold for uniform sparsification (\citet{zarroukperformance} obtain such a result only when $c\lesssim 0.1$) or subsampling.

\begin{figure}[t]
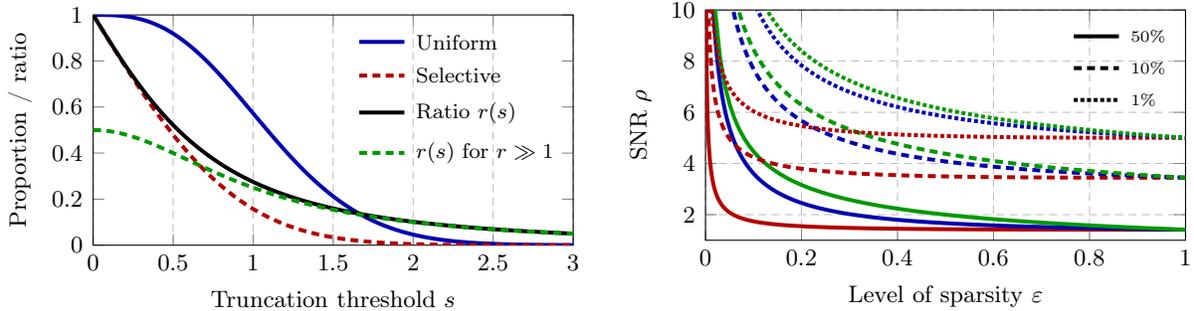

\centering

\caption{ \textbf{(Left)} Proportion of non-zero entries with uniform versus selective sparsification $f_1$ and their ratio, as a function of the truncation threshold $s$. \textbf{(Right)} Comparison of $1\%, 10\%$ classification error and phase transition (i.e., $50\%$ error) curves between subsampling ({\GREEN \textbf{green}}), uniform ({\BLUE \textbf{blue}}) and selective sparsification $f_1$ ({\RED \textbf{red}}), as a function of sparsity level $\varepsilon$ and SNR $\rho$, for $c= 2$. }
\label{fig:unif-adapt-compare}
\end{figure}
\subsection{Optimally quantized and binarized matrices}\label{subsec:optimal-design}

From Figure~\ref{fig:perf-and-size}, we see that the classification error of the quantized $f_2(M;s;t)$ and binarized $f_3(s;t)$ do \emph{not} increase monotonically with the truncation threshold $s$. It can be shown (and visually confirmed in Figure~\ref{fig:perf-and-size}) that, for a given $M \ge 2$, the ratio $\nu/a_1^2$ of both $f_2$ and $f_3$ is convex in $s$ and has a unique minimum. 
This leads to the following optimal design result for $f_2$ and $f_3$, respectively, the proof of which is straightforward. %

\begin{Proposition}[Optimal design of quantized and binarized functions]\label{prop:optimal-s}
Under the assumptions and notations of Proposition~\ref{prop:perf-clustering}, the classification error rate is minimized at $s=s_{\rm opt}$ with
\begin{enumerate}[leftmargin=\parindent,align=left,labelwidth=\parindent,labelsep=2pt]
  \item $s_{\rm opt}$ the unique solution to $a_1(s_{\rm opt}) \nu'(s_{\rm opt}) = 2 a_1'(s_{\rm opt}) \nu(s_{\rm opt})$ for quantized $f_2$, with $a_1'(s)$ and $\nu'(s)$ the corresponding derivatives with respect to $s$ in Figure~\ref{fig:coeff}-(right); and
  \item $s_{\rm opt} = \exp(-s_{\rm opt}^2)/(2 \sqrt{\pi} \erfc( s_{\rm opt}))\approx 0.43$ for binary $f_3$, with $\nu/a_1^2 \approx 1.24$ and level of sparsity $\varepsilon \approx 0.54$. 
\end{enumerate}
\end{Proposition}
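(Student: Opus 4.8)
The plan is to reduce the problem to minimizing the scalar ratio $s \mapsto \nu(s)/a_1^2(s)$ and then to solve the resulting first-order condition. By Proposition~\ref{prop:perf-clustering} the asymptotic misclassification rate equals $\tfrac12\erfc\big(\sqrt{\alpha/(2-2\alpha)}\big)$, which is strictly decreasing in $\alpha\in[0,1)$, so it is equivalent to maximize the eigenvector alignment $\alpha$ of Corollary~\ref{coro:phase-transition}. Both $\alpha$ and the phase-transition threshold $\gamma$ there depend on $f$ only through the ratio $\nu/a_1^2$: since $\partial F(x)/\partial(\nu/a_1^2) = -c x^2 < 0$ for $x > 0$, the value $\alpha = F(\rho)/(\rho(1+\rho)^3)$ is, for fixed $c$ and $\rho$, decreasing in $\nu/a_1^2$ on the supercritical regime, while $\gamma$ is increasing in $\nu/a_1^2$ (so the supercritical regime itself enlarges as $\nu/a_1^2$ shrinks). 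Hence the classification error is minimized exactly where $g(s) \equiv \nu(s)/a_1^2(s)$ is minimized.

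Next I would differentiate. Writing $g(s) = \nu(s)/a_1^2(s)$ gives $g'(s) = \big(a_1(s)\nu'(s) - 2\nu(s)a_1'(s)\big)/a_1^3(s)$, so every interior stationary point satisfies $a_1(s)\nu'(s) = 2 a_1'(s)\nu(s)$; for the quantized $f_2$ this is the stated equation. For the binarized $f_3$ I would substitute the closed forms of Figure~\ref{fig:coeff}, namely $a_1(s) = \sqrt{2/\pi}\,e^{-s^2}$ and $\nu(s) = \erfc(s)$, with $a_1'(s) = -2s\sqrt{2/\pi}\,e^{-s^2}$ and $\nu'(s) = -\tfrac{2}{\sqrt\pi}e^{-s^2}$. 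Plugging these into $a_1\nu' = 2 a_1'\nu$ and cancelling the common factor $-2\sqrt{2/\pi}\,e^{-s^2}$ yields $\tfrac1{\sqrt\pi}e^{-s^2} = 2s\,\erfc(s)$, i.e. $s_{\rm opt} = e^{-s_{\rm opt}^2}/(2\sqrt\pi\,\erfc(s_{\rm opt}))$, whose numerical solution is $s_{\rm opt} \approx 0.43$; then $\nu(s_{\rm opt})/a_1^2(s_{\rm opt}) = \tfrac\pi2 e^{2s_{\rm opt}^2}\erfc(s_{\rm opt}) \approx 1.24$, and since $f_3$ vanishes exactly on $\{|\xi| \le \sqrt2 s\}$ with $\xi \sim \NN(0,1)$ asymptotically, the induced sparsity is $\varepsilon = \erfc(s_{\rm opt}) \approx 0.54$.

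Finally, for existence and uniqueness of the minimizer I would establish that $g$ is strictly convex on $[0,\infty)$. The boundary behaviour is favourable in both cases: $g(0^+) = \pi/2$ (the value attained by the sign function, which $f_2$ and $f_3$ both approach as $s \to 0$), while for $f_3$ one has $g(s) \to \infty$ as $s \to \infty$ (because $a_1(s) \to 0$ exponentially whereas $\erfc(s)$ decays only like $e^{-s^2}/s$), and for $f_2$ a short computation gives $g(\infty) = \pi/2$; combined with strict convexity this pins down a single interior global minimum. The convexity of $g$ — verified for $f_3$ by a direct computation of $g''$ from the series expansion of $\erfc$, and for $f_2$ by the analogous term-by-term estimate on the finite sums of Figure~\ref{fig:coeff} — is the only genuinely computational step, and I expect it to be the main (still routine) obstacle; if preferred, it can be replaced by the weaker pair of facts that $g'(0) < 0$ and that $g$ admits a unique interior critical point, which already suffice here.
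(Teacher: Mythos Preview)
Your proposal is correct and follows essentially the same approach as the paper: reduce to minimizing the ratio $\nu(s)/a_1^2(s)$ (the paper records this monotonicity in Remark~\ref{rem:optimal-linear} and the convexity just before the proposition), then set the derivative to zero and, for $f_3$, plug in the explicit $a_1,\nu$ from Figure~\ref{fig:coeff}. The paper itself declares the proof ``straightforward'' and omits the details you supply; your additional boundary analysis ($g(0^+)=\pi/2$, $g(\infty)$ for $f_2,f_3$) and the convexity discussion are exactly the ingredients the paper is implicitly invoking.
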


Therefore, the optimal threshold $s_{\rm opt}$ for quantized $f_2$ or binary $f_3$ under \eqref{eq:mixture} is \emph{problem-independent}, as it depends neither on $\rho$ nor on $c$. 
In particular, note that (\textbf{i}) the binary $f_3(s_{\rm opt};\cdot)$ is \emph{consistently} better than $f(t) = \sign(t)$ for which $\nu/a_1^2 = \pi/2 \approx 1.57 > 1.24$; and (\textbf{ii}) the performance of quantized $f_2$ can be \emph{worse}, though very slightly, than that of binary $f_3$ for small $s$, but significantly better for not-too-small $s$. 
These are visually confirmed in the left and middle displays of Figure~\ref{fig:perf-and-size}.

As already observed in Figure~\ref{fig:perf-and-size}-(right), a significant gain in storage size can be achieved by using $f_2$ or $f_3$, versus the performance-optimal but dense linear function, with virtually no performance loss. 
Figure~\ref{fig:sparse-binary-perf-compare} compares the performance of the optimally designed $f_2$ and $f_3$, to sparse  $f_1$ that has \emph{approximately} the same storage size.%
\footnote{We set the truncation threshold $s$ of $f_1$ such that $\erfc(s) = 3/64$, so that the storage size of the sparse $f_1$ ($64$ bits per non-zero entry) is the same as the quantized $f_2$ with $M = 3$ (with $3$ bits per non-zero entry), which is \emph{three times} that of the binary $f_3$.} 
A significant drop in classification error is observed by using quantization $f_2$ or binarization $f_3$ rather than sparsification $f_1$.
Also, the performances of $f_2$ and $f_3$ are extremely close to the theoretical optimal (met by $f(t)=t$). 
This is further confirmed by Figure~\ref{fig:sparse-binary-perf-compare}-(right) where, for the optimal $f_2$, the ratio $\nu/a_1^2$ gets close to $1$, for all $M \ge 5$.

\begin{figure}[t]
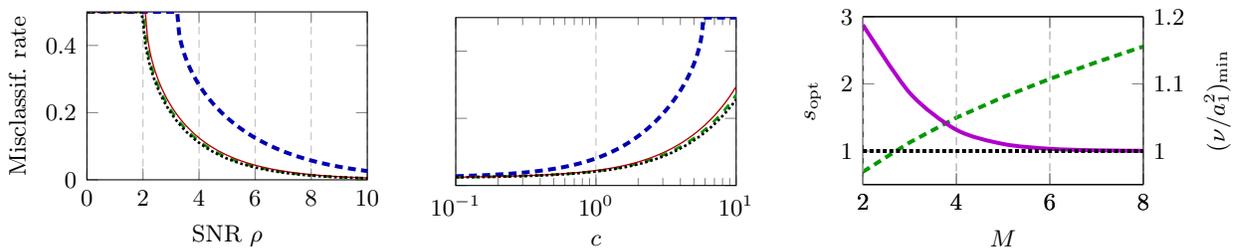

\centering

\caption{ Performance of $f_1$ ({\BLUE \textbf{blue}}), $f_2$ with $M  = 3$ ({\GREEN \textbf{green}}) and $f_3$ ({\RED \textbf{red}}) of the same storage size, versus SNR for $c = 4$ (\textbf{left}) and versus $c$ for SNR $\rho = 4$ (\textbf{middle}). \textbf{(Right)} Optimal threshold $s_{\rm opt}$ ({\GREEN \textbf{green}}) and $(\nu/a_1^2)_{\min}$ ({\PURPLE \textbf{purple}}) of $f_2$ versus $M$. Curves for linear $f(t) = t$ are displayed in \textbf{black}.
} 
\label{fig:sparse-binary-perf-compare}
\end{figure}

Figure~\ref{fig:perf-MNIST-binary} next evaluates the clustering performance, the proportion of nonzero entries in $\K$, and the computational time of the top eigenvector, for sparse $f_1$ and binary $f_3$, versus linear $f(t) = t$, as a function of the truncation threshold $s$, on the popular MNIST datasets \citep{lecun1998gradient}. 
Depending on (the SNR $\rho$ of) the task, up to $90\%$ of the entries can be discarded almost ``for free''. 
Moreover, the curves of the binary $f_3$ appear strikingly close to those of the sparse $f_1$, showing the additional advantage of using the former to further reduce the storage size of $\K$. More empirical results on various datasets are provided in Appendix~\ref{sec:more-experiments} to confirm our observations in Figure~\ref{fig:perf-MNIST-binary}.

\begin{figure}[t]
\centering
\begin{tabular}{ccc}
\centering
\begin{tikzpicture}[font=\footnotesize]
    \pgfplotsset{every major grid/.append style={densely dashed}}          
    \tikzstyle{every axis y label}+=[yshift=-10pt] 
    \tikzstyle{every axis x label}+=[yshift=5pt]
    \pgfplotsset{every axis legend/.style={cells={anchor=west},fill=none,at={(0,1)}, anchor=north west, font=\footnotesize}}
    \begin{axis}[
    width=.31\linewidth,
    height=.23\linewidth,
    xmin=0,xmax=20,ymin=0,ymax=.5,
    grid=major,
    ymajorgrids=false,
    scaled ticks=true,
    xlabel= { Truncation threshold $s$ },
    ylabel={ Misclassif.\@ rate },
    ]
    \addplot[smooth,color=BLUE,line width=1.5pt] plot coordinates{
    (0.000000, 0.011064)(1.052632, 0.011626)(2.105263, 0.012812)(3.157895, 0.012339)(4.210526, 0.010840)(5.263158, 0.008232)(6.315789, 0.006562)(7.368421, 0.005142)(8.421053, 0.004390)(9.473684, 0.004629)(10.526316, 0.006172)(11.578947, 0.011362)(12.631579, 0.041787)(13.684211, 0.230259)(14.736842, 0.300630)(15.789474, 0.310903)(16.842105, 0.315508)(17.894737, 0.334043)(18.947368, 0.351792)(20.000000, 0.374429)
    };
    \addlegendentry{{Sparse }};
    \addplot[densely dashed,color=RED,line width=1.5pt] plot coordinates{
    (0.000000, 0.010791)(1.052632, 0.011157)(2.105263, 0.012871)(3.157895, 0.012622)(4.210526, 0.010449)(5.263158, 0.008638)(6.315789, 0.006499)(7.368421, 0.005532)(8.421053, 0.004424)(9.473684, 0.004346)(10.526316, 0.006440)(11.578947, 0.011621)(12.631579, 0.043267)(13.684211, 0.224868)(14.736842, 0.301313)(15.789474, 0.308242)(16.842105, 0.315903)(17.894737, 0.333101)(18.947368, 0.353506)(20.000000, 0.375000)
    };
    \addlegendentry{{Binary  }};
    \addplot[densely dotted,black,line width=1pt] plot coordinates{
        (0, 0.011064)(20, 0.011064)
        }; 
    \end{axis}
    \end{tikzpicture}
    &
    \begin{tikzpicture}[font=\footnotesize]
    \pgfplotsset{every major grid/.append style={densely dashed}}          
    \tikzstyle{every axis y label}+=[yshift=-10pt] 
    \tikzstyle{every axis x label}+=[yshift=5pt]
    \pgfplotsset{every axis legend/.style={cells={anchor=west},fill=none,at={(0,1)}, anchor=north west, font=\footnotesize}}
    \begin{axis}[
    width=.31\linewidth,
    height=.23\linewidth,
    xmin=0,xmax=20,ymin=0,ymax=.55,
    grid=major,
    ymajorgrids=false,
    scaled ticks=true,
    xlabel= { Truncation threshold $s$ },
    ylabel={  },
    ]
    \addplot[smooth,color=BLUE,line width=1.5pt] plot coordinates{
    (0.000000, 0.113203)(1.052632, 0.110142)(2.105263, 0.110063)(3.157895, 0.117007)(4.210526, 0.116606)(5.263158, 0.115259)(6.315789, 0.124497)(7.368421, 0.149180)(8.421053, 0.218706)(9.473684, 0.257114)(10.526316, 0.303555)(11.578947, 0.375503)(12.631579, 0.446377)(13.684211, 0.511338)(14.736842, 0.518164)(15.789474, 0.498159)(16.842105, 0.485254)(17.894737, 0.478389)(18.947368, 0.472002)(20.000000, 0.476543)
    };
    \addplot[densely dashed,color=RED,line width=1.5pt] plot coordinates{
    (0.000000, 0.104258)(1.052632, 0.100527)(2.105263, 0.098579)(3.157895, 0.104268)(4.210526, 0.104946)(5.263158, 0.104189)(6.315789, 0.107510)(7.368421, 0.110820)(8.421053, 0.169194)(9.473684, 0.241611)(10.526316, 0.299727)(11.578947, 0.373760)(12.631579, 0.447407)(13.684211, 0.511221)(14.736842, 0.517109)(15.789474, 0.498232)(16.842105, 0.485259)(17.894737, 0.479146)(18.947368, 0.472114)(20.000000, 0.477197)
    };
    \addplot[densely dotted,black,line width=1pt] plot coordinates{
        (0, 0.113203)(20, 0.113203)
        }; 
    \end{axis}
    \end{tikzpicture}
    &
    \begin{tikzpicture}[font=\footnotesize]
    \pgfplotsset{every major grid/.append style={densely dashed}}          
    \tikzstyle{every axis y label}+=[yshift=-10pt] 
    \tikzstyle{every axis x label}+=[yshift=5pt]
    \pgfplotsset{every axis legend/.style={cells={anchor=west},fill=none,at={(1,1)}, anchor=north east, font=\footnotesize}}
    \begin{axis}[
    axis y line*=left,
    width=.32\linewidth,
    height=.23\linewidth,
    xmin=0,xmax=20,ymin=0,ymax=1,
    grid=major,
    ymajorgrids=false,
    scaled ticks=true,
    xlabel= { Truncation threshold $s$ },
    ylabel={ Nonzero prop.\@ },
    ]
    \addplot[smooth,color=GREEN,line width=1.5pt] plot coordinates{
    (0.000000, 0.999512)(1.052632, 0.789041)(2.105263, 0.591256)(3.157895, 0.420466)(4.210526, 0.266790)(5.263158, 0.167516)(6.315789, 0.097164)(7.368421, 0.066136)(8.421053, 0.041842)(9.473684, 0.024846)(10.526316, 0.015779)(11.578947, 0.009688)(12.631579, 0.006629)(13.684211, 0.004679)(14.736842, 0.002467)(15.789474, 0.001934)(16.842105, 0.001026)(17.894737, 0.000650)(18.947368, 0.000601)(20.000000, 0.000334)
    };\label{plot_one_binary}
    \end{axis}
    \begin{axis}[
    axis y line*=right,
    width=.32\linewidth,
    height=.23\linewidth,
    xmin=0,xmax=20,ymin=0,ymax=.15,
    ytick={0, 0.1},
    grid=major,
    ymajorgrids=false,
    scaled ticks=false,
    xlabel= { },
    ylabel={ Compute.\@ time (s) },
    ]
    \addlegendimage{/pgfplots/refstyle=plot_one_binary}\addlegendentry{ Nonzero }
    \addplot[densely dashed,color=PURPLE,line width=1.5pt] plot coordinates{
    (0.000000, 0.153405)(1.052632, 0.112550)(2.105263, 0.069290)(3.157895, 0.045943)(4.210526, 0.035101)(5.263158, 0.026721)(6.315789, 0.018115)(7.368421, 0.011898)(8.421053, 0.007724)(9.473684, 0.005808)(10.526316, 0.004655)(11.578947, 0.004016)(12.631579, 0.003250)(13.684211, 0.002891)(14.736842, 0.002680)(15.789474, 0.002498)(16.842105, 0.002467)(17.894737, 0.002402)(18.947368, 0.002196)(20.000000, 0.002271)
    };
    \addlegendentry{{ Time }};
    \end{axis}
    \end{tikzpicture}
  \end{tabular}
  \caption{ Clustering performance of sparse $f_1$ and binary $f_3$ (\textbf{left} and \textbf{middle}), proportion of nonzero entries and computational time of the top eigenvector for $f_3$ (\textbf{right}), as a function of the truncation threshold $s$ on the MNIST dataset: digits $(0,1)$ (\textbf{left}) and $(5,6)$ (\textbf{middle} and \textbf{right}) with $n = 2\,048$ and performance of the linear function in \textbf{black}. Results averaged over $100$ runs. } 
  \label{fig:perf-MNIST-binary}
\end{figure}
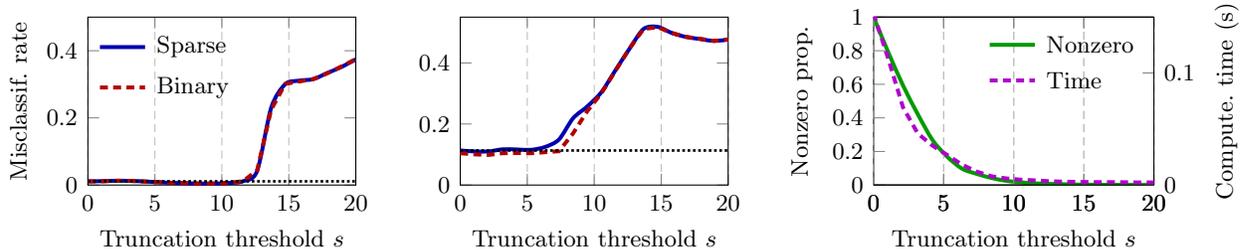

\section{Concluding remarks}
\label{sec:conclusion}

We have evaluated performance-complexity trade-offs when sparsifying, quantizing, and binarizing a linear kernel matrix via a thresholding operator.   
Our main technical result characterizes the change in the eigenspectrum under these operations; and we have shown that, under an information-plus-noise mixture data, sparsification and quantization, when carefully employed, maintain the informative eigenstructure and incur almost negligible performance loss in spectral clustering. 
Empirical results on real data demonstrate that these conclusions hold far beyond the present statistical model.


Our results open the door to theoretical investigation of a broad range of cost-efficient linear algebra methods in machine learning, including subsampling techniques \citep{mensch2017stochastic,fred_SSN_JRNL}, distributed optimization \citep{wang2018giant}, randomized linear algebra algorithms \citep{mahoney2011randomized,DM16_CACM}, and quantization for improved training and/or inference \citep{dong2019hawq,shen2020qbert}.
Also, given recent interest in viewing neural networks from the perspective of RMT \citep{MM18_TR,li2018random,seddik2018kernel,jacot2019asymptotic,liu2019ridge,MM19_HTSR_ICML,MM20_SDM,MM20a_trends_TR}, our results open the door to understanding and improving performance-complexity trade-offs far beyond kernel methods, e.g., to sparse, quantized, or even binary neural networks \citep{courbariaux2016binarized,lin2017towards} that go beyond kernel methods \citep{rahimi2008random,jacot2018neural}.

\subsubsection*{Acknowledgments}
We would like to acknowledge DARPA, NSF, and ONR for providing partial support of this work.
Couillet's work is partially supported by MIAI at University Grenoble-Alpes (ANR-19-P3IA-0003).

\bibliography{liao}
\bibliographystyle{iclr2021_conference}


\appendix

\section{Proofs and related discussions}

Under the mixture model \eqref{eq:mixture}, the data matrix $\X \in \RR^{p \times n}$ can be compactly written as
\begin{equation}\label{eq:model}
  \X = \Z + \bmu \v^\T,
\end{equation}
for $\Z \in \RR^{p \times n}$ having i.i.d.\@ zero-mean, unit-variance, $\kappa$-kurtosis, sub-exponential entries and $\v \in \{ \pm 1 \}^n$ so that $\| \v \| = \sqrt n$. Recall also the following notations:
\begin{equation}
	\K = \left\{\delta_{i \neq j} f (\x_i^\T \x_j/\sqrt p)/\sqrt p\right\}_{i,j=1}^n, \quad \Q(z) \equiv (\K - z \I_n)^{-1}.
\end{equation}

\subsection{Proof of Theorem~\ref{theo:DE}}
\label{sxn:proof_main_technical_result}

The proof of Theorem~\ref{theo:DE} comes in the following two steps: 
\begin{enumerate}
	\item show that the random quantities $\frac1n \tr \A_n \Q(z)$ and $\mathbf{a}_n^\T \Q(z) \mathbf{b}_n $ of interest concentrate around their expectations in the sense that
	\begin{equation}
		\frac1n \tr \A_n (\Q(z) - \EE[\Q(z)]) \to 0, \quad \mathbf{a}_n^\T (\Q(z) - \EE[\Q(z)]) \mathbf{b}_n \to 0,
	\end{equation}
	almost surely as $n,p \to \infty$; and
	\item show that the sought-for \emph{deterministic equivalent} $\bar \Q(z)$ given in Theorem~\ref{theo:DE} is an asymptotic approximation for the expectation of the resolvent $\Q(z)$ defined in \eqref{eq:def-Q} in the sense that
	\begin{equation}
		\| \EE[\Q] - \bar \Q \| \to 0,
	\end{equation}
	as $n,p \to \infty$.
\end{enumerate}

The concentration of trace forms in the first item has been established in \citep{cheng2013spectrum,do2013spectrum}, and the bilinear forms follow similarly. Here we focus on the second item to show that $\| \EE[\Q] - \bar \Q \| \to 0$ in the large $n,p$ limit. 

In the sequel, we use $o(1)$ and $ o_{\| \cdot \|}(1)$ for scalars or matrices of (almost surely if being random) vanishing absolute values or operator norms as $n,p \to \infty$.

To establish $\| \EE[\Q] - \bar \Q \| \to 0$, we need to show subsequently that:
\begin{enumerate}[leftmargin=\parindent,align=left,labelwidth=\parindent,labelsep=2pt]
	\item under \eqref{eq:model}, the random matrix $\K$ defined in \eqref{eq:def-K} admits a spiked-model approximation, that is
	\begin{equation}\label{eq:K-approx}
		\K = \tilde \K_0 + \U \bLambda \U^\T + o_{\| \cdot \|}(1) ,
	\end{equation}
	for some full rank random (noise) matrix $\tilde \K_0$ and low rank (information) matrix $\U \bLambda \U^\T$ to be specified; and
	\item the matrix inverse $( \tilde \K_0 - \U \bLambda \U^\T - z \I_n)^{-1}$ can be decomposed with the Woodbury identity, so~that
	\begin{equation}\label{eq:Q-1}
		\Q = ( \tilde \K_0 - \U \bLambda \U^\T - z \I_n)^{-1} + o_{\| \cdot \|}(1) = \tilde \Q_0 - \tilde \Q_0 \U ( \bLambda^{-1} + \U^\T \tilde \Q_0 \U)^{-1} \U^\T \tilde \Q_0 + o_{\| \cdot \|}(1) ,
	\end{equation}
	with $\tilde \Q_0(z) \equiv (\tilde \K_0 - z \I_n)^{-1}$; and
	\item the expectation of the right-hand side of \eqref{eq:Q-1} is close to $\bar \Q$ in the large $n,p$ limit, allowing us to conclude the proof of Theorem~\ref{theo:DE}.
\end{enumerate}

To establish \eqref{eq:K-approx}, we denote the ``noise-only'' null model with $\| \bmu \| = 0$ by writing \(\K=\K_0\) such~that
\begin{equation}\label{eq:def-K-n}
    [\K_0]_{ij} = \delta_{i \neq j} f ( \z_i^\T \z_j/\sqrt p)/\sqrt p.
\end{equation}
With a combinatorial argument, it has been shown \citep{fan2019spectral} that 
\begin{equation}\label{eq:approx-K0}
	\left\| \K_0 - \tilde \K_0 - \frac{a_2}{\sqrt 2} \frac1p (\bpsi \one_n^\T + \one_n \bpsi^\T) \right\| \to 0,
\end{equation}
almost surely as $n,p \to \infty$, for $\tilde \K_0$ such that $(\tilde \K_0 - z \I_n)^{-1} \equiv \tilde \Q_0(z) \leftrightarrow m(z) \I_n$ and the random vector $\bpsi \in \RR^n$ with its $i$-th entries given by
\[
  [\bpsi]_i = \frac1{\sqrt p} (\| \z_i \|^2 - \EE[\| \z_i \|^2]) = \frac1{\sqrt p} (\| \z_i \|^2 - p).
\]

Consider now the informative-plus-noise model $\K$ for $\X = \Z + \bmu \v^\T$ as in \eqref{eq:model} with $[\v]_i = \pm 1$ and $\| \v \| = \sqrt n$. It follows from \citep{liao2019inner} that
\begin{equation}\label{eq:approx-K-liao}
	\left\| \K - \K_0 - \frac{a_1}p \begin{bmatrix} \v & \Z^\T \bmu \end{bmatrix} \begin{bmatrix} \| \bmu \|^2 & 1 \\ 1 & 0 \end{bmatrix} \begin{bmatrix} \v^\T \\ \bmu^\T \Z \end{bmatrix} \right\| \to 0,
\end{equation}
almost surely as $n,p \to \infty$.

Combining \eqref{eq:approx-K0}~with~\eqref{eq:approx-K-liao}, we obtain $\| \K - \tilde \K_0 - \U \bLambda \U^\T \| \to 0$ almost surely as $n,p \to \infty$, with
\begin{equation}
	\U = \frac1{\sqrt p}[\one_n,~\v,~\bpsi,~\Z^\T \bmu] \in \RR^{n \times 4}, \quad \bLambda = \begin{bmatrix} 0 & 0 & \frac{a_2}{\sqrt 2} & 0 \\ 0 & a_1 \| \bmu \|^2 & 0 & a_1 \\ \frac{a_2}{\sqrt 2} & 0 & 0 & 0 \\ 0 & a_1 & 0 & 0 \end{bmatrix}
\end{equation}
and $(\tilde \K_0 - z \I_n)^{-1} \equiv \tilde \Q_0(z) \leftrightarrow m(z) \I_n$. 
By the Woodbury identity, we write
\begin{align}
  \Q &= (\K - z \I_n)^{-1} = (\tilde \K_0 + \U \bLambda \U^\T - z \I_n)^{-1} + o_{\| \cdot \|}(1) \nonumber \\ 
  &= \tilde \Q_0 - \tilde \Q_0 \U (\bLambda^{-1} + \U^\T \tilde \Q_0 \U)^{-1} \U^\T \tilde \Q_0 + o_{\| \cdot \|}(1) \label{eq:Q-simp}
\end{align}
with
\[
  \bLambda^{-1} + \U^\T \tilde \Q_0 \U = \begin{bmatrix} \frac{m(z)}{c} & \frac{m(z)}{c} \frac{\v^\T \one_n}{n} & 0 & 0 \\ \frac{m(z)}{c} \frac{\v^\T \one_n}{n} & \frac{m(z)}{c} & 0 & 0 \\ 0 & 0 & (\kappa - 1) \frac{m(z)}{c} & 0 \\ 0 & 0 & 0 &  \bmu^\T (\frac1p \EE[\Z \tilde \Q_0 \Z^\T]) \bmu \end{bmatrix} + o_{\| \cdot \|}(1)
\]
where we use the fact that 
\[
  \EE[\bpsi] = \mathbf{0},\quad \EE[\bpsi \bpsi^\T] = (\kappa - 1) \I_n.
\]

We need to evaluate the expectation $\frac1p \EE[\Z (\tilde \K_0 - z \I_n)^{-1} \Z^\T]$.
This is given in the following lemma.

\begin{Lemma}\label{lem:ZKZ}
Under the assumptions and notations of Theorem~\ref{theo:DE}, we have
\begin{equation}
	\frac1p \EE[\Z (\tilde \K_0 - z \I_n)^{-1} \Z^\T] = \frac{m(z)}{c + a_1 m(z)} \I_p + o_{\| \cdot \|}(1).
\end{equation}
\end{Lemma}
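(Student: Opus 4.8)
The plan is to reduce the $p\times p$ identity to a single scalar and then evaluate that scalar using the definition of $\tilde \K_0$ and the master equation~\eqref{eq:master-eq}. First, I would note that $\EE[\frac1p \Z \tilde \Q_0 \Z^\T]$ is \emph{exactly} a scalar multiple of $\I_p$: since $\tilde \K_0$, hence $\tilde \Q_0 = (\tilde \K_0 - z\I_n)^{-1}$, depends on $\Z$ only through $\Z^\T \Z$ (plus, in the Gaussianized model, extra randomness independent of $\Z$), flipping the sign of any single row of $\Z$ leaves $\Z^\T \Z$ and $\tilde \Q_0$ unchanged, which makes every off-diagonal entry of the expectation vanish, while exchangeability of the rows of $\Z$ makes all diagonal entries equal. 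Thus $\EE[\frac1p \Z \tilde \Q_0 \Z^\T] = \beta(z)\,\I_p$ with $\beta(z) = \frac1{p^2}\EE[\tr(\Z^\T \Z\, \tilde \Q_0)]$, and it remains to prove $\beta(z) = \frac{m(z)}{c + a_1 m(z)} + o(1)$.

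Next, I would invoke the structure of $\tilde \K_0$ from the preceding construction (and~\citep{fan2019spectral}): up to $o_{\|\cdot\|}(1)$, $\tilde \K_0 = \frac{a_1}{p}\Z^\T \Z - \frac{a_1}{p}\diag(\Z^\T \Z) + \W$, with $\W$ a zero-diagonal real symmetric matrix, independent of $\Z$, having i.i.d.\@ above-diagonal entries of variance $\sigma^2 = (\nu - a_1^2)/p$ (which one may take Gaussian, by the same universality that gives $\tilde \Q_0 \leftrightarrow m(z)\I_n$ and only relies on $m$ depending on $f$ through $a_1,\nu$). Since $\frac{a_1}{p}\diag(\Z^\T \Z) = a_1 \I_n + o_{\|\cdot\|}(1)$ by concentration of $\|\z_i\|^2/p$, this rearranges to $\frac{a_1}{p}\Z^\T \Z = \tilde \K_0 + a_1 \I_n - \W + o_{\|\cdot\|}(1)$, so that
\begin{equation}
  \beta(z) = \frac1{a_1 p}\EE\big[\tr(\tilde \K_0 \tilde \Q_0)\big] + \frac1p \EE\big[\tr \tilde \Q_0\big] - \frac1{a_1 p}\EE\big[\tr(\W \tilde \Q_0)\big] + o(1)
\end{equation}
(for $a_1 \neq 0$; the case $a_1 = 0$ is immediate from the trace lemma, as then $\tilde \K_0$ is independent of $\Z$). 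The first term reduces via $\tr(\tilde \K_0 \tilde \Q_0) = n + z\,\tr \tilde \Q_0$ with $\frac1n \EE[\tr \tilde \Q_0] \to m(z)$, and the second via $\frac1p \EE[\tr \tilde \Q_0] \to \frac1c m(z)$.

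For the third term I would apply Gaussian integration by parts to $\EE[\tr(\W \tilde \Q_0)]$: differentiating $\tilde \Q_0$ in each $W_{ij}$, using $\partial \tilde \K_0 / \partial W_{ij} = \e_i \e_j^\T + \e_j \e_i^\T$ and the symmetry of $\tilde \Q_0$, yields
\begin{equation}
  \EE\big[\tr(\W \tilde \Q_0)\big] = -\sigma^2\Big[(\tr \tilde \Q_0)^2 + \tr(\tilde \Q_0^2) - 2\textstyle\sum_i [\tilde \Q_0]_{ii}^2\Big] = -\frac{\nu - a_1^2}{p}\, n^2 m(z)^2 + o(n),
\end{equation}
since $\tr(\tilde \Q_0^2)$ and $\sum_i [\tilde \Q_0]_{ii}^2$ are $O(n)$ while $(\tr \tilde \Q_0)^2$ concentrates at $n^2 m(z)^2 + o(n^2)$. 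Substituting the three contributions and using $n/p \to 1/c$ gives $\beta(z) = \frac1{a_1 c}\big[1 + z m + a_1 m + \frac{\nu - a_1^2}{c} m^2\big] + o(1)$. Finally, multiplying~\eqref{eq:master-eq} by $-m$ yields the identity $1 + z m + \frac{\nu - a_1^2}{c} m^2 = -\frac{a_1^2 m^2}{c + a_1 m}$; plugging it in,
\begin{equation}
  \beta(z) = \frac1{a_1 c}\Big[a_1 m - \frac{a_1^2 m^2}{c + a_1 m}\Big] + o(1) = \frac{m}{c}\cdot\frac{c}{c + a_1 m} + o(1) = \frac{m(z)}{c + a_1 m(z)} + o(1),
\end{equation}
which is the claim.

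The step I expect to be the main obstacle is the integration by parts: it requires first reducing the entry-dependent Hermite-noise part of $\tilde \K_0$ to a genuinely Gaussian matrix $\W$ independent of $\Z$, and then checking carefully that among the three terms produced only $(\tr \tilde \Q_0)^2$ survives the $1/p^2$ normalization. A further point that must not be overlooked is the $O(1)$ diagonal correction $a_1 \I_n$ coming from $\frac{a_1}{p}\diag(\Z^\T \Z)$: it leaves the limiting spectrum of $\tilde \K_0$ unchanged but contributes exactly the $a_1 m$ term to $\beta(z)$, and dropping it would give a wrong answer.
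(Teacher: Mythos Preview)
Your computation is correct and the final algebraic identification via the master equation~\eqref{eq:master-eq} is clean, but the route is genuinely different from the paper's. The paper does \emph{not} decompose $\tilde \K_0$ into a linear part plus an independent Wigner noise $\W$; instead it applies Stein's lemma directly to the entries of $\Z$ (assumed Gaussian first, then extended by interpolation), differentiates $[\tilde \Q_0 \Z^\T]_{ji'}$ with respect to $\Z_{ij}$, and uses $\partial [\K_0]_{lm}/\partial \Z_{ij} = \frac1p f'(\z_l^\T\z_m/\sqrt p)(\delta_{jl}\Z_{im}+\Z_{li}\delta_{jm})$ together with the approximation $f'(\Z^\T\Z/\sqrt p)-\diag(\cdot) = a_1\one_n\one_n^\T + O_{\|\cdot\|}(\sqrt p)$. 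This produces a \emph{self-consistent} relation $\frac1p\EE[\Z\tilde\Q_0\Z^\T] = \frac{m}{c}\I_p - \frac{a_1 m}{c}\cdot\frac1p\EE[\Z\tilde\Q_0\Z^\T] + o_{\|\cdot\|}(1)$, which is solved directly without invoking~\eqref{eq:master-eq}. The trade-off: the paper's route avoids entirely the step you flag as the main obstacle (replacing the Hermite-noise part of $\tilde\K_0$ by a $\W$ independent of $\Z$, which is a universality statement for a \emph{mixed} functional of $\Z$ and $\tilde\Q_0$, not just for the LSD), at the price of working with $f'$ and a Hadamard-product bound $\|\A\odot\B\|\le\sqrt n\|\A\|_\infty\|\B\|$; your route recycles the master equation and keeps the linear algebra simpler, but leaves that independence reduction to justify. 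Your symmetry argument for $\EE[\frac1p\Z\tilde\Q_0\Z^\T]=\beta(z)\I_p$ is also specific to symmetric entry laws (the sign-flip invariance), so, like the paper, you are implicitly proving the Gaussian case first.
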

\begin{proof}[Proof of Lemma~\ref{lem:ZKZ}]
For $\tilde \Q_0 = (\tilde \K_0 - z \I_n)^{-1}$, we aim to approximate the expectation $\EE[\Z \tilde \Q_0 \Z^\T]$. Consider first the case where the entries of $\Z$ are i.i.d.\@ Gaussian, we can write the $(i,i')$ entry of $\EE[\Z \tilde \Q_0 \Z^\T]$ with Stein's lemma (i.e., $\EE[x f(x)] = \EE[f'(x)]$ for $x \sim \NN(0,1)$) as
\begin{align*}
	\EE[\Z \tilde \Q_0 \Z^\T]_{i i'} &= \sum_{j=1}^n \EE[\Z_{ij} [\tilde \Q_0 \Z^\T]_{ji'} ] = \sum_{j=1}^n \EE \frac{\partial [\tilde \Q_0 \Z^\T]_{ji'}}{\partial \Z_{ij}} \\ 
	&= \sum_{j=1}^n \EE \left[ [\tilde \Q_0]_{jj} \delta_{i i'} + \sum_{k=1} \frac{\partial [\tilde \Q_0]_{jk}}{\partial \Z_{ij}} \Z^\T_{ki'} \right].
\end{align*}
We first focus on the term $\frac{\partial [\tilde \Q_0]_{jk}}{\partial \Z_{ij}}$ as
\begin{align*}
  \frac{\partial [\tilde \Q_0]_{jk}}{\partial \Z_{ij}} &= - \left[ \tilde \Q_0 \frac{\partial \K_0}{\partial \Z_{ij}} \tilde \Q_0 \right]_{jk} = \sum_{l,m=1}^n - [\tilde \Q_0]_{jl} \frac{\partial [\K_0]_{lm}}{\partial \Z_{ij}} [\tilde \Q_0]_{mk}
\end{align*}
where we recall $[\K_0]_{ij} = \delta_{i \neq j} f(\Z^\T \Z/\sqrt p)_{ij}/\sqrt p$ so that for $l \neq m$ we have
\[
  \frac{\partial [\K_0]_{lm}}{\partial \Z_{ij}} = \frac1p f'(\Z^\T \Z/\sqrt p)_{lm} \frac{\partial [\Z^\T \Z]_{lm}}{\partial \Z_{ij}} = \frac1p f'(\Z^\T \Z/\sqrt p)_{lm} (\delta_{jl} \Z_{im} + \Z^\T_{li} \delta_{jm})
\]
and $\frac{\partial [\K_0]_{lm}}{\partial \Z_{ij}} = 0$ for $l = m$. We get
\begin{align*}
  \sum_{j,k} \frac{\partial [\tilde \Q_0]_{jk}}{\partial \Z_{ij}} \Z^\T_{ki'} &= -\frac1p \sum_{j,k,m} [\tilde \Q_0]_{jj} f'(\Z^\T \Z/\sqrt p)_{jm} \Z_{im} [\tilde \Q_0]_{mk} \Z^\T_{ki'}  - \frac1p \sum_{j,k,l} [\tilde \Q_0]_{jl} f'(\Z^\T \Z/\sqrt p)_{lj} \Z^\T_{li} [\tilde \Q_0]_{jk} \Z^\T_{ki'} \\ 
  &= -\frac1p [\Z \diag(f'(\Z^\T \Z/\sqrt p) \tilde \Q_0 \one_n) \tilde \Q_0 \Z^\T]_{ii'} - \frac1p [\Z ( \tilde \Q_0 \odot f'(\Z^\T \Z/\sqrt p) ) \tilde \Q_0 \Z^\T]_{ii'}
\end{align*}
where $f'(\Z^\T \Z/\sqrt p)$ indeed represents $f'(\Z^\T \Z/\sqrt p) - \diag(\cdot)$ in both cases.

For the first term, since $f'(\Z^\T \Z/\sqrt p) - \diag(\cdot) = a_1 \one_n \one_n^\T + O_{\| \cdot \|}(\sqrt p)$, we have
\begin{equation}
	\frac1pf'(\Z^\T \Z/\sqrt p) \tilde \Q_0 \one_n = \frac{a_1}p \one_n^\T \tilde \Q_0 \one_n \cdot \one_n + O(p^{-1/2}) = \frac{a_1 m(z)}{c} \one_n + O(p^{-1/2})
\end{equation}
where $O(p^{-1/2})$ is understood entry-wise. As a result, 
\[
  \frac1p \Z \diag(f'(\Z^\T \Z/\sqrt p) \tilde \Q_0 \one_n) \tilde \Q_0 \Z^\T = \frac{a_1 m(z)}{c} \cdot \frac1p \Z \tilde \Q_0 \Z^\T +  o_{\| \cdot \|}(1). 
\]
For the second term, since $f'(\Z^\T \Z/\sqrt p)$ has $O(1)$ entries and $\| \A \odot \B \| \le \sqrt n \| \A \|_\infty \| \B \|$ for $\A,\B \in \RR^{n \times n}$, we deduce that
\[
  \frac1p \| \Z ( \tilde \Q_0 \odot f'(\Z^\T \Z/\sqrt p) ) \tilde \Q_0 \Z^\T \| = O(\sqrt p).
\]
As a consequence, we conclude that
\[
  \frac1p \EE[\Z \tilde \Q_0 \Z^\T] = \frac1p \tr \tilde \Q_0 \cdot \I_p - \frac{a_1 m(z)}{c} \cdot \frac1p \EE[\Z \tilde \Q_0 \Z^\T] +  o_{\| \cdot \|}(1) 
\]
that is
\[
  \frac1p \EE[\Z \tilde \Q_0 \Z^\T] = \frac{m(z)}{c + a_1 m(z)} \I_p +  o_{\| \cdot \|}(1) 
\]
where we recall $\tr \tilde \Q_0/p = m(z)/c$ and thus the conclusion of Lemma~\ref{lem:ZKZ} for the Gaussian case. The interpolation trick \citep[Corollaray~3.1]{lytova2009central} can be applied to extend the result beyond Gaussian distribution. This concludes the proof of Lemma~\ref{lem:ZKZ}.
\end{proof}

With Lemma~\ref{lem:ZKZ} and denoting the shortcut $\A = (\bLambda^{-1} + \U^\T \tilde \Q_0 \U)^{-1}$, we have
\begin{align*}
	\EE[\U \A \U^\T] &= \frac1p ( \A_{11} \one_n \one_n^\T + \A_{12} \one_n \v^\T + \A_{21} \v \one_n^\T + \A_{22} \v \v^\T + \A_{33} (\kappa - 1) \I_n + \A_{44} \| \bmu\|^2 \I_n) \\ 
	& = \frac1p (  \A_{11} \one_n \one_n^\T + \A_{12} \one_n \v^\T + \A_{21} \v \one_n^\T + \A_{22} \v \v^\T ) + o_{\| \cdot \|}(1) 
\end{align*}
since $\| \bmu \| = O(1)$ and $\| \v \| = O(\sqrt n)$. We thus deduce from \eqref{eq:Q-simp} that
\[
  \Q(z) \leftrightarrow \bar \Q(z) =  m(z) \I_n - c m^2(z) \V \begin{bmatrix} \A_{11} & \A_{12} \\ \A_{21} & \A_{22} \end{bmatrix} \V^\T
\]
with $\sqrt n\V = [\v,~\one_n]$. Rearranging the expression we conclude the proof of Theorem~\ref{theo:DE}.

\subsection{Proof of Corollary~\ref{coro:non-info-spike} and related discussions}
\label{sec:proof-coro-non-info-spike}

Consider the noise-only model by taking $\bmu = \mathbf{0}$ in Theorem~\ref{theo:DE}.
Then, we have $\K = \K_0$ and $\Theta(z) = 0$, so that
\begin{equation}
	\bar \Q(z) = m(z) + \Omega(z) \cdot \frac1n \one_n \one_n^\T, \quad \Omega(z) = \frac{a_2^2 (\kappa - 1) m^3(z)}{2 c^2 - a_2^2 (\kappa - 1) m^2(z)}
\end{equation}
where we recall $m(z)$ is the solution to 
\begin{equation}\label{eq:master-eq-appendix}
	m(z) = - \left( z + \frac{a_1^2 m(z)}{ c + a_1 m(z) } + \frac{\nu - a_1^2}{c} m(z) \right)^{-1}.
\end{equation}
Since the resolvent $\Q(z)$ is undefined for $z \in \RR$ within the eigensupport of $\K$ that consists of (\textbf{i}) the main bulk characterized by the Stieltjes transform $m(z)$ defined in \eqref{eq:master-eq-appendix} and (\textbf{ii}) the possible spikes, we need to find the poles of $\bar \Q(z)$ but not those of $m(z)$ to determine the asymptotic locations of the spikes that are \emph{away from} the main bulk. Direct calculations show that the Stieltjes transforms of the possible \emph{non-informative} spikes satisfy
\begin{equation}\label{eq:m-spikes}
	m_\pm = \pm \sqrt{ \frac2{\kappa - 1} } \frac{c}{a_2}
\end{equation}
that are in fact the poles of $\Omega(z)$, for $a_2 \neq 0$ and $\kappa \neq 1$. For $\kappa = 1$ or $a_2 = 0$, $\Omega(z)$ has no (additional) poles, so that there is almost surely no spike outside the limiting spectrum. 

\medskip

It is however not guaranteed that $z \in \RR$ corresponding to \eqref{eq:m-spikes} isolates from the main bulk. To this end, we introduce the following characterization of the limiting measure in \eqref{eq:master-eq-appendix}.

\begin{Corollary}[Limiting spectrum]\label{coro:lsd}
Under the notations and conditions of Theorem~\ref{theo:DE}, with probability one, the empirical spectral measure $\omega_n = \frac1n \sum_{i=1}^n \delta_{\lambda_i(\K_0)}$ of the noise-only model $\K_0$ (and therefore that of $\K$ as a low rank additive perturbation of $\K_0$ via \eqref{eq:approx-K-liao}) converges weakly to a probability measure $\omega$ of compact support as $n, p \to \infty$, with $\omega$ uniquely defined through its Stieltjes transform $m(z)$ solution to \eqref{eq:master-eq-appendix}. Moreover, 
\begin{enumerate}
	\item if we let ${\rm supp}(\omega)$ be the support of $\omega$, then
	\begin{equation}\label{eq:supp-omega}
		{\rm supp}(\omega) \cup \{ 0\} = \RR\setminus \left\{x(m)~|~m\in\RR \setminus \left\{ \{ -c/a_1 \} \cup \{0\} \right\} \textmd{~and~}x'(m)>0\right\}
	\end{equation}
	for $x(m)$ the functional inverse of \eqref{eq:master-eq-appendix} explicitly given by
	\begin{equation}\label{eq:func-inverse}
		x(m) = -\frac1m - \frac{a_1^2 m}{c + a_1 m} - \frac{\nu - a_1^2}c m,
	\end{equation}
	\item the measure $\omega$ has a density and its support may have up to four edges, with the associated Stieltjes transforms $m$s given by the roots of $x'(m) = 0$, i.e.,
	\begin{equation}\label{eq:d-func-inverse}
		x'(m) = \frac1{m^2} - \frac{a_1^2 c}{(c + a_1 m)^2} - \frac{\nu - a_1^2}c = 0.
	\end{equation}
\end{enumerate}
\end{Corollary}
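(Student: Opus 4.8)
The plan is to follow the now-standard route for recovering the limiting spectral measure from its Stieltjes-transform fixed-point equation, in the spirit of Silverstein--Choi. First I would check that the function $m(z)$ defined as the unique solution to \eqref{eq:master-eq-appendix} with $\Im[m(z)]\cdot\Im[z]\ge 0$ is genuinely the Stieltjes transform of a probability measure $\omega$: it is analytic on $\CC\setminus\RR$, maps the upper half-plane into itself, and satisfies $z\,m(z)\to -1$ as $|z|\to\infty$ (read off directly from the equation, since the last two terms on the right-hand side are $O(m)$ while $-1/m\sim z$ forces $m\sim -1/z$). Stieltjes inversion then produces $\omega$; compactness of $\supp(\omega)$ follows from the a.s.\@ boundedness of $\|\K_0\|$ established in \citep{cheng2013spectrum,fan2019spectral}, together with the convergence $\frac1n\tr\Q(z)\to m(z)$ from Theorem~\ref{theo:DE}, which also identifies $\omega=\lim_{n}\omega_n$ and yields uniqueness. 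The low-rank perturbation \eqref{eq:approx-K-liao} leaves the limiting measure unchanged, so the statement for $\K$ follows from the statement for $\K_0$.

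Next I would produce the functional inverse. Solving \eqref{eq:master-eq-appendix} for $z$ gives immediately $z=x(m)$ with $x(m)=-\tfrac1m-\tfrac{a_1^2 m}{c+a_1 m}-\tfrac{\nu-a_1^2}{c}\,m$, i.e.\@ $x=m^{-1}$ wherever $m$ is locally invertible. The rational map $x$ has simple poles only at $m=0$ and $m=-c/a_1$; away from these it is real-analytic on $\RR$. On each connected component of $\RR\setminus(\supp(\omega)\cup\{0\})$ the function $m(\cdot)$ is real, finite and strictly increasing, since $m'(x)=\int(t-x)^{-2}\,\omega(dt)>0$, so the local inverse of $m$ there is a branch of $x$ with $x'(m)>0$; conversely, if $x'(m_0)>0$ at a real $m_0\notin\{0,-c/a_1\}$, then the branch of $x^{-1}$ through $(x(m_0),m_0)$ is real-analytic at $x(m_0)$, and a connectedness/continuity argument — tracking $m$ inward from $x=\pm\infty$ (where $m\to 0^\mp$) and using that $x$ changes monotonicity only at its critical points — shows that this local inverse coincides with $m$, hence $x(m_0)\notin\supp(\omega)$. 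This yields \eqref{eq:supp-omega}; the adjunction of $\{0\}$ on the left is forced because $x(m)\to\mp\infty$ as $m\to 0^\pm$, so $0$ is never of the form $x(m)$.

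For the second part, clearing denominators in \eqref{eq:master-eq-appendix} exhibits $m(z)$ as a root of a cubic polynomial in $m$ with coefficients rational in $z$; hence $m(x+\imath 0^+)$ exists for every real $x$, is algebraic, and $\omega$ has density $\frac1\pi\Im m(x+\imath 0^+)$, real-analytic on the interior of $\supp(\omega)$ and vanishing at its edges, so $\omega$ is absolutely continuous there. The edges of $\supp(\omega)$ are exactly the images $x(m)$ of the real critical points of $x$, i.e.\@ the real roots of $x'(m)=\tfrac1{m^2}-\tfrac{a_1^2 c}{(c+a_1 m)^2}-\tfrac{\nu-a_1^2}{c}=0$; multiplying through by $c\,m^2(c+a_1 m)^2$ turns this into a polynomial equation of degree four in $m$, which has at most four real roots, hence at most four edges.

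The step I expect to be the main obstacle is the converse direction of the support characterization — turning ``$x'(m_0)>0$ for a real $m_0$'' into ``$x(m_0)\notin\supp(\omega)$'' — which requires the careful Silverstein--Choi bookkeeping: controlling the behaviour of $m$ near the excluded poles $m=0$ and $m=-c/a_1$, ensuring that no spurious components of the complement are created or missed, and ruling out atoms (in particular at $0$) using the explicit cubic. Everything else reduces to routine manipulation of the fixed-point equation and its inverse $x(m)$.
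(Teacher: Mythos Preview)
Your proposal is correct and follows essentially the same route as the paper: the paper does not give a self-contained proof but cites \citep{cheng2013spectrum,do2013spectrum} for the convergence of $\omega_n$, \citep[Lemma~2.6]{silverstein1995empirical} for the invariance under the low-rank perturbation \eqref{eq:approx-K-liao}, and \citep[Theorem~1.1]{silverstein1995analysis} for the support characterization via the monotonicity of $m(x)$ and its functional inverse $x(m)$ --- exactly the Silverstein--Choi machinery you outline. Your proposal simply fills in the details that the paper leaves to those references, including the degree-four count for the edges and the absolute continuity from the cubic.
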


The limiting spectral measure $\omega$ of the null model $\K_0$ was first derived in \citep{cheng2013spectrum} for Gaussian distribution and then extended to sub-exponential distribution in \citep{do2013spectrum}. The fact that a finite rank perturbation does not affect the limiting spectrum follows from \citep[Lemma~2.6]{silverstein1995empirical}.

The characterization in \eqref{eq:supp-omega} above follows the idea in \citep[Theorem~1.1]{silverstein1995analysis}, which arises from the crucial fact that the Stieltjes transform $m(x) = \int (t-x)^{-1} \omega(dt)$ of a measure $\omega$ is an increasing function on its domain of definition and so must be its functional inverse $x(m)$ given in \eqref{eq:func-inverse}. In plain words, Corollary~\ref{coro:lsd} tell us that (\textbf{i}) depending on the number of real solutions to \eqref{eq:d-func-inverse}, the support of $\omega$ may contain two disjoint regions with four edges, and (\textbf{ii}) $x \in \RR$ is outside the support of $\omega$ \emph{if and only if} its associated Stieltjes transform $m$ satisfies $x'(m) > 0$, i.e., belonging to the increasing region the functional inverse $x(m)$ in \eqref{eq:func-inverse}. This is depicted in Figure~\ref{fig:x(m)}, where for the same function $f(t) = \max(t,0) - 1/\sqrt{2\pi}$ with $a_1 = 1/2$, $a_2 = 1/(2\sqrt{\pi})$ and $\nu = (\pi - 1)/(2\pi)$, we observe in the top display a single region of $\omega$ for $c = 2$ and in the bottom display two disjoint regions (with thus four edges) for $c = 1/10$.
The corresponding empirical eigenvalue histograms and limiting laws are given in Figure~\ref{fig:lsd-edges}. 
Note, in particular, that the local extrema of the functional inverse $x(m)$ in Figure~\ref{fig:x(m)} characterize the (possibly up to four) edges of the support of $\omega$ in Figure~\ref{fig:lsd-edges}.

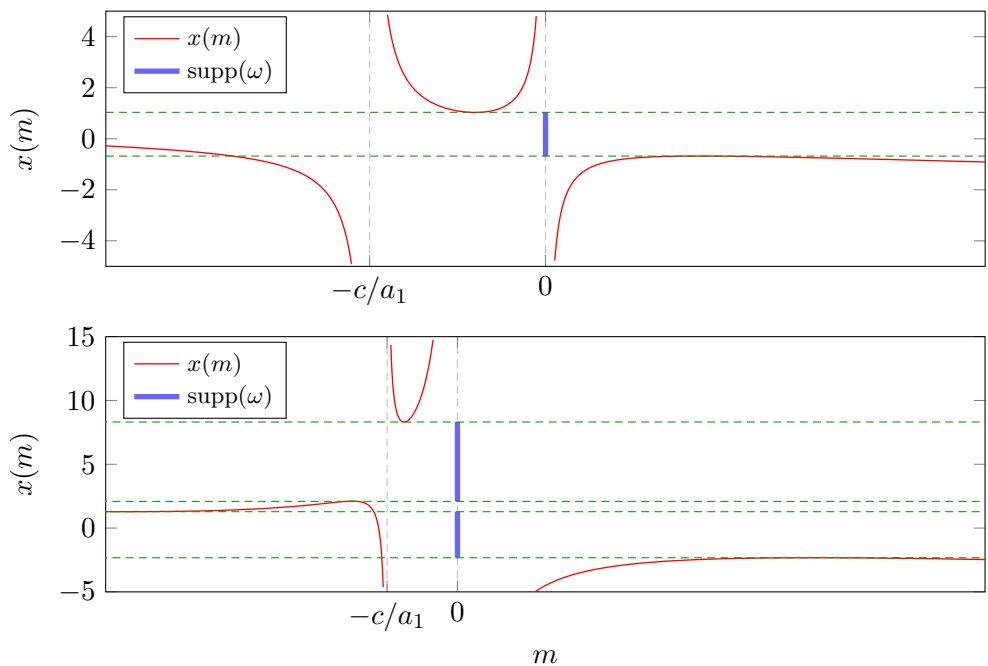
\begin{figure}[htb]
  \centering
  \begin{tikzpicture}
    \renewcommand{\axisdefaulttryminticks}{4} 
    \pgfplotsset{every major grid/.append style={densely dashed}}       
    \tikzstyle{every axis y label}+=[yshift=-10pt] 
    \tikzstyle{every axis x label}+=[yshift=5pt]
    \pgfplotsset{every axis legend/.append style={cells={anchor=west},fill=white, at={(0.02,0.98)}, anchor=north west, font=\footnotesize }}
    \begin{axis}[
		    width=.8\linewidth,height=.3\linewidth,
      xmin=-10,
      ymin=-5,
      xmax=10,
      ymax=5,
      xtick={-4,0},
      xticklabels = { $-c/a_1$ ,$0$},
      grid=major,
      scaled ticks=true,
      ymajorgrids=false,
      ylabel={$x(m)$}
      ]
      \def\c{2}
      \def\aa{1/2}
      \def\na{0.0908}
      \addplot[domain={-10:10},restrict y to domain={-5:5},samples=1000,red,line width=0.5pt] {-1/x -\aa*\aa*x/(\c + \aa*x) - \na/\c*x};
      \addplot[blue!60!white,line width=2pt] plot coordinates{
      (0,-0.6793)(0,1.0362)
      };
      \addplot[densely dashed,GREEN] plot coordinates{
      (-10,-0.6793)(10,-0.6793)
      };
      \addplot[densely dashed,GREEN] plot coordinates{
      (-10,1.0362)(10,1.0362)
      };
      \legend{{$x(m)$},$\supp(\omega)$}
    \end{axis}
  \end{tikzpicture}
  \begin{tikzpicture}
	  \renewcommand{\axisdefaulttryminticks}{4} 
    \pgfplotsset{every major grid/.append style={densely dashed}}       
    \tikzstyle{every axis y label}+=[yshift=-10pt] 
    \tikzstyle{every axis x label}+=[yshift=5pt]
    \pgfplotsset{every axis legend/.append style={cells={anchor=west},fill=white, at={(0.02,0.98)}, anchor=north west, font=\footnotesize }}
    \begin{axis}[
		    width=.8\linewidth,height=.3\linewidth,
      xmin=-1,
      ymin=-5,
      xmax=1.5,
      ymax=15,
      xtick={-0.2,0},
      xticklabels = { $-c/a_1$ ,$0$},
      grid=major,
      ymajorgrids=false,
      scaled ticks=true,
      xlabel={$m$},
      ylabel={$x(m)$}
      ]
      \def\c{.1}
      \def\aa{.5}
      \def\nunu{0.3408}
      \addplot[domain={-1:1.5},restrict y to domain={-5:15},samples=1500,red,line width=0.5pt] {-1/x -\aa*\aa*x/(\c + \aa*x) - x*(\nunu - \aa*\aa)/\c};
      \addplot[blue!60!white,line width=2pt] plot coordinates{
      (0,-2.3256)(0,1.2826)
      };
      \addplot[blue!60!white,line width=2pt] plot coordinates{
      (0, 2.0910)(0,8.3110)
      };
      \addplot[densely dashed,GREEN] plot coordinates{
      (-10,1.2826)(10,1.2826)
      };
      \addplot[densely dashed,GREEN] plot coordinates{
      (-10,-2.3256)(10,-2.3256)
      };
      \addplot[densely dashed,GREEN] plot coordinates{
      (-10,2.0910)(10,2.0910)
      };
      \addplot[densely dashed,GREEN] plot coordinates{
      (-10,8.3110)(10,8.3110)
      };
      \legend{{$x(m)$},$\supp(\omega)$}
    \end{axis}
  \end{tikzpicture}
  \caption{Functional inverse $x(m)$ for $m\in\RR \setminus \left\{ \{ -c/a_1 \} \cup \{0\} \right\}$, with $f(t) = \max(t,0) - 1/\sqrt{2 \pi}$, for $c=2$ (\textbf{above}, with two edges) and $c=1/10$ (\textbf{bottom}, with four edges). The support of $\omega$ can be read on the vertical axes and the values of $x$ such that $x'(m) = 0$ are marked in {\GREEN \textbf{green}}. }
  \label{fig:x(m)}
\end{figure}
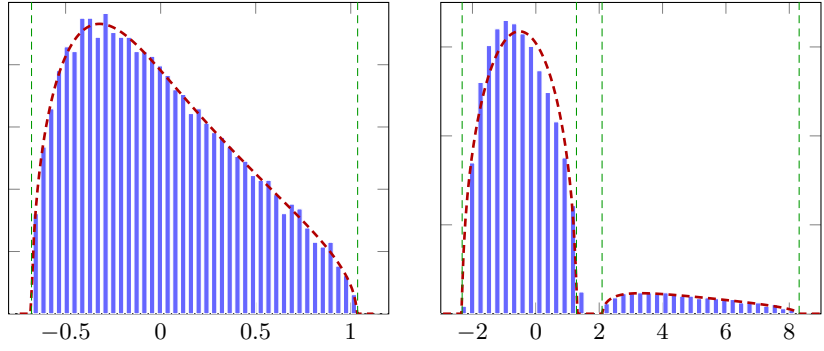
\begin{figure}[htb]
  \centering
  \begin{tabular}{cc}
  \begin{tikzpicture}[font=\footnotesize,spy using outlines]
    \renewcommand{\axisdefaulttryminticks}{4} 
    \pgfplotsset{every major grid/.append style={densely dashed}}          
    \tikzstyle{every axis y label}+=[yshift=-10pt] 
    \tikzstyle{every axis x label}+=[yshift=5pt]
    \pgfplotsset{every axis legend/.style={cells={anchor=west},fill=white,at={(0.98,0.98)}, anchor=north east, font=\footnotesize}}
    \begin{axis}[
      width=.4\linewidth,
      xmin=-0.8,xmax=1.2,
        ymin=0,ymax=1,
        yticklabels = {},
        bar width=2pt,
        scaled ticks=true,
        xlabel={},
        ylabel={}
        ]
        \addplot+[ybar,mark=none,color=white,fill=blue!60!white,area legend] coordinates{
        (-0.779592, 0.000000)(-0.738776, 0.000000)(-0.697959, 0.000000)(-0.657143, 0.321562)(-0.616327, 0.535937)(-0.575510, 0.658437)(-0.534694, 0.780937)(-0.493878, 0.857500)(-0.453061, 0.842187)(-0.412245, 0.949375)(-0.371429, 0.949375)(-0.330612, 0.888125)(-0.289796, 0.964687)(-0.248980, 0.903437)(-0.208163, 0.888125)(-0.167347, 0.888125)(-0.126531, 0.842187)(-0.085714, 0.842187)(-0.044898, 0.826875)(-0.004082, 0.796250)(0.036735, 0.765625)(0.077551, 0.719687)(0.118367, 0.704375)(0.159184, 0.643125)(0.200000, 0.658437)(0.240816, 0.612500)(0.281633, 0.581875)(0.322449, 0.566562)(0.363265, 0.535937)(0.404082, 0.505312)(0.444898, 0.490000)(0.485714, 0.444062)(0.526531, 0.428750)(0.567347, 0.428750)(0.608163, 0.382812)(0.648980, 0.321562)(0.689796, 0.352187)(0.730612, 0.336875)(0.771429, 0.275625)(0.812245, 0.229687)(0.853061, 0.214375)(0.893878, 0.229687)(0.934694, 0.153125)(0.975510, 0.122500)(1.016327, 0.061250)(1.057143, 0.000000)(1.097959, 0.000000)(1.138776, 0.000000)(1.179592, 0.000000)(1.220408, 0.000000)
        };
        \addplot[densely dashed,RED,line width=1pt] plot coordinates{
        (-0.741322, 0.000026)(-0.731975, 0.000029)(-0.722628, 0.000034)(-0.713280, 0.000040)(-0.703933, 0.000050)(-0.694585, 0.000067)(-0.685238, 0.000113)(-0.675891, 0.108465)(-0.666543, 0.249637)(-0.657196, 0.333336)(-0.647848, 0.397694)(-0.638501, 0.451085)(-0.629154, 0.497069)(-0.619806, 0.537567)(-0.610459, 0.573757)(-0.601111, 0.606425)(-0.591764, 0.636127)(-0.582416, 0.663279)(-0.573069, 0.688195)(-0.563722, 0.711125)(-0.554374, 0.732270)(-0.545027, 0.751796)(-0.535679, 0.769840)(-0.526332, 0.786520)(-0.516985, 0.801935)(-0.507637, 0.816172)(-0.498290, 0.829308)(-0.488942, 0.841409)(-0.479595, 0.852537)(-0.470248, 0.862743)(-0.460900, 0.872078)(-0.451553, 0.880585)(-0.442205, 0.888305)(-0.432858, 0.895276)(-0.423511, 0.901531)(-0.414163, 0.907104)(-0.404816, 0.912024)(-0.395468, 0.916320)(-0.386121, 0.920019)(-0.376773, 0.923146)(-0.367426, 0.925726)(-0.358079, 0.927781)(-0.348731, 0.929333)(-0.339384, 0.930404)(-0.330036, 0.931015)(-0.320689, 0.931185)(-0.311342, 0.930932)(-0.301994, 0.930277)(-0.292647, 0.929236)(-0.283299, 0.927828)(-0.273952, 0.926068)(-0.264605, 0.923973)(-0.255257, 0.921560)(-0.245910, 0.918844)(-0.236562, 0.915841)(-0.227215, 0.912564)(-0.217868, 0.909028)(-0.208520, 0.905248)(-0.199173, 0.901237)(-0.189825, 0.897007)(-0.180478, 0.892573)(-0.171130, 0.887945)(-0.161783, 0.883136)(-0.152436, 0.878158)(-0.143088, 0.873022)(-0.133741, 0.867738)(-0.124393, 0.862318)(-0.115046, 0.856770)(-0.105699, 0.851105)(-0.096351, 0.845331)(-0.087004, 0.839458)(-0.077656, 0.833494)(-0.068309, 0.827447)(-0.058962, 0.821325)(-0.049614, 0.815134)(-0.040267, 0.808882)(-0.030919, 0.802576)(-0.021572, 0.796221)(-0.012225, 0.789824)(-0.002877, 0.783390)(0.006470, 0.776924)(0.015818, 0.770432)(0.025165, 0.763917)(0.034512, 0.757384)(0.043860, 0.750837)(0.053207, 0.744281)(0.062555, 0.737718)(0.071902, 0.731151)(0.081250, 0.724585)(0.090597, 0.718021)(0.099944, 0.711462)(0.109292, 0.704911)(0.118639, 0.698369)(0.127987, 0.691839)(0.137334, 0.685323)(0.146681, 0.678822)(0.156029, 0.672338)(0.165376, 0.665872)(0.174724, 0.659425)(0.184071, 0.652998)(0.193418, 0.646593)(0.202766, 0.640210)(0.212113, 0.633850)(0.221461, 0.627513)(0.230808, 0.621200)(0.240155, 0.614911)(0.249503, 0.608647)(0.258850, 0.602409)(0.268198, 0.596194)(0.277545, 0.590006)(0.286893, 0.583842)(0.296240, 0.577703)(0.305587, 0.571589)(0.314935, 0.565501)(0.324282, 0.559437)(0.333630, 0.553396)(0.342977, 0.547380)(0.352324, 0.541387)(0.361672, 0.535418)(0.371019, 0.529471)(0.380367, 0.523547)(0.389714, 0.517644)(0.399061, 0.511761)(0.408409, 0.505900)(0.417756, 0.500058)(0.427104, 0.494234)(0.436451, 0.488430)(0.445798, 0.482642)(0.455146, 0.476872)(0.464493, 0.471117)(0.473841, 0.465377)(0.483188, 0.459651)(0.492535, 0.453939)(0.501883, 0.448238)(0.511230, 0.442549)(0.520578, 0.436870)(0.529925, 0.431200)(0.539273, 0.425537)(0.548620, 0.419882)(0.557967, 0.414232)(0.567315, 0.408586)(0.576662, 0.402944)(0.586010, 0.397303)(0.595357, 0.391662)(0.604704, 0.386020)(0.614052, 0.380376)(0.623399, 0.374727)(0.632747, 0.369072)(0.642094, 0.363409)(0.651441, 0.357737)(0.660789, 0.352053)(0.670136, 0.346356)(0.679484, 0.340643)(0.688831, 0.334912)(0.698178, 0.329162)(0.707526, 0.323388)(0.716873, 0.317588)(0.726221, 0.311761)(0.735568, 0.305902)(0.744916, 0.300009)(0.754263, 0.294077)(0.763610, 0.288103)(0.772958, 0.282084)(0.782305, 0.276014)(0.791653, 0.269888)(0.801000, 0.263702)(0.810347, 0.257449)(0.819695, 0.251124)(0.829042, 0.244718)(0.838390, 0.238224)(0.847737, 0.231633)(0.857084, 0.224936)(0.866432, 0.218119)(0.875779, 0.211172)(0.885127, 0.204078)(0.894474, 0.196821)(0.903821, 0.189379)(0.913169, 0.181729)(0.922516, 0.173841)(0.931864, 0.165680)(0.941211, 0.157201)(0.950558, 0.148347)(0.959906, 0.139047)(0.969253, 0.129200)(0.978601, 0.118670)(0.987948, 0.107251)(0.997296, 0.094621)(1.006643, 0.080206)(1.015990, 0.062786)(1.025338, 0.038467)(1.034685, 0.000042)(1.044033, 0.000021)(1.053380, 0.000016)(1.062727, 0.000013)(1.072075, 0.000011)(1.081422, 0.000010)(1.090770, 0.000009)(1.100117, 0.000008)(1.109464, 0.000008)(1.118812, 0.000007)
        };
        \addplot[densely dashed,GREEN] plot coordinates{
      (-0.6793,0)(-0.6793,2)
      };
      \addplot[densely dashed,GREEN] plot coordinates{
      (1.0362,0)(1.0362,2)
      };
        \end{axis}
  \end{tikzpicture}
  &
  \begin{tikzpicture}[font=\footnotesize,spy using outlines]
    \renewcommand{\axisdefaulttryminticks}{4} 
    \pgfplotsset{every major grid/.append style={densely dashed}}          
    \tikzstyle{every axis y label}+=[yshift=-10pt] 
    \tikzstyle{every axis x label}+=[yshift=5pt]
    \pgfplotsset{every axis legend/.style={cells={anchor=west},fill=white,at={(0.98,0.98)}, anchor=north east, font=\footnotesize}}
    \begin{axis}[
      width=.4\linewidth,
      xmin=-3,
        ymin=0,
        xmax=9,
        ymax=0.35,
        yticklabels = {},
        bar width=2pt,
        scaled ticks=true,
        xlabel={},
        ylabel={}
        ]
        \addplot+[ybar,mark=none,color=white,fill=blue!60!white,area legend] coordinates{
        (-3.867347, 0.000000)(-3.602041, 0.000000)(-3.336735, 0.000942)(-3.071429, 0.000000)(-2.806122, 0.000000)(-2.540816, 0.000000)(-2.275510, 0.008481)(-2.010204, 0.169615)(-1.744898, 0.260077)(-1.479592, 0.301538)(-1.214286, 0.320385)(-0.948980, 0.329808)(-0.683673, 0.326038)(-0.418367, 0.314731)(-0.153061, 0.300596)(0.112245, 0.273269)(0.377551, 0.248769)(0.642857, 0.215788)(0.908163, 0.175269)(1.173469, 0.121558)(1.438776, 0.024500)(1.704082, 0.000000)(1.969388, 0.000942)(2.234694, 0.011308)(2.500000, 0.017904)(2.765306, 0.022615)(3.030612, 0.022615)(3.295918, 0.023558)(3.561224, 0.023558)(3.826531, 0.021673)(4.091837, 0.023558)(4.357143, 0.021673)(4.622449, 0.019788)(4.887755, 0.019788)(5.153061, 0.016962)(5.418367, 0.017904)(5.683673, 0.016962)(5.948980, 0.015077)(6.214286, 0.016019)(6.479592, 0.012250)(6.744898, 0.012250)(7.010204, 0.012250)(7.275510, 0.008481)(7.540816, 0.010365)(7.806122, 0.005654)(8.071429, 0.003769)(8.336735, 0.000942)(8.602041, 0.000000)(8.867347, 0.000000)(9.132653, 0.000000)
        };
        \addplot[densely dashed,RED,line width=1pt] plot coordinates{
        (-3.762674, 0.000000)(-3.698397, 0.000000)(-3.634120, 0.000000)(-3.569843, 0.000000)(-3.505566, 0.000000)(-3.441288, 0.000000)(-3.377011, 0.000001)(-3.312734, 0.000001)(-3.248457, 0.000001)(-3.184180, 0.000001)(-3.119903, 0.000001)(-3.055626, 0.000001)(-2.991349, 0.000001)(-2.927072, 0.000001)(-2.862795, 0.000001)(-2.798518, 0.000001)(-2.734240, 0.000001)(-2.669963, 0.000001)(-2.605686, 0.000002)(-2.541409, 0.000002)(-2.477132, 0.000003)(-2.412855, 0.000004)(-2.348578, 0.000009)(-2.284301, 0.066968)(-2.220024, 0.106594)(-2.155747, 0.134111)(-2.091469, 0.156060)(-2.027192, 0.174555)(-1.962915, 0.190603)(-1.898638, 0.204781)(-1.834361, 0.217455)(-1.770084, 0.228875)(-1.705807, 0.239220)(-1.641530, 0.248624)(-1.577253, 0.257191)(-1.512976, 0.265001)(-1.448698, 0.272121)(-1.384421, 0.278602)(-1.320144, 0.284488)(-1.255867, 0.289816)(-1.191590, 0.294615)(-1.127313, 0.298911)(-1.063036, 0.302726)(-0.998759, 0.306077)(-0.934482, 0.308979)(-0.870205, 0.311445)(-0.805927, 0.313485)(-0.741650, 0.315107)(-0.677373, 0.316318)(-0.613096, 0.317121)(-0.548819, 0.317520)(-0.484542, 0.317517)(-0.420265, 0.317111)(-0.355988, 0.316300)(-0.291711, 0.315081)(-0.227434, 0.313450)(-0.163156, 0.311399)(-0.098879, 0.308920)(-0.034602, 0.306002)(0.029675, 0.302633)(0.093952, 0.298796)(0.158229, 0.294474)(0.222506, 0.289644)(0.286783, 0.284280)(0.351060, 0.278350)(0.415337, 0.271817)(0.479615, 0.264636)(0.543892, 0.256752)(0.608169, 0.248097)(0.672446, 0.238586)(0.736723, 0.228111)(0.801000, 0.216532)(0.865277, 0.203658)(0.929554, 0.189223)(0.993831, 0.172835)(1.058108, 0.153869)(1.122386, 0.131207)(1.186663, 0.102419)(1.250940, 0.059136)(1.315217, 0.000009)(1.379494, 0.000004)(1.443771, 0.000003)(1.508048, 0.000002)(1.572325, 0.000002)(1.636602, 0.000002)(1.700879, 0.000001)(1.765157, 0.000001)(1.829434, 0.000001)(1.893711, 0.000001)(1.957988, 0.000001)(2.022265, 0.000001)(2.086542, 0.000002)(2.150819, 0.007980)(2.215096, 0.011958)(2.279373, 0.014496)(2.343650, 0.016342)(2.407928, 0.017759)(2.472205, 0.018877)(2.536482, 0.019773)(2.600759, 0.020499)(2.665036, 0.021089)(2.729313, 0.021570)(2.793590, 0.021959)(2.857867, 0.022272)(2.922144, 0.022521)(2.986421, 0.022716)(3.050698, 0.022863)(3.114976, 0.022969)(3.179253, 0.023040)(3.243530, 0.023080)(3.307807, 0.023092)(3.372084, 0.023079)(3.436361, 0.023046)(3.500638, 0.022994)(3.564915, 0.022924)(3.629192, 0.022838)(3.693469, 0.022740)(3.757747, 0.022628)(3.822024, 0.022506)(3.886301, 0.022374)(3.950578, 0.022232)(4.014855, 0.022082)(4.079132, 0.021925)(4.143409, 0.021761)(4.207686, 0.021590)(4.271963, 0.021414)(4.336240, 0.021232)(4.400518, 0.021045)(4.464795, 0.020854)(4.529072, 0.020659)(4.593349, 0.020460)(4.657626, 0.020257)(4.721903, 0.020051)(4.786180, 0.019841)(4.850457, 0.019629)(4.914734, 0.019414)(4.979011, 0.019196)(5.043289, 0.018975)(5.107566, 0.018753)(5.171843, 0.018528)(5.236120, 0.018301)(5.300397, 0.018071)(5.364674, 0.017840)(5.428951, 0.017607)(5.493228, 0.017371)(5.557505, 0.017133)(5.621782, 0.016894)(5.686060, 0.016652)(5.750337, 0.016409)(5.814614, 0.016163)(5.878891, 0.015916)(5.943168, 0.015666)(6.007445, 0.015415)(6.071722, 0.015161)(6.135999, 0.014905)(6.200276, 0.014646)(6.264553, 0.014385)(6.328831, 0.014122)(6.393108, 0.013855)(6.457385, 0.013586)(6.521662, 0.013314)(6.585939, 0.013039)(6.650216, 0.012760)(6.714493, 0.012478)(6.778770, 0.012192)(6.843047, 0.011901)(6.907324, 0.011607)(6.971602, 0.011307)(7.035879, 0.011003)(7.100156, 0.010693)(7.164433, 0.010376)(7.228710, 0.010053)(7.292987, 0.009722)(7.357264, 0.009383)(7.421541, 0.009035)(7.485818, 0.008676)(7.550095, 0.008306)(7.614373, 0.007922)(7.678650, 0.007523)(7.742927, 0.007106)(7.807204, 0.006667)(7.871481, 0.006201)(7.935758, 0.005704)(8.000035, 0.005163)(8.064312, 0.004566)(8.128589, 0.003884)(8.192866, 0.003064)(8.257144, 0.001936)(8.321421, 0.000001)(8.385698, 0.000000)(8.449975, 0.000000)(8.514252, 0.000000)(8.578529, 0.000000)(8.642806, 0.000000)(8.707083, 0.000000)(8.771360, 0.000000)(8.835637, 0.000000)(8.899914, 0.000000)(8.964192, 0.000000)(9.028469, 0.000000)
        };
        \addplot[densely dashed,GREEN] plot coordinates{
        (1.2826,0)(1.2826,1)
	      };
	      \addplot[densely dashed,GREEN] plot coordinates{
	      (-2.3256,0)(-2.3256,1)
	      };
	      \addplot[densely dashed,GREEN] plot coordinates{
	      (2.0910,0)(2.0910,1)
	      };
	      \addplot[densely dashed,GREEN] plot coordinates{
	      (8.3110,0)(8.3110,1)
	      };
        \end{axis}
  \end{tikzpicture}
  \end{tabular}
 \caption{ Eigenvalues of $\K$ with $\bmu = \mathbf{0}$ ({\BLUE \textbf{blue}}) versus the limiting laws in Theorem~\ref{theo:DE} and Corollary~\ref{coro:lsd} ({\RED \textbf{red}}) for $p=3\,200$, $n=1\,600$ (\textbf{left}) and $p=400$, $n = 4\,000$ (\textbf{right}), with $f(t) = \max(t,0) - 1/\sqrt{2\pi}$ and Gaussian data.  The values of $x$ such that $x'(m) = 0$ in Figure~\ref{fig:x(m)} are marked in {\GREEN \textbf{green}}. } 
\label{fig:lsd-edges}
\end{figure}

According to the discussion above, it remains to check the sign of $x'(m)$ for $m = \pm \sqrt{\frac2{\kappa - 1}} \frac{c}{a_2} $ to see if they correspond to isolated eigenvalues away from the support of $\omega$.
This, after some algebraic manipulations, concludes the proof of Corollary~\ref{coro:non-info-spike}.

\paragraph{Discussions.}

The limiting spectral measure in Corollary~\ref{coro:lsd} can be shown to be a ``mix'' between the popular Mar\u{c}enko-Pastur and the Wigner's semicircle law.

\begin{Remark}[From Mar\u{c}enko-Pastur to semicircle law]\label{rem:MP-SC}
As already pointed out \citep{fan2019spectral}, here the limiting spectral measure $\omega$ is the so-called \emph{free additive convolution} \citep{voiculescu1986addition} of the semicircle and Mar\u{c}enko-Pastur laws, weighted respectively by $a_1$ and $\sqrt{\nu - a_1^2}$, i.e.,
\begin{equation}\label{eq:free-additive-convolution} 
  \omega = a_1 ( \omega_{MP,c^{-1}} - 1) \boxplus \sqrt{(\nu-a_1^2)/c} \cdot \omega_{SC} 
\end{equation}
where we denote $a_1(\omega_{MP,c^{-1}}-1)$ the law of $a_1(x-1)$ for $x\sim \omega_{MP,c^{-1}}$ and $\sqrt{ (\nu - a_1^2)/c } \cdot \omega_{SC}$ the law of $\sqrt{ (\nu - a_1^2)/c } \cdot x$ for $x \sim \omega_{SC}$. Figure~\ref{fig:MP-to-SC} compares the eigenvalue distributions of $\K_0$ for $f(t) = a_1 t + a_2 (t^2 - 1)/\sqrt 2$ (so that $\nu - a_1^2 = a_2^2$) with different pairs of $(a_1, a_2)$. We observe a transition from the Mar\u{c}enko-Pastur law (in the left display, with $a_1 \neq 0$ and $a_2 = 0$) to the semicircle law (in the right display, with $a_1 = 0$ and $a_2 \neq 0$).
\end{Remark}

\begin{figure}[htb]
  \centering
  \begin{tabular}{ccc}
  \begin{tikzpicture}[font=\footnotesize]
    \renewcommand{\axisdefaulttryminticks}{4} 
    \pgfplotsset{every major grid/.append style={densely dashed}}          
    \tikzstyle{every axis y label}+=[yshift=-10pt] 
    \tikzstyle{every axis x label}+=[yshift=5pt]
    \pgfplotsset{every axis legend/.style={cells={anchor=west},fill=white,at={(0.98,0.98)}, anchor=north east, font=\footnotesize}}
    \begin{axis}[
      width=.35\linewidth,
      xmin=-1,xmax=2.5,
        ymin=0,ymax=1,
        yticklabels = {},
        bar width=1.5pt,
        grid=major,
        ymajorgrids=false,
        scaled ticks=true,
        xlabel={},
        ylabel={}
        ]
        \addplot+[ybar,mark=none,color=white,fill=blue!60!white,area legend] coordinates{
        (-0.972427, 0.000000)(-0.910067, 0.375839)(-0.847706, 0.845639)(-0.785346, 0.908279)(-0.722985, 0.782999)(-0.660625, 0.751679)(-0.598265, 0.689039)(-0.535904, 0.657719)(-0.473544, 0.657719)(-0.411183, 0.595079)(-0.348823, 0.532439)(-0.286463, 0.563759)(-0.224102, 0.469799)(-0.161742, 0.407159)(-0.099381, 0.469799)(-0.037021, 0.469799)(0.025340, 0.438479)(0.087700, 0.407159)(0.150060, 0.407159)(0.212421, 0.344519)(0.274781, 0.344519)(0.337142, 0.344519)(0.399502, 0.313199)(0.461862, 0.313199)(0.524223, 0.281880)(0.586583, 0.281880)(0.648944, 0.219240)(0.711304, 0.250560)(0.773665, 0.281880)(0.836025, 0.250560)(0.898385, 0.219240)(0.960746, 0.250560)(1.023106, 0.187920)(1.085467, 0.156600)(1.147827, 0.219240)(1.210187, 0.187920)(1.272548, 0.156600)(1.334908, 0.156600)(1.397269, 0.156600)(1.459629, 0.156600)(1.521990, 0.093960)(1.584350, 0.125280)(1.646710, 0.093960)(1.709071, 0.093960)(1.771431, 0.093960)(1.833792, 0.000000)(1.896152, 0.031320)(1.958512, 0.000000)(2.020873, 0.000000)(2.083233, 0.000000)
        };
        \addplot[densely dashed,RED,line width=1pt] plot coordinates{
        (-1.003607, 0.000024)(-0.972742, 0.000037)(-0.941877, 0.000075)(-0.911012, 0.340014)(-0.880146, 0.819312)(-0.849281, 0.894558)(-0.818416, 0.896843)(-0.787551, 0.876447)(-0.756685, 0.848547)(-0.725820, 0.818730)(-0.694955, 0.789235)(-0.664089, 0.760963)(-0.633224, 0.734242)(-0.602359, 0.709142)(-0.571494, 0.685618)(-0.540628, 0.663572)(-0.509763, 0.642890)(-0.478898, 0.623457)(-0.448033, 0.605164)(-0.417167, 0.587906)(-0.386302, 0.571591)(-0.355437, 0.556135)(-0.324572, 0.541463)(-0.293706, 0.527506)(-0.262841, 0.514205)(-0.231976, 0.501505)(-0.201111, 0.489357)(-0.170245, 0.477719)(-0.139380, 0.466550)(-0.108515, 0.455815)(-0.077650, 0.445484)(-0.046784, 0.435525)(-0.015919, 0.425913)(0.014946, 0.416623)(0.045811, 0.407635)(0.076677, 0.398927)(0.107542, 0.390483)(0.138407, 0.382283)(0.169272, 0.374313)(0.200138, 0.366559)(0.231003, 0.359007)(0.261868, 0.351645)(0.292733, 0.344461)(0.323599, 0.337444)(0.354464, 0.330586)(0.385329, 0.323876)(0.416194, 0.317305)(0.447060, 0.310866)(0.477925, 0.304550)(0.508790, 0.298350)(0.539656, 0.292260)(0.570521, 0.286273)(0.601386, 0.280382)(0.632251, 0.274581)(0.663117, 0.268865)(0.693982, 0.263227)(0.724847, 0.257664)(0.755712, 0.252170)(0.786578, 0.246739)(0.817443, 0.241367)(0.848308, 0.236049)(0.879173, 0.230780)(0.910039, 0.225556)(0.940904, 0.220372)(0.971769, 0.215224)(1.002634, 0.210106)(1.033500, 0.205015)(1.064365, 0.199945)(1.095230, 0.194892)(1.126095, 0.189850)(1.156961, 0.184814)(1.187826, 0.179780)(1.218691, 0.174741)(1.249556, 0.169690)(1.280422, 0.164622)(1.311287, 0.159530)(1.342152, 0.154404)(1.373017, 0.149239)(1.403883, 0.144022)(1.434748, 0.138743)(1.465613, 0.133391)(1.496478, 0.127951)(1.527344, 0.122405)(1.558209, 0.116736)(1.589074, 0.110916)(1.619939, 0.104918)(1.650805, 0.098704)(1.681670, 0.092223)(1.712535, 0.085411)(1.743401, 0.078174)(1.774266, 0.070378)(1.805131, 0.061803)(1.835996, 0.052056)(1.866862, 0.040291)(1.897727, 0.023652)(1.928592, 0.000008)(1.959457, 0.000004)(1.990323, 0.000003)(2.021188, 0.000002)(2.052053, 0.000002)
        };
        \end{axis}
  \end{tikzpicture}
  &
  \begin{tikzpicture}[font=\footnotesize]
    \renewcommand{\axisdefaulttryminticks}{4} 
    \pgfplotsset{every major grid/.append style={densely dashed}}          
    \tikzstyle{every axis y label}+=[yshift=-10pt] 
    \tikzstyle{every axis x label}+=[yshift=5pt]
    \pgfplotsset{every axis legend/.style={cells={anchor=west},fill=white,at={(0.98,0.98)}, anchor=north east, font=\footnotesize}}
    \begin{axis}[
      width=.35\linewidth,
      xmin=-1.5,
        ymin=0,
        xmax=2.5,
        ymax=0.6,
        yticklabels = {},
        bar width=2pt,
        grid=major,
        ymajorgrids=false,
        scaled ticks=true,
        xlabel={},
        ylabel={}
        ]
        \addplot+[ybar,mark=none,color=white,fill=blue!60!white,area legend] coordinates{
        (-1.339420, 0.000000)(-1.266615, 0.053653)(-1.193810, 0.214614)(-1.121005, 0.348748)(-1.048200, 0.375574)(-0.975395, 0.429228)(-0.902589, 0.509708)(-0.829784, 0.482881)(-0.756979, 0.482881)(-0.684174, 0.536535)(-0.611369, 0.536535)(-0.538564, 0.482881)(-0.465758, 0.482881)(-0.392953, 0.509708)(-0.320148, 0.482881)(-0.247343, 0.482881)(-0.174538, 0.402401)(-0.101732, 0.402401)(-0.028927, 0.456054)(0.043878, 0.348748)(0.116683, 0.429228)(0.189488, 0.348748)(0.262293, 0.375574)(0.335099, 0.321921)(0.407904, 0.321921)(0.480709, 0.295094)(0.553514, 0.268267)(0.626319, 0.295094)(0.699125, 0.295094)(0.771930, 0.241441)(0.844735, 0.268267)(0.917540, 0.214614)(0.990345, 0.214614)(1.063150, 0.241441)(1.135956, 0.214614)(1.208761, 0.187787)(1.281566, 0.160960)(1.354371, 0.187787)(1.427176, 0.107307)(1.499981, 0.134134)(1.572787, 0.134134)(1.645592, 0.134134)(1.718397, 0.080480)(1.791202, 0.107307)(1.864007, 0.053653)(1.936813, 0.053653)(2.009618, 0.026827)(2.082423, 0.000000)(2.155228, 0.000000)(2.228033, 0.000000)
        };
        \addplot[densely dashed,RED,line width=1pt] plot coordinates{
        (-1.375823, 0.000008)(-1.339788, 0.000010)(-1.303753, 0.000014)(-1.267718, 0.000030)(-1.231683, 0.139133)(-1.195649, 0.234390)(-1.159614, 0.295406)(-1.123579, 0.341217)(-1.087544, 0.377529)(-1.051509, 0.407027)(-1.015474, 0.431242)(-0.979439, 0.451165)(-0.943404, 0.467483)(-0.907370, 0.480709)(-0.871335, 0.491238)(-0.835300, 0.499394)(-0.799265, 0.505444)(-0.763230, 0.509620)(-0.727195, 0.512125)(-0.691160, 0.513141)(-0.655125, 0.512831)(-0.619090, 0.511346)(-0.583056, 0.508825)(-0.547021, 0.505394)(-0.510986, 0.501172)(-0.474951, 0.496265)(-0.438916, 0.490774)(-0.402881, 0.484787)(-0.366846, 0.478387)(-0.330811, 0.471647)(-0.294776, 0.464632)(-0.258742, 0.457401)(-0.222707, 0.450004)(-0.186672, 0.442487)(-0.150637, 0.434888)(-0.114602, 0.427241)(-0.078567, 0.419573)(-0.042532, 0.411909)(-0.006497, 0.404268)(0.029538, 0.396668)(0.065572, 0.389120)(0.101607, 0.381638)(0.137642, 0.374228)(0.173677, 0.366898)(0.209712, 0.359652)(0.245747, 0.352494)(0.281782, 0.345426)(0.317817, 0.338449)(0.353851, 0.331563)(0.389886, 0.324768)(0.425921, 0.318062)(0.461956, 0.311445)(0.497991, 0.304913)(0.534026, 0.298465)(0.570061, 0.292097)(0.606096, 0.285807)(0.642131, 0.279591)(0.678165, 0.273446)(0.714200, 0.267368)(0.750235, 0.261354)(0.786270, 0.255399)(0.822305, 0.249499)(0.858340, 0.243651)(0.894375, 0.237851)(0.930410, 0.232093)(0.966445, 0.226373)(1.002479, 0.220687)(1.038514, 0.215030)(1.074549, 0.209396)(1.110584, 0.203780)(1.146619, 0.198178)(1.182654, 0.192582)(1.218689, 0.186987)(1.254724, 0.181386)(1.290758, 0.175772)(1.326793, 0.170136)(1.362828, 0.164470)(1.398863, 0.158765)(1.434898, 0.153010)(1.470933, 0.147192)(1.506968, 0.141297)(1.543003, 0.135309)(1.579038, 0.129209)(1.615072, 0.122975)(1.651107, 0.116577)(1.687142, 0.109982)(1.723177, 0.103143)(1.759212, 0.096005)(1.795247, 0.088485)(1.831282, 0.080470)(1.867317, 0.071788)(1.903352, 0.062150)(1.939386, 0.051005)(1.975421, 0.037008)(2.011456, 0.012868)(2.047491, 0.000005)(2.083526, 0.000003)(2.119561, 0.000002)(2.155596, 0.000002)(2.191631, 0.000002)
        };
        \end{axis}
  \end{tikzpicture}
  &
  \begin{tikzpicture}[font=\footnotesize]
    \renewcommand{\axisdefaulttryminticks}{4} 
    \pgfplotsset{every major grid/.append style={densely dashed}}          
    \tikzstyle{every axis y label}+=[yshift=-10pt] 
    \tikzstyle{every axis x label}+=[yshift=5pt]
    \pgfplotsset{every axis legend/.style={cells={anchor=west},fill=white,at={(0.98,0.98)}, anchor=north east, font=\footnotesize}}
    \begin{axis}[
      width=.35\linewidth,
      xmin=-2,xmax=2,
        ymin=0,ymax=0.5,
        yticklabels = {},
        bar width=1.5pt,
        grid=major,
        ymajorgrids=false,
        scaled ticks=true,
        xlabel={},
        ylabel={}
        ]
        \addplot+[ybar,mark=none,color=white,fill=blue!60!white,area legend] coordinates{
        (-1.513543, 0.000000)(-1.449077, 0.000000)(-1.384611, 0.121188)(-1.320145, 0.090891)(-1.255679, 0.212079)(-1.191213, 0.181782)(-1.126747, 0.333266)(-1.062281, 0.242375)(-0.997815, 0.333266)(-0.933349, 0.363563)(-0.868883, 0.363563)(-0.804416, 0.424157)(-0.739950, 0.333266)(-0.675484, 0.363563)(-0.611018, 0.454454)(-0.546552, 0.424157)(-0.482086, 0.454454)(-0.417620, 0.393860)(-0.353154, 0.484751)(-0.288688, 0.424157)(-0.224222, 0.484751)(-0.159756, 0.454454)(-0.095289, 0.424157)(-0.030823, 0.484751)(0.033643, 0.424157)(0.098109, 0.484751)(0.162575, 0.454454)(0.227041, 0.424157)(0.291507, 0.454454)(0.355973, 0.454454)(0.420439, 0.393860)(0.484905, 0.424157)(0.549371, 0.424157)(0.613838, 0.393860)(0.678304, 0.393860)(0.742770, 0.333266)(0.807236, 0.363563)(0.871702, 0.333266)(0.936168, 0.302969)(1.000634, 0.302969)(1.065100, 0.302969)(1.129566, 0.212079)(1.194032, 0.242375)(1.258498, 0.151485)(1.322964, 0.212079)(1.387431, 0.121188)(1.451897, 0.060594)(1.516363, 0.000000)(1.580829, 0.000000)(1.645295, 0.000000)
        };
        \addplot[densely dashed,RED,line width=1pt] plot coordinates{
        (-1.545776, 0.000000)(-1.513869, 0.000000)(-1.481962, 0.000000)(-1.450054, 0.000000)(-1.418147, 0.000000)(-1.386239, 0.089093)(-1.354332, 0.129606)(-1.322424, 0.159535)(-1.290517, 0.184116)(-1.258609, 0.205280)(-1.226702, 0.223997)(-1.194794, 0.240838)(-1.162887, 0.256174)(-1.130979, 0.270261)(-1.099072, 0.283285)(-1.067165, 0.295387)(-1.035257, 0.306677)(-1.003350, 0.317240)(-0.971442, 0.327148)(-0.939535, 0.336457)(-0.907627, 0.345218)(-0.875720, 0.353469)(-0.843812, 0.361247)(-0.811905, 0.368582)(-0.779997, 0.375498)(-0.748090, 0.382020)(-0.716183, 0.388166)(-0.684275, 0.393955)(-0.652368, 0.399402)(-0.620460, 0.404520)(-0.588553, 0.409323)(-0.556645, 0.413821)(-0.524738, 0.418023)(-0.492830, 0.421940)(-0.460923, 0.425578)(-0.429015, 0.428945)(-0.397108, 0.432047)(-0.365200, 0.434890)(-0.333293, 0.437478)(-0.301386, 0.439817)(-0.269478, 0.441910)(-0.237571, 0.443761)(-0.205663, 0.445373)(-0.173756, 0.446748)(-0.141848, 0.447888)(-0.109941, 0.448796)(-0.078033, 0.449472)(-0.046126, 0.449919)(-0.014218, 0.450135)(0.017689, 0.450123)(0.049596, 0.449881)(0.081504, 0.449410)(0.113411, 0.448708)(0.145319, 0.447775)(0.177226, 0.446609)(0.209134, 0.445209)(0.241041, 0.443571)(0.272949, 0.441694)(0.304856, 0.439575)(0.336764, 0.437209)(0.368671, 0.434593)(0.400578, 0.431722)(0.432486, 0.428592)(0.464393, 0.425196)(0.496301, 0.421528)(0.528208, 0.417580)(0.560116, 0.413346)(0.592023, 0.408816)(0.623931, 0.403979)(0.655838, 0.398825)(0.687746, 0.393342)(0.719653, 0.387515)(0.751561, 0.381329)(0.783468, 0.374766)(0.815375, 0.367805)(0.847283, 0.360423)(0.879190, 0.352595)(0.911098, 0.344290)(0.943005, 0.335472)(0.974913, 0.326100)(1.006820, 0.316124)(1.038728, 0.305485)(1.070635, 0.294112)(1.102543, 0.281915)(1.134450, 0.268783)(1.166357, 0.254570)(1.198265, 0.239085)(1.230172, 0.222060)(1.262080, 0.203109)(1.293987, 0.181631)(1.325895, 0.156589)(1.357802, 0.125873)(1.389710, 0.083435)(1.421617, 0.000000)(1.453525, 0.000000)(1.485432, 0.000000)(1.517340, 0.000000)(1.549247, 0.000000)(1.581154, 0.000000)(1.613062, 0.000000)
        };
        \end{axis}
  \end{tikzpicture}
  \end{tabular}
 \caption{ Eigenvalues of $\K$ with $\bmu = \mathbf{0}$ ({\BLUE \textbf{blue}}) versus the limiting laws in Theorem~\ref{theo:DE} and Corollary~\ref{coro:lsd} ({\RED \textbf{red}}) for Gaussian data, $p=1\,024$, $n=512$ and $f(t) = a_1 t + a_2(t^2 - 1)/\sqrt 2$ with $a_1 = 1, a_2 = 0$ (\textbf{left}), $a_1 = 1, a_2 = 1/2$ (\textbf{middle}), and $a_1 = 0, a_2 = 1$ (\textbf{right}). } 
\label{fig:MP-to-SC}
\end{figure}
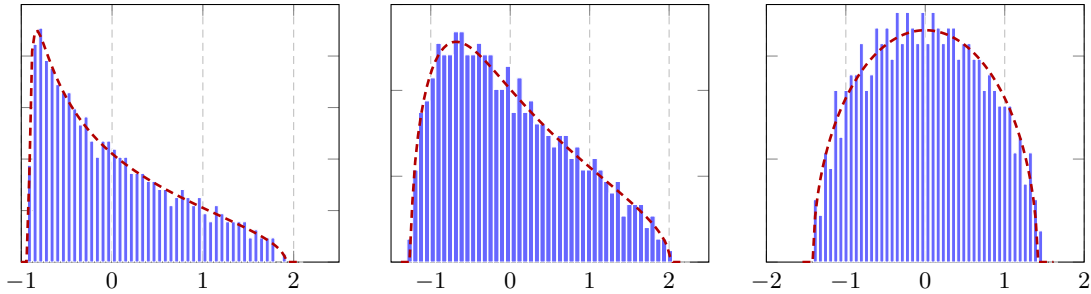
Remark~\ref{rem:MP-SC} tells us that, depending on the ratio $\nu/a_1^2$, the eigenspectrum of $\K$ exhibits a transition from the Mar\u{c}enko-Pastur to semicircle-like shape. Note from Figure~\ref{fig:coeff}-(right) that, for the sparse $f_1$, the ratio $\nu/a_1^2$ is an increasing function of the truncation threshold $s$ and therefore, as the matrix $\K$ become sparser, it eigenspectrum changes from a Mar\u{c}enko-Pastur-type (at $s = 0$) to be more semicircle-like. This is depicted in Figure~\ref{fig:MP-to-SC-sparse} and similar conclusions hold for quantized $f_2$ and binary $f_3$ in the $s \ge s_{\rm opt}$ regime.
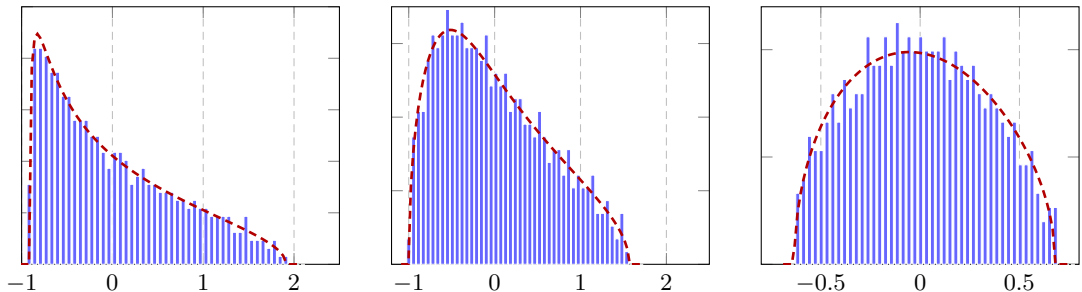
\begin{figure}[htb]
  \centering
  \begin{tabular}{ccc}
  \begin{tikzpicture}[font=\footnotesize]
    \renewcommand{\axisdefaulttryminticks}{4} 
    \pgfplotsset{every major grid/.append style={densely dashed}}          
    \tikzstyle{every axis y label}+=[yshift=-10pt] 
    \tikzstyle{every axis x label}+=[yshift=5pt]
    \pgfplotsset{every axis legend/.style={cells={anchor=west},fill=white,at={(0.98,0.98)}, anchor=north east, font=\footnotesize}}
    \begin{axis}[
      width=.35\linewidth,
      xmin=-1,xmax=2.5,
        ymin=0,ymax=1,
        yticklabels = {},
        bar width=1.5pt,
        grid=major,
        ymajorgrids=false,
        scaled ticks=true,
        xlabel={},
        ylabel={}
        ]
        \addplot+[ybar,mark=none,color=white,fill=blue!60!white,area legend] coordinates{
        (-0.981094, 0.000000)(-0.918243, 0.310756)(-0.855393, 0.839042)(-0.792542, 0.839042)(-0.729691, 0.807966)(-0.666841, 0.745815)(-0.603990, 0.745815)(-0.541139, 0.652588)(-0.478288, 0.652588)(-0.415438, 0.559361)(-0.352587, 0.559361)(-0.289736, 0.559361)(-0.226886, 0.497210)(-0.164035, 0.497210)(-0.101184, 0.435059)(-0.038334, 0.372908)(0.024517, 0.435059)(0.087368, 0.435059)(0.150218, 0.403983)(0.213069, 0.310756)(0.275920, 0.341832)(0.338771, 0.372908)(0.401621, 0.310756)(0.464472, 0.310756)(0.527323, 0.279681)(0.590173, 0.279681)(0.653024, 0.279681)(0.715875, 0.248605)(0.778725, 0.248605)(0.841576, 0.217529)(0.904427, 0.248605)(0.967277, 0.217529)(1.030128, 0.217529)(1.092979, 0.186454)(1.155830, 0.186454)(1.218680, 0.186454)(1.281531, 0.186454)(1.344382, 0.124303)(1.407232, 0.124303)(1.470083, 0.186454)(1.532934, 0.093227)(1.595784, 0.093227)(1.658635, 0.093227)(1.721486, 0.062151)(1.784336, 0.093227)(1.847187, 0.031076)(1.910038, 0.031076)(1.972889, 0.000000)(2.035739, 0.000000)(2.098590, 0.000000)
        };
        \addplot[densely dashed,RED,line width=1pt] plot coordinates{
        (-1.012519, 0.000022)(-0.981411, 0.000033)(-0.950304, 0.000060)(-0.919196, 0.000330)(-0.888088, 0.763609)(-0.856980, 0.879383)(-0.825872, 0.894488)(-0.794764, 0.879013)(-0.763656, 0.853063)(-0.732548, 0.823925)(-0.701440, 0.794530)(-0.670332, 0.766090)(-0.639224, 0.739081)(-0.608116, 0.713645)(-0.577009, 0.689773)(-0.545901, 0.667389)(-0.514793, 0.646386)(-0.483685, 0.626652)(-0.452577, 0.608079)(-0.421469, 0.590564)(-0.390361, 0.574012)(-0.359253, 0.558337)(-0.328145, 0.543463)(-0.297037, 0.529321)(-0.265929, 0.515848)(-0.234821, 0.502990)(-0.203713, 0.490696)(-0.172606, 0.478922)(-0.141498, 0.467626)(-0.110390, 0.456774)(-0.079282, 0.446331)(-0.048174, 0.436269)(-0.017066, 0.426561)(0.014042, 0.417180)(0.045150, 0.408106)(0.076258, 0.399318)(0.107366, 0.390796)(0.138474, 0.382524)(0.169582, 0.374485)(0.200689, 0.366666)(0.231797, 0.359051)(0.262905, 0.351629)(0.294013, 0.344388)(0.325121, 0.337317)(0.356229, 0.330405)(0.387337, 0.323644)(0.418445, 0.317024)(0.449553, 0.310537)(0.480661, 0.304175)(0.511769, 0.297930)(0.542877, 0.291796)(0.573984, 0.285766)(0.605092, 0.279832)(0.636200, 0.273990)(0.667308, 0.268233)(0.698416, 0.262556)(0.729524, 0.256953)(0.760632, 0.251420)(0.791740, 0.245950)(0.822848, 0.240538)(0.853956, 0.235182)(0.885064, 0.229874)(0.916172, 0.224611)(0.947280, 0.219387)(0.978387, 0.214199)(1.009495, 0.209041)(1.040603, 0.203908)(1.071711, 0.198796)(1.102819, 0.193700)(1.133927, 0.188613)(1.165035, 0.183532)(1.196143, 0.178451)(1.227251, 0.173362)(1.258359, 0.168260)(1.289467, 0.163138)(1.320575, 0.157989)(1.351682, 0.152803)(1.382790, 0.147573)(1.413898, 0.142288)(1.445006, 0.136936)(1.476114, 0.131503)(1.507222, 0.125976)(1.538330, 0.120333)(1.569438, 0.114556)(1.600546, 0.108614)(1.631654, 0.102477)(1.662762, 0.096101)(1.693870, 0.089428)(1.724977, 0.082381)(1.756085, 0.074850)(1.787193, 0.066662)(1.818301, 0.057533)(1.849409, 0.046907)(1.880517, 0.033371)(1.911625, 0.007043)(1.942733, 0.000005)(1.973841, 0.000003)(2.004949, 0.000003)(2.036057, 0.000002)(2.067165, 0.000002)
        };
        \end{axis}
  \end{tikzpicture}
  &
  \begin{tikzpicture}[font=\footnotesize]
    \renewcommand{\axisdefaulttryminticks}{4} 
    \pgfplotsset{every major grid/.append style={densely dashed}}          
    \tikzstyle{every axis y label}+=[yshift=-10pt] 
    \tikzstyle{every axis x label}+=[yshift=5pt]
    \pgfplotsset{every axis legend/.style={cells={anchor=west},fill=white,at={(0.98,0.98)}, anchor=north east, font=\footnotesize}}
    \begin{axis}[
      width=.35\linewidth,
      xmin=-1.2,
        ymin=0,
        xmax=2.5,
        ymax=0.7,
        yticklabels = {},
        bar width=1.5pt,
        grid=major,
        ymajorgrids=false,
        scaled ticks=true,
        xlabel={},
        ylabel={}
        ]
        \addplot+[ybar,mark=none,color=white,fill=blue!60!white,area legend] coordinates{
        (-1.055968, 0.000000)(-0.999579, 0.069273)(-0.943189, 0.346363)(-0.886800, 0.415636)(-0.830410, 0.415636)(-0.774021, 0.554181)(-0.717631, 0.623453)(-0.661242, 0.588817)(-0.604852, 0.623453)(-0.548462, 0.692726)(-0.492073, 0.623453)(-0.435683, 0.623453)(-0.379294, 0.658090)(-0.322904, 0.588817)(-0.266515, 0.588817)(-0.210125, 0.588817)(-0.153736, 0.554181)(-0.097346, 0.623453)(-0.040957, 0.484908)(0.015433, 0.519545)(0.071822, 0.484908)(0.128212, 0.519545)(0.184601, 0.415636)(0.240991, 0.450272)(0.297380, 0.450272)(0.353770, 0.380999)(0.410160, 0.380999)(0.466549, 0.346363)(0.522939, 0.415636)(0.579328, 0.346363)(0.635718, 0.277090)(0.692107, 0.311727)(0.748497, 0.311727)(0.804886, 0.242454)(0.861276, 0.311727)(0.917665, 0.207818)(0.974055, 0.242454)(1.030444, 0.207818)(1.086834, 0.207818)(1.143223, 0.242454)(1.199613, 0.138545)(1.256003, 0.138545)(1.312392, 0.173182)(1.368782, 0.103909)(1.425171, 0.069273)(1.481561, 0.138545)(1.537950, 0.034636)(1.594340, 0.000000)(1.650729, 0.000000)(1.707119, 0.000000)
        };
        \addplot[densely dashed,RED,line width=1pt] plot coordinates{
        (-1.088563, 0.000013)(-1.060239, 0.000017)(-1.031916, 0.000026)(-1.003592, 0.000088)(-0.975269, 0.210013)(-0.946945, 0.305026)(-0.918622, 0.371327)(-0.890298, 0.422765)(-0.861975, 0.464424)(-0.833651, 0.498871)(-0.805328, 0.527641)(-0.777004, 0.551740)(-0.748681, 0.571883)(-0.720357, 0.588601)(-0.692034, 0.602311)(-0.663711, 0.613347)(-0.635387, 0.621990)(-0.607064, 0.628480)(-0.578740, 0.633025)(-0.550417, 0.635813)(-0.522093, 0.637012)(-0.493770, 0.636777)(-0.465446, 0.635253)(-0.437123, 0.632572)(-0.408799, 0.628858)(-0.380476, 0.624228)(-0.352152, 0.618790)(-0.323829, 0.612645)(-0.295505, 0.605886)(-0.267182, 0.598600)(-0.238858, 0.590866)(-0.210535, 0.582755)(-0.182211, 0.574334)(-0.153888, 0.565661)(-0.125564, 0.556790)(-0.097241, 0.547766)(-0.068917, 0.538630)(-0.040594, 0.529419)(-0.012270, 0.520165)(0.016053, 0.510893)(0.044377, 0.501626)(0.072700, 0.492385)(0.101024, 0.483185)(0.129347, 0.474039)(0.157671, 0.464959)(0.185994, 0.455953)(0.214318, 0.447027)(0.242641, 0.438188)(0.270965, 0.429437)(0.299288, 0.420779)(0.327612, 0.412213)(0.355935, 0.403741)(0.384259, 0.395361)(0.412582, 0.387073)(0.440906, 0.378874)(0.469229, 0.370762)(0.497553, 0.362735)(0.525876, 0.354789)(0.554200, 0.346920)(0.582523, 0.339125)(0.610847, 0.331399)(0.639170, 0.323739)(0.667494, 0.316138)(0.695817, 0.308593)(0.724140, 0.301099)(0.752464, 0.293649)(0.780787, 0.286237)(0.809111, 0.278859)(0.837434, 0.271508)(0.865758, 0.264176)(0.894081, 0.256858)(0.922405, 0.249544)(0.950728, 0.242228)(0.979052, 0.234901)(1.007375, 0.227552)(1.035699, 0.220172)(1.064022, 0.212748)(1.092346, 0.205269)(1.120669, 0.197719)(1.148993, 0.190082)(1.177316, 0.182339)(1.205640, 0.174468)(1.233963, 0.166442)(1.262287, 0.158230)(1.290610, 0.149793)(1.318934, 0.141082)(1.347257, 0.132035)(1.375581, 0.122568)(1.403904, 0.112565)(1.432228, 0.101860)(1.460551, 0.090191)(1.488875, 0.077112)(1.517198, 0.061718)(1.545522, 0.041488)(1.573845, 0.000021)(1.602169, 0.000007)(1.630492, 0.000005)(1.658816, 0.000004)(1.687139, 0.000003)(1.715463, 0.000003)
        };
        \end{axis}
  \end{tikzpicture}
  &
  \begin{tikzpicture}[font=\footnotesize]
    \renewcommand{\axisdefaulttryminticks}{4} 
    \pgfplotsset{every major grid/.append style={densely dashed}}          
    \tikzstyle{every axis y label}+=[yshift=-10pt] 
    \tikzstyle{every axis x label}+=[yshift=5pt]
    \pgfplotsset{every axis legend/.style={cells={anchor=west},fill=white,at={(0.98,0.98)}, anchor=north east, font=\footnotesize}}
    \begin{axis}[
      width=.35\linewidth,
      xmin=-.8,xmax=.8,
        ymin=0,ymax=1.2,
        yticklabels = {},
        bar width=1.5pt,
        grid=major,
        ymajorgrids=false,
        scaled ticks=true,
        xlabel={},
        ylabel={}
        ]
        \addplot+[ybar,mark=none,color=white,fill=blue!60!white,area legend] coordinates{
        (-0.675225, 0.000000)(-0.645747, 0.000000)(-0.616270, 0.331289)(-0.586792, 0.397547)(-0.557314, 0.596320)(-0.527837, 0.530062)(-0.498359, 0.530062)(-0.468881, 0.662578)(-0.439404, 0.795094)(-0.409926, 0.662578)(-0.380448, 0.861352)(-0.350971, 0.728836)(-0.321493, 0.795094)(-0.292015, 0.795094)(-0.262538, 1.060125)(-0.233060, 0.927609)(-0.203582, 0.927609)(-0.174105, 1.060125)(-0.144627, 0.927609)(-0.115149, 1.126383)(-0.085672, 0.993867)(-0.056194, 1.060125)(-0.026716, 0.993867)(0.002761, 1.060125)(0.032239, 0.993867)(0.061717, 0.993867)(0.091194, 0.993867)(0.120672, 1.060125)(0.150149, 0.861352)(0.179627, 0.993867)(0.209105, 0.927609)(0.238582, 0.861352)(0.268060, 0.927609)(0.297538, 0.795094)(0.327015, 0.728836)(0.356493, 0.795094)(0.385971, 0.728836)(0.415448, 0.662578)(0.444926, 0.596320)(0.474404, 0.662578)(0.503881, 0.463805)(0.533359, 0.463805)(0.562837, 0.530062)(0.592314, 0.331289)(0.621792, 0.198773)(0.651270, 0.265031)(0.680747, 0.265031)(0.710225, 0.000000)(0.739703, 0.000000)(0.769180, 0.000000)
        };
        \addplot[densely dashed,RED,line width=1pt] plot coordinates{
        (-0.689964, 0.000025)(-0.675374, 0.000031)(-0.660784, 0.000042)(-0.646194, 0.000068)(-0.631604, 0.089027)(-0.617014, 0.241469)(-0.602424, 0.327543)(-0.587834, 0.393520)(-0.573244, 0.448366)(-0.558654, 0.495792)(-0.544064, 0.537769)(-0.529474, 0.575495)(-0.514884, 0.609761)(-0.500294, 0.641128)(-0.485704, 0.670006)(-0.471114, 0.696708)(-0.456525, 0.721478)(-0.441935, 0.744514)(-0.427345, 0.765975)(-0.412755, 0.785995)(-0.398165, 0.804683)(-0.383575, 0.822134)(-0.368985, 0.838431)(-0.354395, 0.853642)(-0.339805, 0.867828)(-0.325215, 0.881043)(-0.310625, 0.893334)(-0.296035, 0.904743)(-0.281445, 0.915307)(-0.266855, 0.925059)(-0.252265, 0.934029)(-0.237675, 0.942244)(-0.223085, 0.949728)(-0.208495, 0.956503)(-0.193905, 0.962590)(-0.179315, 0.968005)(-0.164725, 0.972766)(-0.150136, 0.976888)(-0.135546, 0.980385)(-0.120956, 0.983268)(-0.106366, 0.985549)(-0.091776, 0.987238)(-0.077186, 0.988346)(-0.062596, 0.988880)(-0.048006, 0.988849)(-0.033416, 0.988259)(-0.018826, 0.987116)(-0.004236, 0.985427)(0.010354, 0.983196)(0.024944, 0.980428)(0.039534, 0.977127)(0.054124, 0.973296)(0.068714, 0.968938)(0.083304, 0.964056)(0.097894, 0.958650)(0.112484, 0.952723)(0.127074, 0.946274)(0.141663, 0.939305)(0.156253, 0.931815)(0.170843, 0.923802)(0.185433, 0.915266)(0.200023, 0.906205)(0.214613, 0.896615)(0.229203, 0.886492)(0.243793, 0.875833)(0.258383, 0.864631)(0.272973, 0.852882)(0.287563, 0.840577)(0.302153, 0.827707)(0.316743, 0.814263)(0.331333, 0.800233)(0.345923, 0.785604)(0.360513, 0.770360)(0.375103, 0.754482)(0.389693, 0.737948)(0.404283, 0.720737)(0.418873, 0.702816)(0.433463, 0.684153)(0.448052, 0.664708)(0.462642, 0.644433)(0.477232, 0.623270)(0.491822, 0.601152)(0.506412, 0.577994)(0.521002, 0.553690)(0.535592, 0.528113)(0.550182, 0.501094)(0.564772, 0.472416)(0.579362, 0.441788)(0.593952, 0.408806)(0.608542, 0.372881)(0.623132, 0.333100)(0.637722, 0.287916)(0.652312, 0.234271)(0.666902, 0.164140)(0.681492, 0.000309)(0.696082, 0.000047)(0.710672, 0.000031)(0.725262, 0.000024)(0.739852, 0.000020)(0.754441, 0.000017)
        };
        \end{axis}
  \end{tikzpicture}
  \end{tabular}
 \caption{ Eigenvalues of $\K$ with $\bmu = \mathbf{0}$ ({\BLUE \textbf{blue}}) versus the limiting laws in Theorem~\ref{theo:DE} and Corollary~\ref{coro:lsd} ({\RED \textbf{red}}) for Gaussian data, $p=1\,024$, $n=512$ and $f(t) = t \cdot 1_{|t|> \sqrt 2 s}$ with $s = 0.1$ (\textbf{left}), $s = .75$ (\textbf{middle}), and $s = 1.5$ (\textbf{right}). } 
\label{fig:MP-to-SC-sparse}
\end{figure}

\medskip

As discussed after Theorem~\ref{theo:DE} and in the proof above, while the limiting eigenvalue distribution $\omega$ is \emph{universal} and \emph{independent} of the law of the entries of $\Z$, so long as they are independent, sub-exponential, of zero mean and unit variance, as commonly observed in RMT \citep{tao2010random}, this is no longer the case for the isolated eigenvalues. 
In particular, according to Corollary~\ref{coro:non-info-spike}, the possible non-informative spikes \emph{depend} on the kurtosis $\kappa$ of the distribution. In Figure~\ref{fig:non-info-spike} we observe a farther (left) spike for Student-t (with $\kappa = 5$) than Gaussian distribution (with $\kappa = 3$), while \emph{no} spike can be observed for the symmetric Bernoulli distribution (that takes values $\pm 1$ with probability $1/2$ so that $\kappa = 1$), with the same limiting eigenvalue distribution for $f(t) = \max(t,0) - 1/\sqrt{2\pi}$. 
\begin{figure}[htb]
  \centering
  \begin{tabular}{ccc}
  \begin{tikzpicture}[font=\footnotesize,spy using outlines]
    \renewcommand{\axisdefaulttryminticks}{4} 
    \pgfplotsset{every major grid/.append style={densely dashed}}          
    \tikzstyle{every axis y label}+=[yshift=-10pt] 
    \tikzstyle{every axis x label}+=[yshift=5pt]
    \pgfplotsset{every axis legend/.style={cells={anchor=west},fill=white,at={(0.98,0.98)}, anchor=north east, font=\footnotesize}}
    \begin{axis}[
      width=.35\linewidth,
      xmin=-2.5,xmax=4.5,
        ymin=0,ymax=0.5,
        yticklabels = {},
        bar width=2pt,
        grid=major,
        ymajorgrids=false,
        scaled ticks=true,
        xlabel={},
        ylabel={}
        ]
        \addplot+[ybar,mark=none,color=white,fill=blue!60!white,area legend] coordinates{
        (-2.000416, 0.01)
        (-1.874435, 0.000000)(-1.748454, 0.000000)(-1.622473, 0.000000)(-1.496492, 0.147281)(-1.370511, 0.282935)(-1.244530, 0.352700)(-1.118548, 0.406962)(-0.992567, 0.430217)(-0.866586, 0.461224)(-0.740605, 0.476727)(-0.614624, 0.468975)(-0.488643, 0.476727)(-0.362662, 0.449596)(-0.236681, 0.441844)(-0.110699, 0.414714)(0.015282, 0.375955)(0.141263, 0.341073)(0.267244, 0.282935)(0.393225, 0.205419)(0.519206, 0.147281)(0.645187, 0.135654)(0.771168, 0.124027)(0.897150, 0.100772)(1.023131, 0.116275)(1.149112, 0.096896)(1.275093, 0.093020)(1.401074, 0.089144)(1.527055, 0.089144)(1.653036, 0.077517)(1.779018, 0.077517)(1.904999, 0.069765)(2.030980, 0.065889)(2.156961, 0.065889)(2.282942, 0.062013)(2.408923, 0.065889)(2.534904, 0.046510)(2.660885, 0.050386)(2.786867, 0.050386)(2.912848, 0.042634)(3.038829, 0.042634)(3.164810, 0.038758)(3.290791, 0.038758)(3.416772, 0.027131)(3.542753, 0.031007)(3.668734, 0.023255)(3.794716, 0.019379)(3.920697, 0.023255)(4.046678, 0.003876)(4.172659, 0.003876)
        };
        \addplot[densely dashed,RED,line width=1pt] plot coordinates{
        (-1.682571, 0.000007)(-1.648636, 0.000009)(-1.614702, 0.000015)(-1.580767, 0.048098)(-1.546833, 0.126427)(-1.512898, 0.170904)(-1.478964, 0.204908)(-1.445029, 0.233068)(-1.411095, 0.257308)(-1.377160, 0.278653)(-1.343226, 0.297727)(-1.309291, 0.314942)(-1.275357, 0.330589)(-1.241422, 0.344880)(-1.207488, 0.357978)(-1.173553, 0.370009)(-1.139619, 0.381074)(-1.105684, 0.391254)(-1.071750, 0.400617)(-1.037815, 0.409218)(-1.003881, 0.417105)(-0.969946, 0.424315)(-0.936012, 0.430885)(-0.902077, 0.436841)(-0.868143, 0.442207)(-0.834208, 0.447006)(-0.800274, 0.451253)(-0.766339, 0.454965)(-0.732405, 0.458154)(-0.698470, 0.460829)(-0.664536, 0.463000)(-0.630601, 0.464672)(-0.596667, 0.465850)(-0.562732, 0.466537)(-0.528798, 0.466734)(-0.494863, 0.466441)(-0.460929, 0.465655)(-0.426994, 0.464374)(-0.393060, 0.462591)(-0.359125, 0.460299)(-0.325191, 0.457489)(-0.291256, 0.454149)(-0.257322, 0.450266)(-0.223387, 0.445821)(-0.189453, 0.440797)(-0.155518, 0.435170)(-0.121584, 0.428911)(-0.087649, 0.421988)(-0.053715, 0.414363)(-0.019780, 0.405990)(0.014154, 0.396814)(0.048089, 0.386769)(0.082023, 0.375773)(0.115958, 0.363726)(0.149892, 0.350502)(0.183827, 0.335939)(0.217761, 0.319824)(0.251696, 0.301875)(0.285630, 0.281703)(0.319564, 0.258786)(0.353499, 0.232527)(0.387433, 0.203083)(0.421368, 0.175442)(0.455302, 0.158009)(0.489237, 0.148412)(0.523171, 0.142211)(0.557106, 0.137568)(0.591040, 0.133758)(0.624975, 0.130459)(0.658909, 0.127504)(0.692844, 0.124800)(0.726778, 0.122287)(0.760713, 0.119929)(0.794647, 0.117698)(0.828582, 0.115575)(0.862516, 0.113546)(0.896451, 0.111600)(0.930385, 0.109727)(0.964320, 0.107921)(0.998254, 0.106174)(1.032189, 0.104482)(1.066123, 0.102842)(1.100058, 0.101247)(1.133992, 0.099696)(1.167927, 0.098186)(1.201861, 0.096713)(1.235796, 0.095276)(1.269730, 0.093871)(1.303665, 0.092499)(1.337599, 0.091155)(1.371534, 0.089840)(1.405468, 0.088551)(1.439403, 0.087287)(1.473337, 0.086047)(1.507272, 0.084829)(1.541206, 0.083633)(1.575141, 0.082457)(1.609075, 0.081301)(1.643010, 0.080163)(1.676944, 0.079043)(1.710879, 0.077939)(1.744813, 0.076852)(1.778748, 0.075780)(1.812682, 0.074723)(1.846617, 0.073679)(1.880551, 0.072649)(1.914486, 0.071632)(1.948420, 0.070627)(1.982355, 0.069633)(2.016289, 0.068651)(2.050224, 0.067678)(2.084158, 0.066717)(2.118093, 0.065765)(2.152027, 0.064821)(2.185962, 0.063887)(2.219896, 0.062960)(2.253831, 0.062042)(2.287765, 0.061131)(2.321700, 0.060227)(2.355634, 0.059330)(2.389569, 0.058439)(2.423503, 0.057554)(2.457438, 0.056674)(2.491372, 0.055799)(2.525307, 0.054929)(2.559241, 0.054063)(2.593176, 0.053202)(2.627110, 0.052344)(2.661045, 0.051489)(2.694979, 0.050637)(2.728914, 0.049787)(2.762848, 0.048939)(2.796783, 0.048094)(2.830717, 0.047249)(2.864652, 0.046406)(2.898586, 0.045563)(2.932520, 0.044720)(2.966455, 0.043876)(3.000389, 0.043032)(3.034324, 0.042186)(3.068258, 0.041339)(3.102193, 0.040488)(3.136127, 0.039636)(3.170062, 0.038779)(3.203996, 0.037918)(3.237931, 0.037053)(3.271865, 0.036181)(3.305800, 0.035303)(3.339734, 0.034418)(3.373669, 0.033525)(3.407603, 0.032622)(3.441538, 0.031708)(3.475472, 0.030783)(3.509407, 0.029844)(3.543341, 0.028890)(3.577276, 0.027919)(3.611210, 0.026929)(3.645145, 0.025917)(3.679079, 0.024880)(3.713014, 0.023815)(3.746948, 0.022717)(3.780883, 0.021581)(3.814817, 0.020400)(3.848752, 0.019164)(3.882686, 0.017864)(3.916621, 0.016482)(3.950555, 0.014995)(3.984490, 0.013370)(4.018424, 0.011545)(4.052359, 0.009405)(4.086293, 0.006651)(4.120228, 0.000797)(4.154162, 0.000001)(4.188097, 0.000001)(4.222031, 0.000001)(4.255966, 0.000001)(4.289900, 0.000000)(4.323835, 0.000000)(4.357769, 0.000000)(4.391704, 0.000000)(4.425638, 0.000000)(4.459573, 0.000000)(4.493507, 0.000000)
        };
        \addplot+[only marks,mark=x,RED,line width=.5pt,mark size=1pt] plot coordinates{(-2.1016, 0)};
        \coordinate (spypoint) at (axis cs:-2.05,0);
        \coordinate (magnifyglass) at (axis cs:4,0.25);
        \end{axis}
        \spy[black!50!white,size=.5cm,circle,connect spies,magnification=3] on (spypoint) in node [fill=none] at (magnifyglass);
  \end{tikzpicture}
  &
  \begin{tikzpicture}[font=\footnotesize,spy using outlines]
    \renewcommand{\axisdefaulttryminticks}{4} 
    \pgfplotsset{every major grid/.append style={densely dashed}}          
    \tikzstyle{every axis y label}+=[yshift=-10pt] 
    \tikzstyle{every axis x label}+=[yshift=5pt]
    \pgfplotsset{every axis legend/.style={cells={anchor=west},fill=white,at={(0.98,0.98)}, anchor=north east, font=\footnotesize}}
    \begin{axis}[
      width=.35\linewidth,
      xmin=-2,
        ymin=0,
        xmax=4.5,
        ymax=0.5,
        yticklabels = {},
        bar width=2pt,
        grid=major,
        ymajorgrids=false,
        scaled ticks=true,
        xlabel={},
        ylabel={}
        ]
        \addplot+[ybar,mark=none,color=white,fill=blue!60!white,area legend] coordinates{
        (-1.687417, 0.01)
        (-1.569059, 0.024753)(-1.450701, 0.202148)(-1.332344, 0.317661)(-1.213986, 0.375418)(-1.095628, 0.404296)(-0.977270, 0.449676)(-0.858913, 0.466178)(-0.740555, 0.474429)(-0.622197, 0.466178)(-0.503839, 0.474429)(-0.385482, 0.462052)(-0.267124, 0.441425)(-0.148766, 0.420798)(-0.030408, 0.391920)(0.087950, 0.358916)(0.206307, 0.305285)(0.324665, 0.247528)(0.443023, 0.177395)(0.561381, 0.140266)(0.679738, 0.132015)(0.798096, 0.119639)(0.916454, 0.115513)(1.034812, 0.103137)(1.153169, 0.094886)(1.271527, 0.094886)(1.389885, 0.094886)(1.508243, 0.082509)(1.626601, 0.086635)(1.744958, 0.074258)(1.863316, 0.074258)(1.981674, 0.070133)(2.100032, 0.066007)(2.218389, 0.061882)(2.336747, 0.061882)(2.455105, 0.057757)(2.573463, 0.053631)(2.691820, 0.049506)(2.810178, 0.049506)(2.928536, 0.037129)(3.046894, 0.045380)(3.165251, 0.037129)(3.283609, 0.037129)(3.401967, 0.033004)(3.520325, 0.028878)(3.638683, 0.024753)(3.757040, 0.024753)(3.875398, 0.016502)(3.993756, 0.012376)(4.112114, 0.004125)
        };
        \addplot[densely dashed,RED,line width=1pt] plot coordinates{
        (-1.600678, 0.000023)(-1.568621, 0.085193)(-1.536563, 0.141538)(-1.504505, 0.180029)(-1.472448, 0.210689)(-1.440390, 0.236584)(-1.408332, 0.259143)(-1.376274, 0.279178)(-1.344217, 0.297198)(-1.312159, 0.313551)(-1.280101, 0.328487)(-1.248044, 0.342190)(-1.215986, 0.354802)(-1.183928, 0.366437)(-1.151870, 0.377185)(-1.119813, 0.387118)(-1.087755, 0.396299)(-1.055697, 0.404777)(-1.023640, 0.412597)(-0.991582, 0.419794)(-0.959524, 0.426400)(-0.927467, 0.432441)(-0.895409, 0.437941)(-0.863351, 0.442919)(-0.831293, 0.447392)(-0.799236, 0.451375)(-0.767178, 0.454880)(-0.735120, 0.457918)(-0.703063, 0.460497)(-0.671005, 0.462625)(-0.638947, 0.464306)(-0.606889, 0.465547)(-0.574832, 0.466348)(-0.542774, 0.466712)(-0.510716, 0.466639)(-0.478659, 0.466127)(-0.446601, 0.465175)(-0.414543, 0.463778)(-0.382486, 0.461932)(-0.350428, 0.459629)(-0.318370, 0.456861)(-0.286312, 0.453617)(-0.254255, 0.449887)(-0.222197, 0.445655)(-0.190139, 0.440905)(-0.158082, 0.435616)(-0.126024, 0.429766)(-0.093966, 0.423328)(-0.061908, 0.416271)(-0.029851, 0.408556)(0.002207, 0.400140)(0.034265, 0.390971)(0.066322, 0.380985)(0.098380, 0.370105)(0.130438, 0.358236)(0.162496, 0.345260)(0.194553, 0.331025)(0.226611, 0.315334)(0.258669, 0.297926)(0.290726, 0.278451)(0.322784, 0.256447)(0.354842, 0.231414)(0.386899, 0.203557)(0.418957, 0.177118)(0.451015, 0.159654)(0.483073, 0.149816)(0.515130, 0.143498)(0.547188, 0.138818)(0.579246, 0.135012)(0.611303, 0.131740)(0.643361, 0.128822)(0.675419, 0.126161)(0.707477, 0.123695)(0.739534, 0.121384)(0.771592, 0.119201)(0.803650, 0.117125)(0.835707, 0.115142)(0.867765, 0.113240)(0.899823, 0.111411)(0.931880, 0.109646)(0.963938, 0.107940)(0.995996, 0.106289)(1.028054, 0.104686)(1.060111, 0.103129)(1.092169, 0.101614)(1.124227, 0.100138)(1.156284, 0.098700)(1.188342, 0.097295)(1.220400, 0.095923)(1.252458, 0.094582)(1.284515, 0.093270)(1.316573, 0.091984)(1.348631, 0.090725)(1.380688, 0.089490)(1.412746, 0.088278)(1.444804, 0.087088)(1.476861, 0.085919)(1.508919, 0.084771)(1.540977, 0.083641)(1.573035, 0.082530)(1.605092, 0.081435)(1.637150, 0.080358)(1.669208, 0.079296)(1.701265, 0.078250)(1.733323, 0.077218)(1.765381, 0.076200)(1.797439, 0.075196)(1.829496, 0.074204)(1.861554, 0.073224)(1.893612, 0.072256)(1.925669, 0.071299)(1.957727, 0.070353)(1.989785, 0.069417)(2.021842, 0.068491)(2.053900, 0.067574)(2.085958, 0.066666)(2.118016, 0.065766)(2.150073, 0.064875)(2.182131, 0.063992)(2.214189, 0.063116)(2.246246, 0.062247)(2.278304, 0.061384)(2.310362, 0.060529)(2.342420, 0.059679)(2.374477, 0.058834)(2.406535, 0.057995)(2.438593, 0.057162)(2.470650, 0.056333)(2.502708, 0.055508)(2.534766, 0.054687)(2.566823, 0.053871)(2.598881, 0.053057)(2.630939, 0.052247)(2.662997, 0.051439)(2.695054, 0.050635)(2.727112, 0.049832)(2.759170, 0.049031)(2.791227, 0.048232)(2.823285, 0.047434)(2.855343, 0.046637)(2.887401, 0.045841)(2.919458, 0.045044)(2.951516, 0.044247)(2.983574, 0.043450)(3.015631, 0.042652)(3.047689, 0.041853)(3.079747, 0.041051)(3.111805, 0.040247)(3.143862, 0.039441)(3.175920, 0.038631)(3.207978, 0.037817)(3.240035, 0.036998)(3.272093, 0.036175)(3.304151, 0.035346)(3.336208, 0.034510)(3.368266, 0.033667)(3.400324, 0.032816)(3.432382, 0.031956)(3.464439, 0.031085)(3.496497, 0.030203)(3.528555, 0.029308)(3.560612, 0.028398)(3.592670, 0.027472)(3.624728, 0.026528)(3.656786, 0.025564)(3.688843, 0.024577)(3.720901, 0.023563)(3.752959, 0.022519)(3.785016, 0.021440)(3.817074, 0.020319)(3.849132, 0.019150)(3.881189, 0.017923)(3.913247, 0.016624)(3.945305, 0.015234)(3.977363, 0.013726)(4.009420, 0.012053)(4.041478, 0.010137)(4.073536, 0.007796)(4.105593, 0.004398)(4.137651, 0.000002)(4.169709, 0.000001)(4.201767, 0.000001)(4.233824, 0.000001)(4.265882, 0.000001)(4.297940, 0.000000)(4.329997, 0.000000)(4.362055, 0.000000)(4.394113, 0.000000)(4.426170, 0.000000)(4.458228, 0.000000)
        };
        \addplot+[only marks,mark=x,RED,line width=.5pt,mark size=1pt] plot coordinates{(-1.7701, 0)};
        \coordinate (spypoint) at (axis cs:-1.71,0);
        \coordinate (magnifyglass) at (axis cs:4,0.25);
        \end{axis}
        \spy[black!50!white,size=.5cm,circle,connect spies,magnification=3] on (spypoint) in node [fill=none] at (magnifyglass);
  \end{tikzpicture}
  &
  \begin{tikzpicture}[font=\footnotesize,spy using outlines]
    \renewcommand{\axisdefaulttryminticks}{4} 
    \pgfplotsset{every major grid/.append style={densely dashed}}          
    \tikzstyle{every axis y label}+=[yshift=-10pt] 
    \tikzstyle{every axis x label}+=[yshift=5pt]
    \pgfplotsset{every axis legend/.style={cells={anchor=west},fill=white,at={(0.98,0.98)}, anchor=north east, font=\footnotesize}}
    \begin{axis}[
      width=.35\linewidth,
      xmin=-2,
        ymin=0,
        xmax=4.5,
        ymax=0.5,
        yticklabels = {},
        bar width=2pt,
        grid=major,
        ymajorgrids=false,
        scaled ticks=true,
        xlabel={},
        ylabel={}
        ]
        \addplot+[ybar,mark=none,color=white,fill=blue!60!white,area legend] coordinates{
        (-1.933673, 0.000000)(-1.801020, 0.000000)(-1.668367, 0.000000)(-1.535714, 0.069937)(-1.403061, 0.253981)(-1.270408, 0.360727)(-1.137755, 0.404898)(-1.005102, 0.441707)(-0.872449, 0.460111)(-0.739796, 0.467473)(-0.607143, 0.471154)(-0.474490, 0.471154)(-0.341837, 0.449069)(-0.209184, 0.430664)(-0.076531, 0.401217)(0.056122, 0.368089)(0.188776, 0.316556)(0.321429, 0.246620)(0.454082, 0.173002)(0.586735, 0.143555)(0.719388, 0.121469)(0.852041, 0.117788)(0.984694, 0.106746)(1.117347, 0.103065)(1.250000, 0.095703)(1.382653, 0.088341)(1.515306, 0.084660)(1.647959, 0.077299)(1.780612, 0.077299)(1.913265, 0.069937)(2.045918, 0.069937)(2.178571, 0.066256)(2.311224, 0.062575)(2.443878, 0.058894)(2.576531, 0.047852)(2.709184, 0.051532)(2.841837, 0.044171)(2.974490, 0.051532)(3.107143, 0.040490)(3.239796, 0.036809)(3.372449, 0.033128)(3.505102, 0.029447)(3.637755, 0.025766)(3.770408, 0.022085)(3.903061, 0.018404)(4.035714, 0.007362)(4.168367, 0.000000)(4.301020, 0.000000)(4.433673, 0.000000)(4.566327, 0.000000)
        };
        \addplot[densely dashed,RED,line width=1pt] plot coordinates{
        (-1.600678, 0.000023)(-1.568621, 0.085193)(-1.536563, 0.141538)(-1.504505, 0.180029)(-1.472448, 0.210689)(-1.440390, 0.236584)(-1.408332, 0.259143)(-1.376274, 0.279178)(-1.344217, 0.297198)(-1.312159, 0.313551)(-1.280101, 0.328487)(-1.248044, 0.342190)(-1.215986, 0.354802)(-1.183928, 0.366437)(-1.151870, 0.377185)(-1.119813, 0.387118)(-1.087755, 0.396299)(-1.055697, 0.404777)(-1.023640, 0.412597)(-0.991582, 0.419794)(-0.959524, 0.426400)(-0.927467, 0.432441)(-0.895409, 0.437941)(-0.863351, 0.442919)(-0.831293, 0.447392)(-0.799236, 0.451375)(-0.767178, 0.454880)(-0.735120, 0.457918)(-0.703063, 0.460497)(-0.671005, 0.462625)(-0.638947, 0.464306)(-0.606889, 0.465547)(-0.574832, 0.466348)(-0.542774, 0.466712)(-0.510716, 0.466639)(-0.478659, 0.466127)(-0.446601, 0.465175)(-0.414543, 0.463778)(-0.382486, 0.461932)(-0.350428, 0.459629)(-0.318370, 0.456861)(-0.286312, 0.453617)(-0.254255, 0.449887)(-0.222197, 0.445655)(-0.190139, 0.440905)(-0.158082, 0.435616)(-0.126024, 0.429766)(-0.093966, 0.423328)(-0.061908, 0.416271)(-0.029851, 0.408556)(0.002207, 0.400140)(0.034265, 0.390971)(0.066322, 0.380985)(0.098380, 0.370105)(0.130438, 0.358236)(0.162496, 0.345260)(0.194553, 0.331025)(0.226611, 0.315334)(0.258669, 0.297926)(0.290726, 0.278451)(0.322784, 0.256447)(0.354842, 0.231414)(0.386899, 0.203557)(0.418957, 0.177118)(0.451015, 0.159654)(0.483073, 0.149816)(0.515130, 0.143498)(0.547188, 0.138818)(0.579246, 0.135012)(0.611303, 0.131740)(0.643361, 0.128822)(0.675419, 0.126161)(0.707477, 0.123695)(0.739534, 0.121384)(0.771592, 0.119201)(0.803650, 0.117125)(0.835707, 0.115142)(0.867765, 0.113240)(0.899823, 0.111411)(0.931880, 0.109646)(0.963938, 0.107940)(0.995996, 0.106289)(1.028054, 0.104686)(1.060111, 0.103129)(1.092169, 0.101614)(1.124227, 0.100138)(1.156284, 0.098700)(1.188342, 0.097295)(1.220400, 0.095923)(1.252458, 0.094582)(1.284515, 0.093270)(1.316573, 0.091984)(1.348631, 0.090725)(1.380688, 0.089490)(1.412746, 0.088278)(1.444804, 0.087088)(1.476861, 0.085919)(1.508919, 0.084771)(1.540977, 0.083641)(1.573035, 0.082530)(1.605092, 0.081435)(1.637150, 0.080358)(1.669208, 0.079296)(1.701265, 0.078250)(1.733323, 0.077218)(1.765381, 0.076200)(1.797439, 0.075196)(1.829496, 0.074204)(1.861554, 0.073224)(1.893612, 0.072256)(1.925669, 0.071299)(1.957727, 0.070353)(1.989785, 0.069417)(2.021842, 0.068491)(2.053900, 0.067574)(2.085958, 0.066666)(2.118016, 0.065766)(2.150073, 0.064875)(2.182131, 0.063992)(2.214189, 0.063116)(2.246246, 0.062247)(2.278304, 0.061384)(2.310362, 0.060529)(2.342420, 0.059679)(2.374477, 0.058834)(2.406535, 0.057995)(2.438593, 0.057162)(2.470650, 0.056333)(2.502708, 0.055508)(2.534766, 0.054687)(2.566823, 0.053871)(2.598881, 0.053057)(2.630939, 0.052247)(2.662997, 0.051439)(2.695054, 0.050635)(2.727112, 0.049832)(2.759170, 0.049031)(2.791227, 0.048232)(2.823285, 0.047434)(2.855343, 0.046637)(2.887401, 0.045841)(2.919458, 0.045044)(2.951516, 0.044247)(2.983574, 0.043450)(3.015631, 0.042652)(3.047689, 0.041853)(3.079747, 0.041051)(3.111805, 0.040247)(3.143862, 0.039441)(3.175920, 0.038631)(3.207978, 0.037817)(3.240035, 0.036998)(3.272093, 0.036175)(3.304151, 0.035346)(3.336208, 0.034510)(3.368266, 0.033667)(3.400324, 0.032816)(3.432382, 0.031956)(3.464439, 0.031085)(3.496497, 0.030203)(3.528555, 0.029308)(3.560612, 0.028398)(3.592670, 0.027472)(3.624728, 0.026528)(3.656786, 0.025564)(3.688843, 0.024577)(3.720901, 0.023563)(3.752959, 0.022519)(3.785016, 0.021440)(3.817074, 0.020319)(3.849132, 0.019150)(3.881189, 0.017923)(3.913247, 0.016624)(3.945305, 0.015234)(3.977363, 0.013726)(4.009420, 0.012053)(4.041478, 0.010137)(4.073536, 0.007796)(4.105593, 0.004398)(4.137651, 0.000002)(4.169709, 0.000001)(4.201767, 0.000001)(4.233824, 0.000001)(4.265882, 0.000001)(4.297940, 0.000000)(4.329997, 0.000000)(4.362055, 0.000000)(4.394113, 0.000000)(4.426170, 0.000000)(4.458228, 0.000000)
        };
        \end{axis}
  \end{tikzpicture}
  \end{tabular}
 \caption{ Eigenvalues of $\K$ with $\bmu = \mathbf{0}$ ({\BLUE \textbf{blue}}) versus the limiting laws and spikes in Theorem~\ref{theo:DE} and Corollary~\ref{coro:non-info-spike} ({\RED \textbf{red}}) for Student-t (with $7$ degrees of freedom, \textbf{left}), Gaussian (\textbf{middle}) and Rademacher distribution (\textbf{right}), $p=512$, $n=2\,048$, $f(t) = \max(t,0) - 1/\sqrt{2\pi}$, with an emphasis on the \emph{non-informative} spikes at \emph{different} locations: at $-2.10$ for Student-t and $-1.77$ for Gaussian. } 
\label{fig:non-info-spike}
\end{figure}
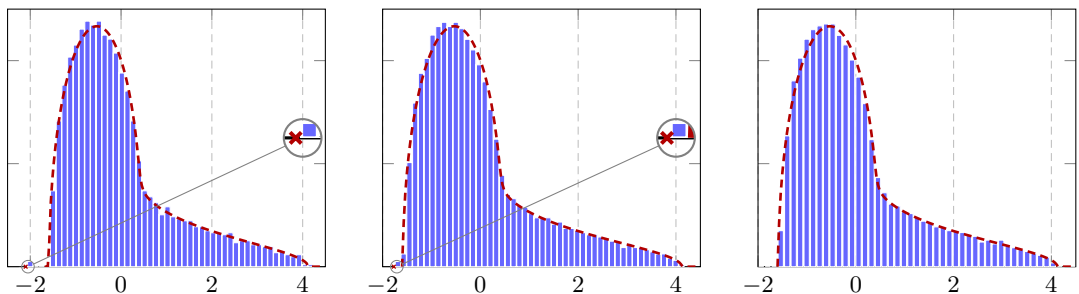
\begin{Remark}[Non-informative spike in-between]\label{rem:spike-in-between}
When the support of $\omega$ consists of two disjoint regions (e.g., in the right plot of Figure~\ref{fig:lsd-edges}), a non-informative spike may appear between these two regions, that corresponds to such an $m< -c/a_1$ in the setting of Figure~\ref{fig:x(m)}-(bottom), when $a_1 \sqrt{ \frac2{\kappa - 1}} > a_2$. An example is provided in Figure~\ref{fig:non-info-spike-in-between}.
\end{Remark}

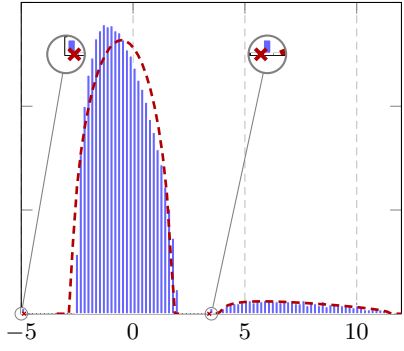
\begin{figure}[htb]
  \centering
  \begin{tikzpicture}[font=\footnotesize,spy using outlines]
    \renewcommand{\axisdefaulttryminticks}{4} 
    \pgfplotsset{every major grid/.append style={densely dashed}}          
    \tikzstyle{every axis y label}+=[yshift=-10pt] 
    \tikzstyle{every axis x label}+=[yshift=5pt]
    \pgfplotsset{every axis legend/.style={cells={anchor=west},fill=white,at={(0.98,0.98)}, anchor=north east, font=\footnotesize}}
    \begin{axis}[
      width=.4\linewidth,
      xmin=-5,
        ymin=0,
        xmax=12,
        ymax=0.3,
        yticklabels = {},
        grid=major,
        ymajorgrids=false,
        bar width=1.2pt,
        scaled ticks=true,
        xlabel={},
        ylabel={}
        ]
        \addplot+[ybar,mark=none,color=white,fill=blue!60!white,area legend] coordinates{
        (-4.914141, 0.004971)
        (-4.742424, 0.000000)(-4.570707, 0.000000)(-4.398990, 0.000000)(-4.227273, 0.000000)(-4.055556, 0.000000)(-3.883838, 0.000000)(-3.712121, 0.000000)(-3.540404, 0.000000)(-3.368687, 0.000000)(-3.196970, 0.000000)(-3.025253, 0.000000)(-2.853535, 0.000000)(-2.681818, 0.000000)(-2.510101, 0.057265)(-2.338384, 0.153353)(-2.166667, 0.199941)(-1.994949, 0.230029)(-1.823232, 0.246529)(-1.651515, 0.264971)(-1.479798, 0.272735)(-1.308081, 0.278559)(-1.136364, 0.276618)(-0.964646, 0.277588)(-0.792929, 0.272735)(-0.621212, 0.270794)(-0.449495, 0.264000)(-0.277778, 0.254294)(-0.106061, 0.248471)(0.065657, 0.236824)(0.237374, 0.225176)(0.409091, 0.216441)(0.580808, 0.203824)(0.752525, 0.187324)(0.924242, 0.174706)(1.095960, 0.158206)(1.267677, 0.143647)(1.439394, 0.122294)(1.611111, 0.100941)(1.782828, 0.072794)(1.954545, 0.023294)(2.126263, 0.000000)(2.297980, 0.000000)(2.469697, 0.000000)(2.641414, 0.000000)(2.813131, 0.000000)(2.984848, 0.000000)(3.156566, 0.000000)(3.328283, 0.000000)
        (3.500000, 0.004971)
        (3.671717, 0.000000)(3.843434, 0.000971)(4.015152, 0.004853)(4.186869, 0.007765)(4.358586, 0.009706)(4.530303, 0.009706)(4.702020, 0.009706)(4.873737, 0.010676)(5.045455, 0.011647)(5.217172, 0.012618)(5.388889, 0.010676)(5.560606, 0.012618)(5.732323, 0.011647)(5.904040, 0.012618)(6.075758, 0.012618)(6.247475, 0.011647)(6.419192, 0.011647)(6.590909, 0.010676)(6.762626, 0.011647)(6.934343, 0.011647)(7.106061, 0.012618)(7.277778, 0.009706)(7.449495, 0.009706)(7.621212, 0.012618)(7.792929, 0.007765)(7.964646, 0.011647)(8.136364, 0.009706)(8.308081, 0.007765)(8.479798, 0.011647)(8.651515, 0.006794)(8.823232, 0.008735)(8.994949, 0.009706)(9.166667, 0.007765)(9.338384, 0.007765)(9.510101, 0.007765)(9.681818, 0.006794)(9.853535, 0.005824)(10.025253, 0.007765)(10.196970, 0.005824)(10.368687, 0.003882)(10.540404, 0.003882)(10.712121, 0.004853)(10.883838, 0.004853)(11.055556, 0.003882)(11.227273, 0.002912)(11.398990, 0.000000)(11.570707, 0.000971)(11.742424, 0.000000)(11.914141, 0.000000)(12.085859, 0.000000)
        };
        \addplot[densely dashed,RED,line width=1pt] plot coordinates{
        (-3.420114, 0.000001)(-3.237489, 0.000001)(-3.054864, 0.000001)(-2.872239, 0.000003)(-2.689614, 0.068300)(-2.506989, 0.122820)(-2.324364, 0.156822)(-2.141739, 0.182182)(-1.959114, 0.202177)(-1.776489, 0.218287)(-1.593864, 0.231325)(-1.411239, 0.241788)(-1.228613, 0.249999)(-1.045988, 0.256175)(-0.863363, 0.260459)(-0.680738, 0.262944)(-0.498113, 0.263681)(-0.315488, 0.262684)(-0.132863, 0.259931)(0.049762, 0.255366)(0.232387, 0.248889)(0.415012, 0.240344)(0.597637, 0.229498)(0.780262, 0.216004)(0.962887, 0.199325)(1.145512, 0.178566)(1.328137, 0.152063)(1.510762, 0.115938)(1.693387, 0.053289)(1.876012, 0.000003)(2.058637, 0.000001)
        };
        \addplot[densely dashed,RED,line width=1pt] plot coordinates{
        (3.702263, 0.000000)(3.884888, 0.001652)(4.067513, 0.006270)(4.250138, 0.008227)(4.432763, 0.009465)(4.615388, 0.010312)(4.798013, 0.010906)(4.980638, 0.011324)(5.163263, 0.011609)(5.345888, 0.011795)(5.528513, 0.011901)(5.711138, 0.011945)(5.893763, 0.011936)(6.076388, 0.011886)(6.259013, 0.011800)(6.441638, 0.011684)(6.624263, 0.011542)(6.806888, 0.011378)(6.989513, 0.011195)(7.172138, 0.010995)(7.354763, 0.010780)(7.537389, 0.010551)(7.720014, 0.010310)(7.902639, 0.010057)(8.085264, 0.009793)(8.267889, 0.009518)(8.450514, 0.009233)(8.633139, 0.008939)(8.815764, 0.008633)(8.998389, 0.008318)(9.181014, 0.007990)(9.363639, 0.007651)(9.546264, 0.007298)(9.728889, 0.006931)(9.911514, 0.006547)(10.094139, 0.006144)(10.276764, 0.005717)(10.459389, 0.005261)(10.642014, 0.004769)(10.824639, 0.004227)(11.007264, 0.003614)(11.189889, 0.002885)(11.372514, 0.001911)(11.555139, 0.000001)(11.737764, 0.000000)(11.920390, 0.000000)(12.103015, 0.000000)(12.285640, 0.000000)(12.468265, 0.000000)(12.650890, 0.000000)
        };
        \addplot+[only marks,mark=x,RED,line width=.5pt,mark size=1pt] plot coordinates{(-4.8731, 0)};
        \addplot+[only marks,mark=x,RED,line width=.5pt,mark size=1pt] plot coordinates{(3.4062, 0)};
        \coordinate (spypoint1) at (axis cs:-5,0);
        \coordinate (magnifyglass1) at (axis cs:-3,0.25);
        \coordinate (spypoint2) at (axis cs:3.5,0);
        \coordinate (magnifyglass2) at (axis cs:6,0.25);
        \end{axis}
        \spy[black!50!white,size=.5cm,circle,connect spies,magnification=3] on (spypoint1) in node [fill=none] at (magnifyglass1);
        \spy[black!50!white,size=.5cm,circle,connect spies,magnification=3] on (spypoint2) in node [fill=none] at (magnifyglass2);
  \end{tikzpicture}
 \caption{ Eigenvalues of $\K$ with $\bmu = \mathbf{0}$ ({\BLUE \textbf{blue}}) versus the limiting laws and spikes in Corollary~\ref{coro:lsd}~and~\ref{coro:non-info-spike} ({\RED \textbf{red}}) for $p=400$, $n = 6\,000$, with $f(t) = \max(t,0) - 1/\sqrt{2\pi}$ and Gaussian data. } 
\label{fig:non-info-spike-in-between}
\end{figure}

\subsection{Proof of Corollary~\ref{coro:phase-transition} and related discussions}
\label{sec:proof-coro-phase}

Similar to our discussions in Section~\ref{sec:proof-coro-non-info-spike}, we need to find the poles of $\det \bLambda(z)$, that are real solutions to $H(x) = 0$ with 
\begin{equation}
	H(x) = a_1 a_2^2 (\kappa - 1) \left( \frac{(\v^\T \one_n)^2}{n^2} \rho - 1 - \rho \right) m^3(x) - a_2^2 c (\kappa - 1) m^2(x) + 2 a_1 c^2 ( \rho + 1) m(x) + 2c^3 = 0
\end{equation}
for $m(z)$ the unique solution to \eqref{eq:master-eq} and $\rho = \lim_p \| \bmu \|^2$. Note that
\begin{enumerate}
	\item  for $a_1 a_2^2 (\kappa - 1) ( \frac{(\v^\T \one_n)^2}{n^2} \rho - 1 - \rho ) \neq 0$, there can be (up to) three spikes;
	\item with $a_1 = 0$ and $a_2 \neq 0$, we get $m^2(x) = \frac{2 c^2}{a_2^2 (\kappa - 1)}$ and there are at most two spikes: this is equivalent to the case of Corollary~\ref{coro:non-info-spike} with $\rho = 0$; in fact, taking $a_1$ we \emph{discard} the information in the signal $\bmu$, as pointed out in \citep{liao2019inner};
	\item with $a_2 = 0$ and $a_1 \neq 0$ we obtain $m(x) = - \frac{c}{a_1 (\rho + 1)}$, this is the case of Corollary~\ref{coro:phase-transition}.
\end{enumerate}

For a given isolated eigenvalue-eigenvector pair $(\hat \lambda, \hat \v)$ (assumed to be of multiplicity one), the projection $|\hat \v^\T \v|^2$ onto the data label vector $\v$ can be evaluated via the Cauchy's integral formula and our Theorem~\ref{theo:DE}. 
More precisely, consider a positively oriented contour $\Gamma$ that circles around \emph{only} the isolated $\hat \lambda$, we write
\begin{align*}
	&\frac1n \v^\T \hat \v \hat \v^\T \v = - \frac1{2 \pi \imath} \oint_{\Gamma} \frac1n \v^\T (\K - z \I_n)^{-1} \v~dz \\ 
	&= - \frac1{2 \pi \imath} \oint_{\Gamma} \frac1n \v^\T (m(z) \I_n - \V \bLambda(z) \V^\T) \v~dz + o(1) \\ 
	&= \frac1n \v^\T \V \left( \frac1{2 \pi \imath} \oint_{\Gamma} \bLambda(z)~dz \right) \V^\T \v + o(1) = \frac1n \v^\T \V \left( {\rm Res} \bLambda(z) \right) \V^\T \v + o(1) \\ 
	&= \begin{bmatrix} 1 & \frac{\v^\T \one_n}{n} \end{bmatrix} \left( \lim_{z \to \lambda} (z - \lambda) \Bigl[ \begin{smallmatrix} \Theta(z) m^2(z) & \Theta(z) \Omega(z) \frac{\v^\T \one_n}n m(z) \\ \Theta(z) \Omega(z) \frac{\v^\T \one_n}n m(z) & \Theta(z) \Omega^2(z) \frac{ (\v^\T \one_n)^2 }{n^2} - \Omega(z) \end{smallmatrix} \Bigr] \right) \begin{bmatrix} 1 \\ \frac{\v^\T \one_n}{n} \end{bmatrix} + o(1)
\end{align*}
where we use Theorem~\ref{theo:DE} for the second line and recall that the asymptotic location $\lambda$ of $\hat \lambda$ is away from the support of limiting spectral measure $\omega$ so that $- \frac1{2 \pi \imath} \oint_{\Gamma} m(z)~dz = 0$ in the third line.

Interestingly, we note at this stage that taking $\v^\T \one_n = o(n)$ or $a_2 = 0$ (so that $\Omega(z) =0$) leads to the following simplification 
\begin{align}
	\frac1n |\v^\T \hat \v|^2 &= \lim_{z \to \lambda} (z - \lambda) \Theta(z) m^2(z) + o(1) = \lim_{z \to \lambda} (z - \lambda) \frac{ a_1 \rho m^2(z) }{ c + a_1 m(z) (1 + \rho) } + o(1) \\ 
	&= \frac{ a_1 \rho}{ 1 + \rho} \frac{m^2(\lambda)}{m'(\lambda) } + o(1) = \frac{ a_1 \rho}{ 1 + \rho} \left( 1 - \frac{a_1^2 c m^2(\lambda)}{ (c + a_1 m(\lambda))^2 } - \frac{\nu - a_1^2}c m^2(\lambda) \right) + o(1) \label{eq:proof-aligne}
\end{align}
with l'Hospital's rule and the fact that $ m'(z) = \left( \frac1{m^2(z)} - \frac{a_1^2 c}{(c + a_1 m(z))^2} - \frac{\nu-a_1^2}c \right)^{-1}$ from \eqref{eq:master-eq}. 

In particular, in the setting of Corollary~\ref{coro:phase-transition} with $a_1 > 0$ and $a_2 = 0$, with the substitution $m(\lambda) = m_\rho = - \frac{c}{a_1 (\rho + 1)}$ into \eqref{eq:proof-aligne} and then the change of variable $m = - \frac{c}{a_1} \frac1{1+x}$, we obtain the expression of $F(x)$ in Corollary~\ref{coro:phase-transition}. The phase transition condition can be similarly obtained, as discussed in Section~\ref{sec:proof-coro-non-info-spike}, by checking the sign of the derivative of the functional inverse $x'(m)$ as for Corollary~\ref{coro:lsd}. This concludes the proof of Corollary~\ref{coro:phase-transition}.

\paragraph{Discussions.}

Note that, while with either $a_2 = 0$ or $\v^\T \one_n = o(n)$ we obtain the same expression for the projection $|\v^\T \hat \v|^2$, the possible spike of interest $\hat \lambda$ (and its asymptotic location $\lambda$) in these two scenarios can be rather different. More precisely,
\begin{enumerate}
	\item with $a_2 = 0$, there is a single possible spike $\hat \lambda$ with $m(\lambda) = m_\rho = - \frac{c}{a_1 (\rho + 1)}$;
	\item with $\v^\T \one_n = o(n)$, there can be up to three spikes that correspond to $m_\rho = - \frac{c}{a_1 (\rho + 1)}$ and $m_\pm = \pm \frac{c}{a_2} \sqrt{\frac2{\kappa - 1}} $.
\end{enumerate}
This observation leads to the following remark.

\begin{Remark}[Noisy top eigenvector with $a_2 \neq 0$]
For $\v^\T \one_n = o(n)$ and $a_2 \neq 0$, one may have $m_- = - \frac{c}{a_2} \sqrt{\frac2{\kappa -1}} > - \frac{c}{a_1 (\rho + 1)} = m_\rho$ for instance with large $a_2$ and small $a_1$. Since $m(x)$ is an increasing function, the top eigenvalue-eigenvector pair of $\K$ can be non-informative, independent of the SNR $\rho$, and totally useless for clustering purposes. 
An example is provided in Figure~\ref{fig:spectrum} where one observes that (\textbf{i}) the largest spike (on the right-hand side) corresponds to a \emph{noisy} eigenvector while the second largest spike contains the data class-structure $\v$; and (\textbf{ii}) the theoretical prediction of the eigen-alignment $\alpha$ in Corollary~\ref{coro:phase-transition} still holds here due to $\v^\T \one_n = o(n)$.
This extends our Corollary~\ref{coro:non-info-spike} to the signal-plus-noise scenario and confirms the advantage and necessity of taking $a_2 = 0$.
\end{Remark}

As a side remark, in contrast to Remark~\ref{rem:spike-in-between} and Figure~\ref{fig:non-info-spike-in-between}, where we observe that the non-informative spike can be lying between the two disjoint regions of the limiting measure $\omega$, in the case of $a_1 > 0$, the informative spike $m_\rho = - \frac{c}{a_1 (\rho + 1)}$ can \emph{only} appear on the \emph{right side} of the support of $\omega$, since $- \frac{c}{a_1} < - \frac{c}{a_1 (\rho + 1)} < 0$ for $\rho = \lim_p \| \bmu \|^2 \ge 0$.
See Figure~\ref{fig:x(m)}-(bottom) for an illustration.

\subsection{Proof of Proposition~\ref{prop:perf-clustering}}
\label{subsec:proof-prop-perf-clustering}

Note that, for $\hat \v$ the top isolated eigenvector of $\K$, with Corollary~\ref{coro:phase-transition} we can write
\begin{equation}
	\hat \v = \sqrt{\alpha} \v/\sqrt n + \sigma \w
\end{equation}
for some $\sigma \in \RR$, $\w \in \RR^n$ a zero-mean random vector, orthogonal to $\v$, and of unit norm. To evaluate the asymptotic clustering performance in the setting of Proposition~\ref{prop:perf-clustering} (i.e., with the estimate $\hat {\mathcal C}_i = \sign([\hat \v]_i)$ for $\hat \v^\T \v \ge 0$), we need to assess the probability $\Pr (\sign([\hat \v]_i) < 0)$ for $\x_i \in \mathcal C_1$ and $\Pr (\sign([\hat \v]_i) > 0)$ for $\x_i \in \mathcal C_2$ (recall that the class-label $[\v]_i = -1$ for $\x_i \in \mathcal C_1$ and $[\v]_i = +1$ for $\x_i \in \mathcal C_2$), and it thus remains to derive $\sigma$. Note that
\begin{equation}
	1 = \hat \v^\T \hat \v = \alpha + 2 \sigma \sqrt{\alpha} \w^\T \v/\sqrt n + \sigma^2 = \alpha + \sigma^2 + o(1)
\end{equation}
where we recall $\| \v \| = \sqrt n$, which, together with an argument on the normal fluctuations of $\hat \v$ \citep{kadavankandy2019asymptotic}, concludes the proof.

\section{Additional empirical results on real-world datasets}
\label{sec:more-experiments}

In Figure~\ref{fig:perf-MNIST-quantized}, we compare the clustering performance, the level of sparsity, and the computational time of the top eigenvector, of the sparse function $f_1$ and quantized $f_2$ with $M=2$ (so $2$ bits per non-zero entry), on the MNIST dataset. We see that, different from the binary $f_3$ with which small entries of $\K$ are set to zero, the quantized function $f_2$, by letting the small entries of $\K$ to take certain \emph{nonzero} values, yields surprisingly good performance on the MNIST dataset. This performance gain comes, however, at the price of somewhat heavy computational burden that is approximately the same as the original dense matrix $\X^\T \X$, since we lose the sparsity with $f_2$, see Figure~\ref{fig:coeff}-(left).
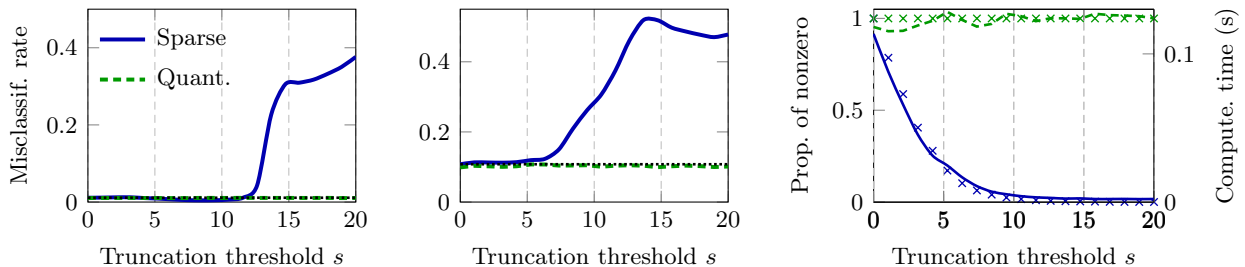
\begin{figure}[htb]
\centering
\begin{tabular}{ccc}
\centering
\begin{tikzpicture}[font=\footnotesize]
    \pgfplotsset{every major grid/.append style={densely dashed}}          
    \tikzstyle{every axis y label}+=[yshift=-10pt] 
    \tikzstyle{every axis x label}+=[yshift=5pt]
    \pgfplotsset{every axis legend/.style={cells={anchor=west},fill=none,at={(0,1)}, anchor=north west, font=\footnotesize}}
    \begin{axis}[
    width=.31\linewidth,
    height=.25\linewidth,
    xmin=0,xmax=20,ymin=0,ymax=.5,
    grid=major,
    ymajorgrids=false,
    scaled ticks=true,
    xlabel= { Truncation threshold $s$ },
    ylabel={ Misclassif.\@ rate },
    ]
    \addplot[smooth,color=BLUE,line width=1.5pt] plot coordinates{
    (0.000000, 0.011187)(1.052632, 0.011514)(2.105263, 0.012236)(3.157895, 0.012334)(4.210526, 0.010371)(5.263158, 0.008877)(6.315789, 0.006768)(7.368421, 0.005078)(8.421053, 0.004419)(9.473684, 0.004707)(10.526316, 0.006274)(11.578947, 0.011240)(12.631579, 0.043066)(13.684211, 0.226421)(14.736842, 0.305347)(15.789474, 0.309458)(16.842105, 0.317251)(17.894737, 0.332363)(18.947368, 0.350410)(20.000000, 0.376367)
    };
    \addlegendentry{{Sparse }};
    \addplot[densely dashed,color=GREEN,line width=1.5pt] plot coordinates{
    (0.000000, 0.010342)(1.052632, 0.010557)(2.105263, 0.010571)(3.157895, 0.010771)(4.210526, 0.010405)(5.263158, 0.010420)(6.315789, 0.010229)(7.368421, 0.010273)(8.421053, 0.010737)(9.473684, 0.010747)(10.526316, 0.010615)(11.578947, 0.010928)(12.631579, 0.010688)(13.684211, 0.010688)(14.736842, 0.010693)(15.789474, 0.010625)(16.842105, 0.010620)(17.894737, 0.010815)(18.947368, 0.010298)(20.000000, 0.010405)
    };
    \addlegendentry{{ Quant.\@  }};
    \addplot[densely dotted,black,line width=1pt] plot coordinates{
        (0, 0.011187)(20, 0.011187)
        }; 
    \end{axis}
    \end{tikzpicture}
    &
    \begin{tikzpicture}[font=\footnotesize]
    \pgfplotsset{every major grid/.append style={densely dashed}}          
    \tikzstyle{every axis y label}+=[yshift=-10pt] 
    \tikzstyle{every axis x label}+=[yshift=5pt]
    \pgfplotsset{every axis legend/.style={cells={anchor=west},fill=none,at={(0,1)}, anchor=north west, font=\footnotesize}}
    \begin{axis}[
    width=.31\linewidth,
    height=.25\linewidth,
    xmin=0,xmax=20,ymin=0,ymax=.55,
    grid=major,
    ymajorgrids=false,
    scaled ticks=true,
    xlabel= { Truncation threshold $s$ },
    ylabel={  },
    ]
    \addplot[smooth,color=BLUE,line width=1.5pt] plot coordinates{
    (0.000000, 0.107801)(1.052632, 0.113197)(2.105263, 0.112836)(3.157895, 0.112297)(4.210526, 0.113775)(5.263158, 0.119452)(6.315789, 0.123152)(7.368421, 0.150277)(8.421053, 0.209639)(9.473684, 0.262159)(10.526316, 0.306428)(11.578947, 0.374713)(12.631579, 0.460835)(13.684211, 0.519106)(14.736842, 0.518444)(15.789474, 0.497273)(16.842105, 0.486398)(17.894737, 0.477238)(18.947368, 0.470252)(20.000000, 0.477925)
    };
    \addplot[densely dashed,color=GREEN,line width=1.5pt] plot coordinates{
    (0.000000, 0.098903)(1.052632, 0.103305)(2.105263, 0.101887)(3.157895, 0.100371)(4.210526, 0.102026)(5.263158, 0.107268)(6.315789, 0.107129)(7.368421, 0.104012)(8.421053, 0.104303)(9.473684, 0.104975)(10.526316, 0.100711)(11.578947, 0.103706)(12.631579, 0.104916)(13.684211, 0.103602)(14.736842, 0.099432)(15.789474, 0.102050)(16.842105, 0.103839)(17.894737, 0.102431)(18.947368, 0.099886)(20.000000, 0.101304)
    };
    \addplot[densely dotted,black,line width=1pt] plot coordinates{
        (0, 0.107801)(20, 0.107801)
        }; 
    \end{axis}
    \end{tikzpicture}
    &
    \begin{tikzpicture}[font=\footnotesize]
    \pgfplotsset{every major grid/.append style={densely dashed}}          
    \tikzstyle{every axis y label}+=[yshift=-10pt] 
    \tikzstyle{every axis x label}+=[yshift=5pt]
    \pgfplotsset{every axis legend/.style={cells={anchor=west},fill=none,at={(1,1)}, anchor=north east, font=\tiny}}
    \begin{axis}[
    axis y line*=left,
    width=.32\linewidth,
    height=.25\linewidth,
    xmin=0,xmax=20,ymin=0,ymax=1.05,
    grid=major,
    ymajorgrids=false,
    scaled ticks=true,
    xlabel= { Truncation threshold $s$ },
    ylabel={ Prop.\@ of nonzero },
    ]
    \addplot+[only marks,mark=x,color=BLUE,line width=.5pt] plot coordinates{
    (0.000000, 0.999506)(1.052632, 0.785637)(2.105263, 0.587773)(3.157895, 0.405111)(4.210526, 0.279104)(5.263158, 0.171442)(6.315789, 0.102512)(7.368421, 0.063109)(8.421053, 0.040407)(9.473684, 0.024613)(10.526316, 0.015908)(11.578947, 0.009845)(12.631579, 0.006482)(13.684211, 0.004461)(14.736842, 0.003207)(15.789474, 0.002315)(16.842105, 0.001082)(17.894737, 0.000683)(18.947368, 0.000407)(20.000000, 0.000244)
    };
    \addplot+[only marks,mark=x,color=GREEN,line width=.5pt] plot coordinates{
    (0.000000, 0.999512)(1.052632, 0.999512)(2.105263, 0.999512)(3.157895, 0.999512)(4.210526, 0.999512)(5.263158, 0.999512)(6.315789, 0.999512)(7.368421, 0.999512)(8.421053, 0.999512)(9.473684, 0.999512)(10.526316, 0.999512)(11.578947, 0.999512)(12.631579, 0.999512)(13.684211, 0.999512)(14.736842, 0.999512)(15.789474, 0.999512)(16.842105, 0.999512)(17.894737, 0.999512)(18.947368, 0.999512)(20.000000, 0.999512)
    };
    \end{axis}
    \begin{axis}[
    axis y line*=right,
    width=.32\linewidth,
    height=.25\linewidth,
    xmin=0,xmax=20,ymin=0,ymax=.13,
    ytick={0, 0.1},
    grid=major,
    ymajorgrids=false,
    scaled ticks=false,
    xlabel= { },
    ylabel={ Compute.\@ time (s) },
    ]
    \addplot[smooth,color=BLUE,line width=1pt] plot coordinates{
    (0.000000, 0.113512)(1.052632, 0.088043)(2.105263, 0.065898)(3.157895, 0.045625)(4.210526, 0.031437)(5.263158, 0.024798)(6.315789, 0.016851)(7.368421, 0.010888)(8.421053, 0.007153)(9.473684, 0.005390)(10.526316, 0.004078)(11.578947, 0.003301)(12.631579, 0.002920)(13.684211, 0.002456)(14.736842, 0.002595)(15.789474, 0.002173)(16.842105, 0.002160)(17.894737, 0.002226)(18.947368, 0.002107)(20.000000, 0.002191)
    };
    \addplot[densely dashed,color=GREEN,line width=1pt] plot coordinates{
    (0.000000, 0.118150)(1.052632, 0.115286)(2.105263, 0.115539)(3.157895, 0.119128)(4.210526, 0.122280)(5.263158, 0.128334)(6.315789, 0.122918)(7.368421, 0.118413)(8.421053, 0.120207)(9.473684, 0.127022)(10.526316, 0.125443)(11.578947, 0.122043)(12.631579, 0.124046)(13.684211, 0.123955)(14.736842, 0.123583)(15.789474, 0.127063)(16.842105, 0.126136)(17.894737, 0.125840)(18.947368, 0.125426)(20.000000, 0.123957)
    };
    \end{axis}
    \end{tikzpicture}
  \end{tabular}
  \caption{ Clustering performance (\textbf{left}), proportion of nonzero entries, and computational time of the top eigenvector (\textbf{right}, in markers) of sparse $f_1$ and quantized $f_2$ with $M=2$, as a function of the truncation threshold $s$ on the MNIST dataset: digits $(0,1)$ (\textbf{left}) and $(5,6)$ (\textbf{middle} and \textbf{right}) with $n = 2\,048$ and performance of the linear function in \textbf{black}. Results averaged over $100$ runs. } 
  \label{fig:perf-MNIST-quantized}
\end{figure}

Also, from the left and middle displays of Figure~\ref{fig:perf-MNIST-binary} and Figure~\ref{fig:perf-MNIST-quantized}, we see that for MNIST data, while the classification error rate on the pair $(0,1)$ can be as low as $1\%$, the performances on the pair $(5,6)$ are far from satisfactory, with the linear $f(t) = t$ and the proposed $f_1$, $f_2$ or $f_3$. This is the limitation of the proposed statistical model in \eqref{eq:mixture}, which only takes into account the \emph{first order discriminative statistics}. Indeed, it has been shown in \citep{liao2019inner} that, taking $a_2 = 0$ (as in the case of the proposed $f_1, f_2$ and $f_3$) asymptotically discards the \emph{second order discriminative statistics} in the covariance structure, and may thus result in suboptimal performance. It would be of future interest to extend the current analysis to the ``generic'' Gaussian mixture classification: $\NN(\bmu_1, \C_1)$ versus $\NN(\bmu_2, \C_2)$ by considering the impact of (\textbf{i}) asymmetric means $\bmu_1$ and $\bmu_2$ and (\textbf{ii}) statistical information in the covariance structure $\C_1$ versus $\C_2$ and (\textbf{iii}) possibly a $K$-class mixture model with $K \ge 3$.
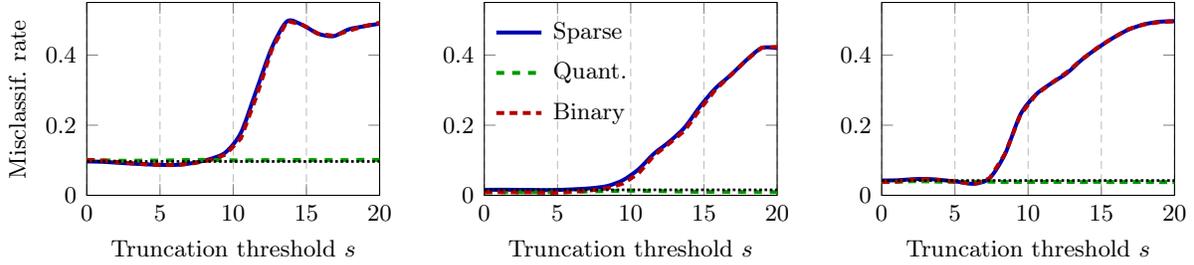
\begin{figure}[htb]
\centering
\begin{tabular}{ccc}
\centering
\begin{tikzpicture}[font=\footnotesize]
    \pgfplotsset{every major grid/.append style={densely dashed}}          
    \tikzstyle{every axis y label}+=[yshift=-10pt] 
    \tikzstyle{every axis x label}+=[yshift=5pt]
    \pgfplotsset{every axis legend/.style={cells={anchor=west},fill=none,at={(0,1)}, anchor=north west, font=\footnotesize}}
    \begin{axis}[
    width=.33\linewidth,
    height=.25\linewidth,
    xmin=0,xmax=20,ymin=0,ymax=.55,
    grid=major,
    ymajorgrids=false,
    scaled ticks=true,
    xlabel= { Truncation threshold $s$ },
    ylabel={ Misclassif.\@ rate },
    ]
    \addplot[smooth,color=BLUE,line width=1.5pt] plot coordinates{
    (0.000000, 0.096533)(1.052632, 0.095679)(2.105263, 0.092832)(3.157895, 0.090264)(4.210526, 0.088071)(5.263158, 0.086494)(6.315789, 0.087207)(7.368421, 0.093062)(8.421053, 0.103628)(9.473684, 0.122368)(10.526316, 0.178340)(11.578947, 0.298867)(12.631579, 0.420249)(13.684211, 0.496265)(14.736842, 0.485503)(15.789474, 0.461191)(16.842105, 0.453774)(17.894737, 0.473403)(18.947368, 0.482061)(20.000000, 0.489526)
    };
    \addplot[dashed,color=GREEN,line width=1.5pt] plot coordinates{
    (0.000000, 0.100840)(1.052632, 0.100024)(2.105263, 0.098647)(3.157895, 0.098584)(4.210526, 0.099214)(5.263158, 0.099106)(6.315789, 0.100293)(7.368421, 0.100942)(8.421053, 0.101035)(9.473684, 0.100117)(10.526316, 0.099424)(11.578947, 0.100308)(12.631579, 0.100381)(13.684211, 0.100547)(14.736842, 0.100679)(15.789474, 0.100308)(16.842105, 0.100503)(17.894737, 0.100288)(18.947368, 0.101475)(20.000000, 0.100752)
    };
    \addplot[densely dashed,color=RED,line width=1.5pt] plot coordinates{
    (0.000000, 0.100840)(1.052632, 0.099360)(2.105263, 0.095161)(3.157895, 0.091050)(4.210526, 0.088081)(5.263158, 0.086211)(6.315789, 0.086865)(7.368421, 0.092056)(8.421053, 0.100288)(9.473684, 0.116035)(10.526316, 0.162266)(11.578947, 0.280869)(12.631579, 0.415400)(13.684211, 0.497441)(14.736842, 0.486172)(15.789474, 0.461069)(16.842105, 0.455039)(17.894737, 0.469204)(18.947368, 0.482578)(20.000000, 0.492158)
    };
    \addplot[densely dotted,black,line width=1pt] plot coordinates{
        (0, 0.096533)(20, 0.096533)
        }; 
    \end{axis}
    \end{tikzpicture}
    &
    \begin{tikzpicture}[font=\footnotesize]
    \pgfplotsset{every major grid/.append style={densely dashed}}          
    \tikzstyle{every axis y label}+=[yshift=-10pt] 
    \tikzstyle{every axis x label}+=[yshift=5pt]
    \pgfplotsset{every axis legend/.style={cells={anchor=west},fill=none,at={(0,1)}, anchor=north west, font=\footnotesize}}
    \begin{axis}[
    width=.33\linewidth,
    height=.25\linewidth,
    xmin=0,xmax=20,ymin=0,ymax=.55,
    grid=major,
    ymajorgrids=false,
    scaled ticks=true,
    xlabel= { Truncation threshold $s$ },
    ylabel={  },
    ]
    \addplot[smooth,color=BLUE,line width=1.5pt] plot coordinates{
    (0.000000, 0.015615)(1.052632, 0.015991)(2.105263, 0.015732)(3.157895, 0.015356)(4.210526, 0.014888)(5.263158, 0.015718)(6.315789, 0.017676)(7.368421, 0.020488)(8.421053, 0.026738)(9.473684, 0.043301)(10.526316, 0.073955)(11.578947, 0.118496)(12.631579, 0.152661)(13.684211, 0.193604)(14.736842, 0.253506)(15.789474, 0.305278)(16.842105, 0.343496)(17.894737, 0.385425)(18.947368, 0.420352)(20.000000, 0.419536)
    };
    \addlegendentry{{Sparse  }};
    \addplot[dashed,color=GREEN,line width=1.5pt] plot coordinates{
    (0.000000, 0.008916)(1.052632, 0.009224)(2.105263, 0.008828)(3.157895, 0.009009)(4.210526, 0.008833)(5.263158, 0.009033)(6.315789, 0.010176)(7.368421, 0.011318)(8.421053, 0.012104)(9.473684, 0.013311)(10.526316, 0.013008)(11.578947, 0.011938)(12.631579, 0.011650)(13.684211, 0.010474)(14.736842, 0.009741)(15.789474, 0.009888)(16.842105, 0.009648)(17.894737, 0.009580)(18.947368, 0.009346)(20.000000, 0.009336)
    };
    \addlegendentry{{Quant.\@  }};
    \addplot[densely dashed,color=RED,line width=1.5pt] plot coordinates{
    (0.000000, 0.008916)(1.052632, 0.009180)(2.105263, 0.008721)(3.157895, 0.008691)(4.210526, 0.007290)(5.263158, 0.007886)(6.315789, 0.010825)(7.368421, 0.014629)(8.421053, 0.020127)(9.473684, 0.033618)(10.526316, 0.062139)(11.578947, 0.112959)(12.631579, 0.147856)(13.684211, 0.184517)(14.736842, 0.245347)(15.789474, 0.301270)(16.842105, 0.342944)(17.894737, 0.385522)(18.947368, 0.421030)(20.000000, 0.423472)
    };
    \addlegendentry{{Binary  }};
    \addplot[densely dotted,black,line width=1pt] plot coordinates{
        (0, 0.015615)(20, 0.015615)
        }; 
    \end{axis}
    \end{tikzpicture}
    &
    \begin{tikzpicture}[font=\footnotesize]
    \pgfplotsset{every major grid/.append style={densely dashed}}          
    \tikzstyle{every axis y label}+=[yshift=-10pt] 
    \tikzstyle{every axis x label}+=[yshift=5pt]
    \pgfplotsset{every axis legend/.style={cells={anchor=west},fill=none,at={(0,1)}, anchor=north west, font=\footnotesize}}
    \begin{axis}[
    width=.33\linewidth,
    height=.25\linewidth,
    xmin=0,xmax=20,ymin=0,ymax=.55,
    grid=major,
    ymajorgrids=false,
    scaled ticks=true,
    xlabel= { Truncation threshold $s$ },
    ylabel={  },
    ]
    \addplot[smooth,color=BLUE,line width=1.5pt] plot coordinates{
    (0.000000, 0.042300)(1.052632, 0.042861)(2.105263, 0.045996)(3.157895, 0.046602)(4.210526, 0.043120)(5.263158, 0.038335)(6.315789, 0.032998)(7.368421, 0.048667)(8.421053, 0.117651)(9.473684, 0.230322)(10.526316, 0.287026)(11.578947, 0.319370)(12.631579, 0.347983)(13.684211, 0.387339)(14.736842, 0.419824)(15.789474, 0.448657)(16.842105, 0.472271)(17.894737, 0.487275)(18.947368, 0.495039)(20.000000, 0.496729)
    };
    \addplot[densely dashed,color=GREEN,line width=1.5pt] plot coordinates{
    (0.000000, 0.038213)(1.052632, 0.038486)(2.105263, 0.040317)(3.157895, 0.039819)(4.210526, 0.038657)(5.263158, 0.038486)(6.315789, 0.038311)(7.368421, 0.038989)(8.421053, 0.037876)(9.473684, 0.038140)(10.526316, 0.037983)(11.578947, 0.038340)(12.631579, 0.038262)(13.684211, 0.037827)(14.736842, 0.038486)(15.789474, 0.037842)(16.842105, 0.038027)(17.894737, 0.037944)(18.947368, 0.037827)(20.000000, 0.037710)
    };
    \addplot[densely dashed,color=RED,line width=1.5pt] plot coordinates{
    (0.000000, 0.038213)(1.052632, 0.039458)(2.105263, 0.044761)(3.157895, 0.046401)(4.210526, 0.042930)(5.263158, 0.037793)(6.315789, 0.032417)(7.368421, 0.048232)(8.421053, 0.118281)(9.473684, 0.229619)(10.526316, 0.286787)(11.578947, 0.319556)(12.631579, 0.350659)(13.684211, 0.388232)(14.736842, 0.421157)(15.789474, 0.448877)(16.842105, 0.473755)(17.894737, 0.487231)(18.947368, 0.493774)(20.000000, 0.496382)
    };
    \addplot[densely dotted,black,line width=1pt] plot coordinates{
        (0, 0.042300)(20, 0.042300)
        }; 
    \end{axis}
    \end{tikzpicture}
  \end{tabular}
  \caption{ Clustering performance of sparse $f_1$, quantized $f_2$ (with $M =2$) and binary $f_3$ as a function of the truncation threshold $s$ on: (\textbf{left}) Kuzushiji-MNIST class $3$ versus $4$, (\textbf{middle}) Fashion-MNIST class $0$ versus $9$, and (\textbf{right}) Kannada-MNIST class $4$ versus $8$, for $n = 2\,048$ and performance of the linear function in \textbf{black}. Results averaged over $100$ runs. } 
  \label{fig:perf-more-MNIST}
\end{figure}

In Figure~\ref{fig:perf-more-MNIST}, we compare the clustering performances of the proposed $f_1, f_2$, and $f_3$ on other MNIST-like datasets including the Fashion-MNIST \citep{xiao2017fashion}, Kuzushiji-MNIST \citep{clanuwat2018deep}, and Kannada-MNIST \citep{prabhu2019kannada} datasets.
Then, in Figure~\ref{fig:perf-ImageNet}, we compare the performances on the representations of the ImageNet dataset \citep{deng2009imagenet} from the popular GoogLeNet \citep{szegedy2015going} of feature dimension $p = 2\,048$. On various real-world data or features, we made similar observations as in the case of MNIST data as in Figure~\ref{fig:perf-MNIST-binary} and Figure~\ref{fig:perf-MNIST-quantized}: the performances of sparse $f_1$ and binary $f_3$ are very similar and generally degrade as the threshold $s$ becomes large, while the quantized $f_2$ yields consistently good performance that is extremely close to that of the linear function. This is in line with the (theoretically sustained) observation in \citep{seddik2020random} that the ``deep'' representations of real-world datasets behave, in the large $n,p$ regime, very similar to simple Gaussian mixtures, thereby conveying a strong practical motivation for the present analysis. 
\begin{figure}[htb]
\centering
\begin{tabular}{cc}
\centering
\begin{tikzpicture}[font=\footnotesize]
    \pgfplotsset{every major grid/.append style={densely dashed}}          
    \tikzstyle{every axis y label}+=[yshift=-10pt] 
    \tikzstyle{every axis x label}+=[yshift=5pt]
    \pgfplotsset{every axis legend/.style={cells={anchor=west},fill=none,at={(0,1)}, anchor=north west, font=\footnotesize}}
    \begin{axis}[
    width=.48\linewidth,
    height=.3\linewidth,
    xmin=0,xmax=50,ymin=0,ymax=.55,
    grid=major,
    ymajorgrids=false,
    scaled ticks=true,
    xlabel= { Truncation threshold $s$ },
    ylabel={ Misclassif.\@ rate },
    ]
    \addplot[smooth,color=BLUE,line width=1.5pt] plot coordinates{
    (0.000000, 0.007520)(2.631579, 0.006348)(5.263158, 0.005566)(7.894737, 0.006445)(10.526316, 0.008984)(13.157895, 0.009277)(15.789474, 0.031738)(18.421053, 0.102637)(21.052632, 0.206934)(23.684211, 0.334766)(26.315789, 0.416602)(28.947368, 0.438574)(31.578947, 0.459961)(34.210526, 0.484277)(36.842105, 0.497852)(39.473684, 0.492090)(42.105263, 0.495996)(44.736842, 0.491699)(47.368421, 0.499902)(50.000000, 0.493555)
    };
    \addlegendentry{{Sparse }};
    \addplot[dashed,color=GREEN,line width=1.5pt] plot coordinates{
    (0.000000, 0.007227)(2.631579, 0.005664)(5.263158, 0.006055)(7.894737, 0.007422)(10.526316, 0.008008)(13.157895, 0.006836)(15.789474, 0.007812)(18.421053, 0.006348)(21.052632, 0.006348)(23.684211, 0.006250)(26.315789, 0.005664)(28.947368, 0.006738)(31.578947, 0.006152)(34.210526, 0.006934)(36.842105, 0.006250)(39.473684, 0.006348)(42.105263, 0.006250)(44.736842, 0.006641)(47.368421, 0.006836)(50.000000, 0.005762)
    };
    \addlegendentry{{Quant.\@  }};
    \addplot[densely dashed,color=RED,line width=1.5pt] plot coordinates{
    (0.000000, 0.007227)(2.631579, 0.005371)(5.263158, 0.005273)(7.894737, 0.006348)(10.526316, 0.008691)(13.157895, 0.009961)(15.789474, 0.034961)(18.421053, 0.104492)(21.052632, 0.201465)(23.684211, 0.349512)(26.315789, 0.425781)(28.947368, 0.439551)(31.578947, 0.471484)(34.210526, 0.477930)(36.842105, 0.486328)(39.473684, 0.492383)(42.105263, 0.499219)(44.736842, 0.501758)(47.368421, 0.501855)(50.000000, 0.497070)
    };
    \addlegendentry{{Binary  }};
    \addplot[densely dotted,black,line width=1pt] plot coordinates{
        (0, 0.007520)(50, 0.007520)
        }; 
    \end{axis}
    \end{tikzpicture}
    &
    \begin{tikzpicture}[font=\footnotesize]
    \pgfplotsset{every major grid/.append style={densely dashed}}          
    \tikzstyle{every axis y label}+=[yshift=-10pt] 
    \tikzstyle{every axis x label}+=[yshift=5pt]
    \pgfplotsset{every axis legend/.style={cells={anchor=west},fill=none,at={(0,1)}, anchor=north west, font=\footnotesize}}
    \begin{axis}[
    width=.48\linewidth,
    height=.3\linewidth,
    xmin=0,xmax=50,ymin=0,ymax=.55,
    grid=major,
    ymajorgrids=false,
    scaled ticks=true,
    xlabel= { Truncation threshold $s$ },
    ylabel={  },
    ]
    \addplot[smooth,color=BLUE,line width=1.5pt] plot coordinates{
    (0.000000, 0.017285)(2.631579, 0.015137)(5.263158, 0.019824)(7.894737, 0.022266)(10.526316, 0.024121)(13.157895, 0.043945)(15.789474, 0.094824)(18.421053, 0.181348)(21.052632, 0.282617)(23.684211, 0.375098)(26.315789, 0.436914)(28.947368, 0.453516)(31.578947, 0.456445)(34.210526, 0.477148)(36.842105, 0.476172)(39.473684, 0.494434)(42.105263, 0.495703)(44.736842, 0.495020)(47.368421, 0.496875)(50.000000, 0.497754)
    };
    \addlegendentry{{Sparse  }};
    \addplot[dashed,color=GREEN,line width=1.5pt] plot coordinates{
    (0.000000, 0.009277)(2.631579, 0.007715)(5.263158, 0.010840)(7.894737, 0.011719)(10.526316, 0.011621)(13.157895, 0.010645)(15.789474, 0.010742)(18.421053, 0.010742)(21.052632, 0.008105)(23.684211, 0.009082)(26.315789, 0.007422)(28.947368, 0.009180)(31.578947, 0.009766)(34.210526, 0.009863)(36.842105, 0.009180)(39.473684, 0.008887)(42.105263, 0.008398)(44.736842, 0.008887)(47.368421, 0.008496)(50.000000, 0.007910)
    };
    \addlegendentry{{Quant.\@  }};
    \addplot[densely dashed,color=RED,line width=1.5pt] plot coordinates{
    (0.000000, 0.009277)(2.631579, 0.008789)(5.263158, 0.014160)(7.894737, 0.019434)(10.526316, 0.020117)(13.157895, 0.038770)(15.789474, 0.091016)(18.421053, 0.181055)(21.052632, 0.290039)(23.684211, 0.381543)(26.315789, 0.434082)(28.947368, 0.447656)(31.578947, 0.455859)(34.210526, 0.475391)(36.842105, 0.474707)(39.473684, 0.491699)(42.105263, 0.495703)(44.736842, 0.495410)(47.368421, 0.500781)(50.000000, 0.495312)
    };
    \addlegendentry{{Binary  }};
    \addplot[densely dotted,black,line width=1pt] plot coordinates{
        (0, 0.015615)(50, 0.015615)
        }; 
    \end{axis}
    \end{tikzpicture}
  \end{tabular}
  \caption{ Clustering performance of sparse $f_1$, quantized $f_2$ (with $M=2$) and binary $f_3$ as a function of the truncation threshold $s$ on \emph{GoogLeNet} features of the ImageNet datasets: (\textbf{left}) class ``pizza'' versus ``daisy'' and (\textbf{right}) class ``hamburger'' versus ``coffee'', for $n = 1\,024$ and performance of the linear function in \textbf{black}. Results averaged over $10$ runs. } 
  \label{fig:perf-ImageNet}
\end{figure}
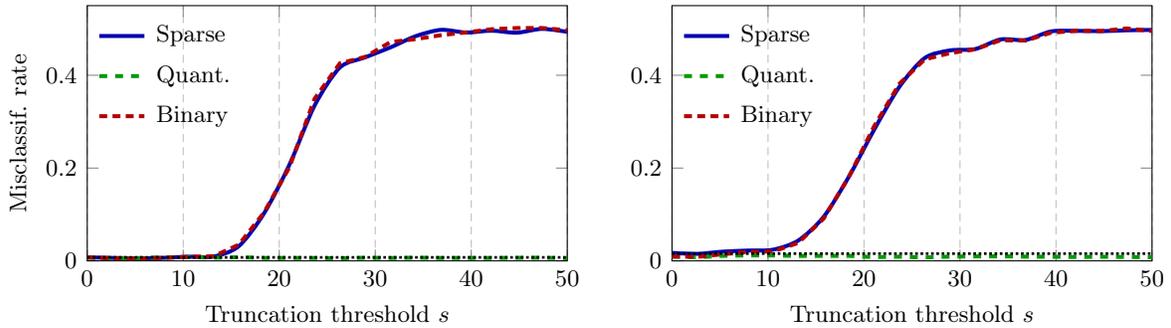

\end{document}